\newtheorem{theorem}{Theorem}
\newtheorem{lemma}[theorem]{Lemma}
\newtheorem{corollary}[theorem]{Corollary}
\newtheorem{definition}[theorem]{Definition}
\newtheorem*{remark}{Remark}
\newtheorem*{discussion}{Discussion}
\newcommand{\norm}[1]{\left\Vert#1\right\Vert}
\newcommand{\abs}[1]{\left\vert#1\right\vert}
\newcommand{\set}[1]{\left\{#1\right\}}
\newcommand{\tuple}[1]{\left(#1\right)} \newcommand{\eps}{\varepsilon}
\newcommand{\inner}[2]{\langle #1,#2\rangle} \newcommand{\tp}{\tuple}
\renewcommand{\mid}{\;\middle\vert\;} \newcommand{\cmid}{\,:\,}
 \renewcommand{\P}{\mathbf{P}}
\newcommand{\defeq}{\triangleq} 
\newcommand{\ol}{\overline}
\DeclareMathOperator{\dom}{dom}
\DeclareMathOperator{\interior}{int}
\DeclareMathOperator*{\argmin}{arg\,min}
\renewcommand{\P}[2][]{ \ifthenelse{\isempty{#1}}
	{\mathbf{P}_{#2}}{\mathbf{P}_{#1}(#2)} }
\newcommand{\ind}[1]{\mathbf{1}\left[#1\right]}
\def\*#1{\mathbf{#1}} \def\+#1{\mathcal{#1}} \def\-#1{\mathrm{#1}} \def\^#1{\mathbb{#1}}
\renewcommand{\Pr}[2][]{ \ifthenelse{\isempty{#1}}
  {\mathbf{Pr}\left[#2\right]} {\mathbf{Pr}_{#1}\left[#2\right]} }
\newcommand{\E}[2][]{ \ifthenelse{\isempty{#1}}
  {\mathbf{E}\left[#2\right]}
  {\operatorname{\mathbf{E}}_{#1}\left[#2\right]} }
\newcommand{\Var}[2][]{ \ifthenelse{\isempty{#1}}
  {\mathbf{Var}\left[#2\right]}
  {\mathop{\mathbf{Var}}_{#1}\left[#2\right]} }
\newcommand{\nprob}[4][8]{
  \begin{center}\normalfont\fbox{
      \begin{varwidth}{\textwidth}
        \textsc{#2}\\
        \hspace*{1.5em}
        \begin{tabular}[t]{rp{#1cm}}
          \textit{Input:}&#3\\
          \textit{Problem:}&#4
        \end{tabular}
      \end{varwidth}
    }
  \end{center}
}
\newcommand{\Ber}[1]{\mathrm{Ber}\tp{#1}}
\newcommand{\BAI}{\textsc{BestArmId}\xspace}
\title{Understanding Bandits with Graph Feedback}
\author{Houshuang Chen}
\address[Houshuang Chen]{Shanghai Jiao Tong University, No.800 Dongchuan Road, Minhang District, Shanghai, China.}
\email{chenhoushuang@sjtu.edu.cn}
\author{Zengfeng Huang}
\address[Zengfeng Huang]{Fudan University, No.220 Handan Road, Yangpu District, Shanghai, China.}
\email{huangzf@fudan.edu.cn}
\author{Shuai Li}
\address[Shuai Li]{Shanghai Jiao Tong University, No.800 Dongchuan Road, Minhang District, Shanghai, China.}
\email{shuaili8@sjtu.edu.cn}
\author{Chihao Zhang}
\address[Chihao Zhang]{Shanghai Jiao Tong University, No.800 Dongchuan Road, Minhang District, Shanghai, China.}
\email{chihao@sjtu.edu.cn}
\begin{document}
\begin{abstract}
	The bandit problem with graph feedback, proposed in~[Mannor and Shamir, NeurIPS 2011], is modeled by a directed graph $G=(V,E)$ where $V$ is the collection of bandit arms, and once an arm is triggered, all its incident arms are observed. A fundamental question is how the structure of the graph affects the min-max regret. We propose the notions of the fractional weak domination number $\delta^*$ and the $k$-packing independence number capturing upper bound and lower bound for the regret respectively.  We show that the two notions are inherently connected via aligning them with the linear program of the weakly dominating set and its dual --- the fractional vertex packing set respectively. Based on this connection, we utilize the strong duality theorem to prove a general regret upper bound $O\tp{\tp{\delta^*\log\abs{V}}^{\frac{1}{3}}T^{\frac{2}{3}}}$ and a lower bound $\Omega\tp{\tp{\delta^*/\alpha}^{\frac{1}{3}}T^{\frac{2}{3}}}$ where $\alpha$ is the integrality gap of the dual linear program. Therefore, our bounds are tight up to a $\tp{\log\abs{V}}^{\frac{1}{3}}$ factor on graphs with bounded integrality gap for the vertex packing problem including trees and graphs with bounded degree. Moreover, we show that for several special families of graphs, we can get rid of the $\tp{\log\abs{V}}^{\frac{1}{3}}$ factor and establish optimal regret.
\end{abstract}
\maketitle
\section{Introduction}

The multi-armed bandit is an extensively studied problem in reinforcement learning. Imagining a player facing an $n$-armed bandit, each time the player pulls one of the $n$ arms and incurs a loss. At the end of each round, the player receives some feedback and tries to make a better choice in the next round. The expected regret is defined by the difference between the player's cumulative losses and cumulative losses of the single best arm during $T$ rounds. In this article, we assume the loss at each round is given in an adversarial fashion. This is called the \emph{adversarial bandit} in the literature. The difficulty of the adversarial bandit problem is usually measured by the min-max regret which is the expected regret of the best strategy against the worst possible loss sequence.

	Player's strategy depends on how the feedback is given at each round. One simple type of feedback is called \emph{full feedback} where the player can observe all arm's losses after playing an arm. An important problem studied in this model is \emph{online learning with experts}~\cite{cesa2006prediction,eban2012learning}. Another extreme is the vanilla \emph{bandit feedback} where the player can only observe the loss of the arm he/she just pulled~\cite{auer2002finite}. Optimal bounds for the regret, either in $n$ or in $T$, are known for both types of feedback.
	
	The work of~\cite{mannor2011bandits} initialized the study on the generalization of the above two extremes, that is, the feedback at each round consists of the losses of a collection of arms. This type of feedback can be naturally described by a \emph{feedback graph} $G$ where the vertex set is $[n]$ and a directed edge $(i,j)$ means pulling the arm $i$ can observe the loss of arm $j$. Therefore, the ``full feedback'' means that $G$ is a clique with self-loops, and the ``vanilla bandit feedback'' means that $G$ consists of $n$ disjoint self-loops. 
	
	A natural yet challenging question is how the graph structure affects the min-max regret. The work of~\cite{ACDK15} systematically investigated the question and proved tight regret bounds in terms of the time horizon $T$. They show that, if the graph is ``strongly observable'', the regret is $\Theta(T^{\frac{1}{2}})$; if the graph is ``weakly observable'', the regret is $\Theta(T^{\frac{2}{3}})$; and if the graph is ``non-observable'', the regret is $\Theta(T)$. Here the notions of ``strongly observable'', ``weakly observable'' and ``non-observable'' roughly indicate the connectivity of the feedback graph and will be formally defined in our \Cref{sec:pre}. However, unlike the case of ``full feedback'' or ``vanilla bandit feedback'', the dependency of the regret on $n$, or more generally on the structure of the graph, is still not well understood. For example, for ``weakly observable'' graphs, an upper bound and a lower bound of the regret in terms of the weak domination number $\delta(G)$ were proved in~\cite{ACDK15}, but a large gap exists between the two. This suggests that the weak domination number might not be the correct parameter to characterize the regret.
	
	We make progress on this problem for ``weakly observable'' graphs. This family of graphs is general enough to encode almost all feedback patterns of bandits. Consequently, tools in graph theory are useful to understand its rich structures. We introduce the notions of the fractional weak domination number $\delta^*(G)$ and the $k$-packing independence number, and then provide evidence to show that they are the correct graph parameters. The two parameters are closely related and help us to improve the upper bound and lower bound respectively. As the name indicated, $\delta^*(G)$ is the fractional version of $\delta(G)$, namely the optimum of the linear relaxation of the integer program for the weakly dominating set. We observe that this graph parameter has already been used in an algorithm for ``strongly observable'' graphs in~\cite{alon2017nonstochastic}, where it functioned differently. In the following, when the algorithm is clear from the context, we use $R(G,T)$ to denote the regret of the algorithm on the instance $G$ in $T$ rounds. Our main algorithmic result is:
	
\begin{theorem}\label{thm:regret}
	There exists an algorithm such that for any weakly observable graph, any time horizon $T\geq n^3\log(n)/{\delta^*}^2(G)$, its regret satisfies
	\[
	 R(G,T)=O\tp{\tp{\delta^*(G)\log n}^{\frac{1}{3}}T^{\frac{2}{3}}}\,.
	\]
\end{theorem}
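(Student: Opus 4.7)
The plan is to adapt the standard exponential-weights-with-explicit-exploration template, but with the exploration distribution built from a fractional (rather than integer) weakly dominating set. Concretely, let $x^*\in[0,1]^V$ be an optimal solution of the LP relaxation of weak domination, so that $\sum_{j\in N^{\mathrm{in}}(i)} x^*_j \geq 1$ for every $i\in V$ and $\sum_v x^*_v = \delta^*(G)$. I would define the probability distribution $\mu(v) := x^*_v/\delta^*(G)$ and run Exp3 with this exploration: at round $t$ play $I_t \sim p_t := (1-\gamma)\tilde p_t + \gamma\mu$, where $\tilde p_t(i)\propto \exp(-\eta\sum_{s<t}\hat\ell_s(i))$, and form the importance-weighted estimator $\hat\ell_t(i) := \ell_t(i)\,\mathbf{1}[i\in N^{\mathrm{out}}(I_t)]/q_t(i)$ with $q_t(i) = \sum_{j\in N^{\mathrm{in}}(i)} p_t(j)$.

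The key observation that makes $\mu$ the right exploration distribution is the uniform observability bound $q_t(i) \geq (\gamma/\delta^*(G))\sum_{j\in N^{\mathrm{in}}(i)} x^*_j \geq \gamma/\delta^*(G)$, which holds for every $i\in V$ directly from the LP constraint satisfied by $x^*$. This bound makes $\hat\ell_t$ unbiased with second-moment satisfying $\sum_i \tilde p_t(i)/q_t(i) \leq \delta^*(G)/\gamma$. Plugging this into the one-step regret inequality for exponential weights (using $e^{-x}\leq 1-x+x^2/2$ for $x\geq 0$), telescoping over $t\in[T]$, and accounting for the exploration cost via $\mu\cdot\ell_t\leq 1$ should yield, for every comparator $i^*$,
\[
\E{\sum_{t=1}^T \ell_t(I_t)} - \sum_{t=1}^T \ell_t(i^*) \;\leq\; \gamma T \;+\; \frac{\log n}{\eta} \;+\; \eta T\cdot\frac{\delta^*(G)}{\gamma}\,.
\]
Sequentially optimizing $\eta = \sqrt{\gamma\log(n)/(T\delta^*(G))}$ and then $\gamma = (\delta^*(G)\log(n)/T)^{1/3}$ balances all three contributions and delivers the claimed $O\bigl((\delta^*(G)\log n)^{1/3} T^{2/3}\bigr)$ regret. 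The hypothesis $T\geq n^3\log(n)/\delta^*(G)^2$ enters only to guarantee that these optimal choices lie in admissible ranges ($\gamma\in(0,1]$, $\eta$ small enough for the Exp3 potential inequality to apply cleanly) and to absorb lower-order additive terms coming from weight initialization and crude bounds on $\ell_t$ into the leading $T^{2/3}$ term.

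The main technical point I expect to require care is the observability step. One must work with open in-neighborhoods rather than closed ones, since weak observability does not postulate self-loops at every vertex, and must invoke \emph{exactly} the fractional constraint $\sum_{j\in N^{\mathrm{in}}(i)} x^*_j\geq 1$ rather than any closed-neighborhood variant that would only yield $\delta(G)$. Once this is handled, the remainder is a routine adaptation of the Exp3-with-explicit-exploration analysis of \cite{ACDK15}, the single change being the substitution of a fractional LP optimum for an integer weakly dominating set. This substitution is precisely what lets $\delta^*(G)$ replace $\delta(G)$ in the final bound, in line with the LP-duality perspective set out in the introduction.
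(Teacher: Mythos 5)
Your overall route is the same as the paper's: its algorithm is OSMD with the negentropy potential (Exp3 in disguise) mixed with the exploration distribution obtained by normalizing an optimal solution of the fractional weak-domination LP, and your final parameter choices coincide with the paper's ($\eta=\gamma^2/\delta^*$ after substitution). However, there is a genuine gap in your observability step. You assert that $q_t(i)=\sum_{j\in N_{\mathrm{in}}(i)}p_t(j)\geq\gamma/\delta^*(G)$ holds ``for every $i\in V$ directly from the LP constraint.'' It does not: the constraint $\sum_{j\in N_{\mathrm{in}}(i)}x^*_j\geq 1$ in the program $\mathscr{P}$ is imposed only for $i\in U$, the vertices \emph{without} self-loops. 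A strongly observable vertex $i\notin U$ whose only in-edge is its self-loop may receive arbitrarily small (even zero) mass under $x^*$, so your uniform lower bound on $q_t(i)$ fails there, and with it the claimed second-moment bound $\sum_i\tilde p_t(i)/q_t(i)\leq\delta^*(G)/\gamma$.

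The repair is the case split the paper performs: for $i\in U$ the LP constraint gives $\tilde p_t(i)/q_t(i)\leq\delta^*(G)\,\tilde p_t(i)/\gamma$, summing to $\delta^*(G)/\gamma$; for $i\notin U$ the self-loop gives $q_t(i)\geq(1-\gamma)\tilde p_t(i)$, hence $\tilde p_t(i)/q_t(i)\leq 1/(1-\gamma)\leq 2$, summing to $2n$. This adds an $\eta nT$ term to your regret bound, and it is precisely this term --- not the admissibility of $\gamma$ and $\eta$, which only needs $T\gtrsim\delta^*\log n$ --- that forces the hypothesis $T\geq n^3\log n/\delta^*(G)^2$: with the stated parameters, $\eta nT=n(\delta^*)^{-1/3}(\log n)^{2/3}T^{1/3}\leq(\delta^*\log n)^{1/3}T^{2/3}$ holds exactly when $T\geq n^3\log n/(\delta^*)^2$. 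So your proposal is structurally correct but rests on a false uniform observability bound and consequently misidentifies where the key hypothesis enters; both issues are fixed by the $U$ versus $V\setminus U$ split.
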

	
Note that the regret of the algorithm in~\cite{ACDK15} satisfies $R(G,T)=O\tp{\tp{\delta(G)\log n}^{\frac{1}{3}}T^{\frac{2}{3}}}$. The fractional weak domination number $\delta^*$ is always no larger than $\delta$, and the gaps between the two can be as large as $\Theta(\log n)$. We will give an explicit example in \Cref{sec:instance-lb} in which the gap matters and our algorithm is optimal. \Cref{thm:regret} can be seamlessly extended  to more general time-varying graphs and probabilistic graphs.  The formal definitions of these models are in \Cref{sec:generalization}.

On the other hand, we investigate graph structures that can be used to fool algorithms. We say a set $S$  of vertices is a $k$-packing independent set if $S$ is an independent set and any vertex has at most $k$ out-neighbors in $S$. We prove the following lower bound:

\begin{theorem}\label{thm:meta-lb}
	Let $G=(V,E)$ be a directed graph. If $G$ contains a $k$-packing independent set $S$ with $\abs{S}\ge 2$, then for any randomized algorithm and any time horizon $T$, there exists a sequence of loss functions such that the expected regret is $\Omega\tp{\max\set{\frac{|S|}{k},\log |S|}^{\frac{1}{3}}\cdot T^{\frac{2}{3}}}$. 
\end{theorem}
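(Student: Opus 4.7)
The plan is to exhibit a stochastic family of hard instances, reduce via Yao's minimax principle to a deterministic algorithm facing a randomized adversary, and derive the two terms in $\max\{|S|/k,\log|S|\}^{1/3}$ via Pinsker's and Fano's inequalities respectively. Let $m := |S|$, let $\Delta \in (0, 1/4)$ be a parameter to be optimized at the end, and for each $i^* \in S$ define the loss distribution $\mathcal{D}_{i^*}$ under which arm $i^*$ incurs loss $\mathrm{Ber}(1/2 - \Delta)$ at every round, every other arm in $S$ incurs $\mathrm{Ber}(1/2)$, and every arm in $V \setminus S$ incurs deterministic loss $1$. Let $T_u$ be the number of plays of arm $u$, $O_u$ the number of observations of arm $u$, and $N = \sum_{v \notin S} T_v$; the expected regret under $\mathcal{D}_{i^*}$ decomposes as
\[
\mathbb{E}_{i^*}[R] \;=\; \Delta T - \Delta\,\mathbb{E}_{i^*}[T_{i^*}] + \tfrac{1}{2}\,\mathbb{E}_{i^*}[N].
\]

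The key structural fact is that, since $S$ is independent (forcing no self-loops at any $u \in S$) and every vertex has at most $k$ out-neighbors in $S$, pulling any arm of $S$ reveals no arm of $S$, while pulling any arm of $V \setminus S$ reveals at most $k$; hence the pathwise budget $\sum_{u\in S}O_u \leq kN$. With $i^*$ drawn uniformly from $S$, I obtain the $(|S|/k)^{1/3}$ term by Pinsker: the chain rule gives $\mathrm{KL}(\mathbb{P}_0 \,\|\, \mathbb{P}_{i^*}) \leq C\Delta^2\,\mathbb{E}_0[O_{i^*}]$ for an absolute constant $C$, so $\mathbb{E}_{i^*}[T_{i^*}] \leq \mathbb{E}_0[T_{i^*}] + T\Delta\sqrt{C\mathbb{E}_0[O_{i^*}]/2}$. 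Averaging over $i^*$ and applying Jensen to $\sqrt{\cdot}$ together with the budget gives $\tfrac{1}{m}\sum_{i^*}\mathbb{E}_{i^*}[T_{i^*}] \leq T/m + T\Delta\sqrt{Ck\mathbb{E}_0[N]/(2m)}$. A dichotomy (either this average is $\leq T/2$, giving averaged regret $\Omega(\Delta T)$; or $\mathbb{E}_0[N]\geq\Omega(m/(k\Delta^2))$, giving averaged regret $\Omega(\mathbb{E}_0[N])$) combined with the optimal $\Delta = \Theta((|S|/(kT))^{1/3})$ yields $\Omega((|S|/k)^{1/3} T^{2/3})$.

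The $(\log|S|)^{1/3}$ term is obtained by Fano's inequality applied to the estimator $\hat\imath = \argmax_{u \in S}T_u$: $\mathbb{P}[\hat\imath \neq i^*] \geq 1 - (I(i^*;\,\mathrm{obs}) + \log 2)/\log m$, with $I(i^*;\,\mathrm{obs}) \leq \tfrac{1}{m}\sum_{i^*}\mathrm{KL}(\mathbb{P}_{i^*} \,\|\, \mathbb{P}_0) \leq C\Delta^2\cdot \tfrac{1}{m}\sum_{i^*}\mathbb{E}_{i^*}[O_{i^*}]$. After symmetrizing the algorithm by letting the adversary apply a uniformly random relabeling of $S$ before play, one obtains $\tfrac{1}{m}\sum_{i^*}\mathbb{E}_{i^*}[O_{i^*}] \leq k\bar{N}/m$ where $\bar{N}$ denotes the averaged $\mathbb{E}_{i^*}[N]$. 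The same dichotomy then yields averaged regret $\Omega(\min(\Delta T, m\log m/(k\Delta^2)))$, and optimizing $\Delta = \Theta((m\log m/(kT))^{1/3})$ gives $\Omega((m\log m/k)^{1/3} T^{2/3})$, which dominates $\Omega((\log|S|)^{1/3} T^{2/3})$ since $k \leq m$. Taking the maximum of the two bounds completes the proof.

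The main technical hurdle is the symmetrization step in the Fano argument: without it, a naive change-of-measure only gives $\tfrac{1}{m}\sum_{i^*}\mathbb{E}_{i^*}[O_{i^*}] \leq k\bar{N}$, losing a factor of $m$ and weakening the final estimate to $(\log|S|/k)^{1/3}$. The random relabeling of arms in $S$ forces any deterministic algorithm to treat arms in $S$ symmetrically in expectation, restoring the missing factor; care is needed in verifying that this reshuffling does not affect the lower bound's statement, since the graph itself is fixed and only the loss assignment is permuted.
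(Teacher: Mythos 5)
Your construction of the hard instances and your regret decomposition $\E{R}=\Delta T-\Delta\,\E{T_{i^*}}+\tfrac12\E{N}$ coincide with the paper's, and your Pinsker-based derivation of the $(|S|/k)^{1/3}$ term is essentially sound: because you measure $\E{O_{i^*}}$ under the single null measure $\P{0}$, the budget $\sum_{u\in S}O_u\le kN$ really does give the $1/m$ saving after averaging over $i^*$ (there is a small loose end in the dichotomy — the second branch lower-bounds $\E[0]{N}$ while the regret involves $\E[i^*]{N}$ — but this is a standard, fixable technicality). This part is a genuinely different and more self-contained route than the paper's, which instead reduces to best-arm identification and invokes the Mannor--Tsitsiklis sample-complexity lower bound as a black box.

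The $(\log|S|)^{1/3}$ term, however, has a real gap, and the gap is exactly at the step you flag as the "main technical hurdle." The inequality $\frac1m\sum_{i^*}\E[i^*]{O_{i^*}}\le k\bar N/m$ is false for adaptive algorithms, and the symmetrization does not rescue it. Randomly relabeling $S$ before play is vacuous here: composed with a uniformly random $i^*$ it yields the same uniform mixture of instances you already have, and the algorithm interacts with \emph{physical} vertices whose feedback sets $N_{\mathrm{out}}(v)\cap S$ are fixed by the graph, so nothing is hidden from it. More fundamentally, under $\P{i^*}$ the data itself reveals $i^*$, so an adaptive algorithm can steer its later pulls toward vertices that observe $i^*$ and achieve $\E[i^*]{O_{i^*}}=\Theta(k\,\E[i^*]{N})$ rather than a $1/m$ share; exchangeability of the prior cannot prevent data-driven concentration. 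Indeed your claimed conclusion $\Omega\tp{(m\log m/k)^{1/3}T^{2/3}}$ is refutable: on the perfect-matching graph ($k=1$), running median elimination with accuracy $\Delta/2$ and confidence $1/\log m$ and then committing uses $O\tp{(m\log\log m)/\Delta^2}$ exploratory pulls, so with your $\Delta=\Theta((m\log m/T)^{1/3})$ its regret on every $\+D_{i^*}$ is $O\tp{m^{1/3}(\log m)^{-2/3}\log\log m\cdot T^{2/3}}=o\tp{(m\log m)^{1/3}T^{2/3}}$. As you note, without the $1/m$ factor you only recover $(\log|S|/k)^{1/3}$, which does not prove the stated bound. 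The legitimate source of the missing factor is the one the paper uses: couple the algorithm with one in which every pull of $V\setminus S$ observes \emph{all} of $S$ (this only helps, and removes $k$ from the picture), so the exploration process becomes $N$ i.i.d.\ full loss vectors; then tensorization of mutual information shows each single-arm observation carries only $O(\Delta^2/m)$ nats about the uniform index $J$ — the $1/m$ is the prior, not a property of the algorithm's sampling — giving $I(J;W)=O(\Delta^2 N)$ and, via Fano, $N=\Omega(\log m/\Delta^2)$.
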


For every $k\in\^N$, we use $\zeta_k$, the $k$-packing independence number, to denote the size of the maximum $k$-packing independent set. To prove \Cref{thm:meta-lb}, we reduce the problem of minimizing regret to \emph{statistical hypothesis testing} for which powerful tools from information theory can help. 

We can use \Cref{thm:meta-lb} to strengthen lower bounds in~\cite{ACDK15}. Besides, we show that large $\delta^*$ usually implies large $\zeta_1$ via studying the linear programming dual of fractional weakly dominating sets and applying a novel rounding procedure. This is also one of our main technical contributions.. Combinatorially, the dual linear program is the problem of finding the maximum fractional vertex packing set in the graph. Therefore, we can establish lower bounds in terms of $\delta^*$ by applying \Cref{thm:meta-lb}:

\begin{theorem} \label{thm:dual-lb}
If $G$ is weakly observable, then for any algorithm and any sufficiently large time horizon $T\in\^N$, there exists a sequence of loss functions such that
	\[
		R(G,T)=\Omega\tp{\tp{\frac{\delta^*}{\alpha}}^{1/3}\cdot T^\frac{2}{3}}\,,
	\]
	where $\alpha$ is the integrality gap of the linear program for vertex packing.
\end{theorem}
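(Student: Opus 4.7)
The plan is to pass from $\delta^*$ to a large $1$-packing independent set via LP duality, and then invoke \Cref{thm:meta-lb} with $k=1$. Formulate the fractional weak domination problem as the LP with variables $x_v\ge 0$ for $v\in V$, objective $\min\sum_{v\in V}x_v$, and one coverage constraint for each vertex that must be observed in the weak-observability sense. Its LP dual is exactly the fractional vertex packing problem: dual variables $y_v\ge 0$ (one per coverage constraint), objective $\max\sum_v y_v$, and for each $u\in V$ the dual constraint $\sum_{v:(u,v)\in E} y_v\le 1$. By LP strong duality the two programs share the common optimum $\delta^*$, and by the definition of the integrality gap $\alpha$ of the dual, there exists an integral vertex packing $P\subseteq V$ with $|P|\ge \delta^*/\alpha$; equivalently, $|N^+(u)\cap P|\le 1$ for every $u\in V$.

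The heart of the argument is rounding $P$ to a $1$-packing independent set $S\subseteq P$ of size $\Omega(|P|)$. The crucial structural consequence of the packing constraint is that each vertex of the induced subgraph $G[P]$ has out-degree at most one. Thus $G[P]$ contains at most $|P|$ directed edges and its underlying undirected graph has at most $|P|$ edges. A greedy extraction (or Tur\'an's bound) then produces an independent set $S$ of size at least $|P|/3$ in this sparse graph. By construction $S$ is independent in $G$, and since $S\subseteq P$, the packing inequality $|N^+(v)\cap S|\le 1$ is inherited for every $v\in V$. Hence $S$ is a $1$-packing independent set with $|S|=\Omega(\delta^*/\alpha)$.

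Provided $|S|\ge 2$, applying \Cref{thm:meta-lb} with $k=1$ yields
\[
R(G,T)=\Omega\!\left(|S|^{1/3}T^{2/3}\right)=\Omega\!\left((\delta^*/\alpha)^{1/3}T^{2/3}\right),
\]
which is the stated lower bound; otherwise $\delta^*/\alpha$ is a bounded constant and the claim reduces to the standard $\Omega(T^{2/3})$ regret for weakly observable graphs. The main obstacle is the rounding step: an arbitrary integral vertex packing need not be an independent set, and one must argue that enforcing independence costs only a constant factor in size. The out-degree-one property of $G[P]$ is the key leverage—it bounds the total number of undirected edges, even though in-degrees may be large—and ensures that a constant-factor independent set survives.
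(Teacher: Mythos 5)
Your proposal is correct and follows essentially the same route as the paper: use strong duality and the integrality gap to extract an integral vertex packing set of size $\zeta=\delta^*/\alpha$, exploit the out-degree-at-most-one structure to round it to a $1$-packing independent set losing only a factor of $3$, and invoke \Cref{thm:meta-lb} with $k=1$. The paper phrases the rounding as an explicit greedy (repeatedly take a minimum in-degree vertex and delete its in- and out-neighbors) rather than citing Tur\'an's bound, but this is the same counting argument and yields the same $|P|/3$ guarantee.
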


Clearly the exact lower bound is determined by the integrality gap of a certain linear program. In general graphs, we have a universal upper bound $\alpha=O\tp{n/\delta^*}$. For concrete instances, we can obtain clearer and tighter bounds on $\alpha$. For example, the linear program has a constant integrality gap $\alpha$ on graphs of bounded degree.

\begin{corollary}\label{cor:bounded-degree}
	Let $\Delta\in\^N$ be a constant and $\+G_\Delta$ be the family of graphs with maximum in-degree $\Delta$. Then for every weakly observable $G=(V,E)\in\+G_\Delta$, any algorithm and any sufficiently large time horizon $T\in\^N$,  there exists a sequence of loss functions such that
	\[
	R(G,T)=\Omega((\delta^*)^{\frac{1}{3}}\cdot T^{\frac{2}{3}})\,.
	\]
\end{corollary}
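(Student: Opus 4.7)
The plan is to invoke \Cref{thm:dual-lb} and show that the integrality gap $\alpha$ of the vertex packing LP is at most $\Delta + 2$ for every $G \in \+G_\Delta$. Since $\Delta$ is a constant, this gives $\alpha = O(1)$, and the bound $\Omega\tp{\tp{\delta^*/\alpha}^{1/3} T^{2/3}}$ from \Cref{thm:dual-lb} immediately yields $R(G,T) = \Omega\tp{(\delta^*)^{1/3} T^{2/3}}$.

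The key structural observation is that each primal constraint of the weakly dominating set LP has the form $\sum_{u \in N^{\mathrm{in}}[v]} x_u \geq 1$, which involves at most $|N^{\mathrm{in}}[v]| \leq \Delta + 1$ variables. By LP duality, this translates into: every dual variable $y_v$ of the vertex packing LP appears in at most $\Delta + 1$ dual constraints. A subtlety worth flagging is that the dual constraints themselves, of the form $\sum_{v \in N^{\mathrm{out}}[u]} y_v \leq 1$, may have arbitrarily large support since out-degrees are not controlled by $\Delta$; it is the low frequency \emph{per variable}, not bounded constraint size, that drives the argument.

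From this frequency bound, I would deduce $\alpha \leq \Delta + 2$ by a simple greedy rounding together with a charging argument. Let $y^*$ be an optimal fractional packing with $\sum_v y_v^* = \delta^*$. Process the vertices in any order, adding $v$ to an integer packing $S$ whenever integer feasibility is preserved. Each $v \notin S$ is blocked by some already-saturated constraint $C_{u_v}$ containing an element of $S$; fix one such $u_v$ per blocked $v$. Since each $w \in S$ activates at most $\Delta + 1$ constraints and each activated constraint carries fractional mass at most $1$,
\[
	\sum_{v \notin S} y_v^* \;\leq\; \sum_{u:\, C_u \cap S \neq \emptyset} \sum_{v \in C_u} y_v^* \;\leq\; (\Delta + 1)|S|.
\]
Combined with the trivial bound $\sum_{v \in S} y_v^* \leq |S|$ (which follows from $y_v^* \leq 1$ by LP feasibility), this gives $\delta^* \leq (\Delta + 2)|S|$, so $\alpha \leq \Delta + 2$. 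The main subtlety in the plan is identifying the correct consequence of bounded in-degree---namely low variable frequency in the dual, rather than the more naive but here irrelevant bounded constraint size---and then exploiting it via the charging inequality above; once this pivot is made, the analysis is routine and the corollary follows by substitution into \Cref{thm:dual-lb}.
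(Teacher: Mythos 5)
Your proposal is correct, and its overall skeleton is the same as the paper's: reduce the corollary to showing that the integrality gap $\alpha$ of $\mathscr{D}$ is $O(1)$ for bounded in-degree graphs and then substitute into \Cref{thm:dual-lb}. Where you diverge is in how that integrality-gap bound is obtained: the paper simply cites a lemma of [BKNS12] giving $\alpha\le 8\Delta$, whereas you prove the bound from scratch. I checked your argument and it works: a variable $y_v$ ($v\in U$) appears in the constraint indexed by $i$ exactly when $i\in N_{\mathrm{in}}(v)$, so bounded in-degree gives variable frequency at most $\Delta$ (in fact exactly $\Delta$ suffices, since $v\in U$ has no self-loop, so you could write $\Delta+1$ everywhere as $\Delta$ and land on $\alpha\le\Delta+1$); the greedy set $S$ is a feasible integral solution of $\mathscr{D}'$, so $|S|\le\zeta$; every rejected $v$ lies in some constraint already hit by $S$; the number of constraints hit by $S$ is at most $\Delta|S|$ and each carries fractional mass at most $1$; and $\sum_{v\in S}y_v^*\le|S|$ by the box constraints. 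This yields $\zeta^*\le(\Delta+2)\zeta$, hence $\alpha\le\Delta+2$, which is even slightly sharper than the cited $8\Delta$. Your emphasis on low per-variable frequency (rather than bounded constraint support, which is indeed uncontrolled here because out-degrees can be large) is exactly the right structural pivot. The only cosmetic remark is that your greedy $S$ is a vertex packing set but not necessarily an independent set; this is harmless because you use \Cref{thm:dual-lb} as a black box, and its proof already converts any maximum vertex packing set into a $1$-packing independent set of proportional size. Net effect: your route is self-contained and gives a better constant; the paper's route is shorter by outsourcing the combinatorial work to the approximation-algorithms literature.
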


We also show that for those $1$-degenerate directed graphs (formally defined in \Cref{sec:lp}), the integrality gap is $1$. This family of graphs includes trees and directed cycles. As a consequence, we have

\begin{corollary}\label{cor:degen}
	Let $G$ be a $1$-degenerate weakly observable graph. Then for any algorithm and any sufficiently large time horizon $T\in\^N$,  there exists a sequence of loss functions such that
	\[
	R(G,T)=\Omega((\delta^*)^{\frac{1}{3}}\cdot T^{\frac{2}{3}}).
	\]
\end{corollary}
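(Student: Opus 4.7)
The plan is to derive \Cref{cor:degen} as a direct consequence of \Cref{thm:dual-lb}. Since the theorem gives $R(G,T)=\Omega\tp{(\delta^*/\alpha)^{1/3}T^{2/3}}$ for every weakly observable graph, where $\alpha$ denotes the integrality gap of the fractional vertex packing LP, it suffices to prove $\alpha=1$ on the class of $1$-degenerate directed graphs. Substituting $\alpha=1$ into the bound immediately yields the claimed $\Omega((\delta^*)^{1/3}T^{2/3})$ lower bound.

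To establish $\alpha=1$, I would exploit the $1$-degeneracy ordering of the vertices: there exists an ordering $v_1,\ldots,v_n$ in which each $v_i$ has at most one neighbor later in the ordering, where ``neighbor'' is interpreted in the directed sense prescribed by \Cref{sec:lp}. Starting from any optimal fractional vertex packing $y^*$, the plan is to run an iterative rounding procedure that processes vertices in reverse order. At each vertex $v_i$, the $1$-degeneracy ensures that $y^*_{v_i}$ is coupled through the packing constraints with at most one vertex already processed; this allows one to round $y^*_{v_i}$ to a value in $\set{0,1}$ and simultaneously re-adjust the unique coupled neighbor's value to preserve both feasibility and the objective. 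Iterating down to $v_1$ produces an integer packing of the same total weight as $y^*$, proving $\alpha=1$. As an alternative route, one could instead argue that under the $1$-degeneracy ordering the constraint matrix of the packing LP is totally unimodular, so every basic feasible solution is integral.

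The main obstacle is correctly aligning the $1$-degeneracy ordering with the directional structure of the vertex packing constraints, which are indexed by vertices and sum over closed out-neighborhoods. Once the orientation is reconciled so that coupling only propagates in one direction along the ordering, each local rounding step affects only one already-processed variable and cannot cascade, which is precisely the guarantee $1$-degeneracy provides. The inclusion of trees and directed cycles in the $1$-degenerate family then follows from the standard observation that both admit an orientation in which every vertex has at most one later neighbor, at which point the above argument applies uniformly.
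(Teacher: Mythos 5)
Your high-level plan coincides with the paper's: invoke \Cref{thm:dual-lb} and reduce everything to showing that the integrality gap $\alpha$ of the vertex packing LP $\mathscr{D}$ equals $1$ on $1$-degenerate graphs. That reduction is correct. The gap is that the integrality statement is where all the content lives, and neither of your two routes to it actually works as described. First, the paper's notion of $1$-degeneracy is \emph{not} the standard undirected one characterized by an ordering in which every vertex has at most one later neighbor; it is defined operationally by two directed elimination moves (delete the in-edge of an in-degree-one vertex; delete an in-degree-zero, out-degree-$\le 1$ vertex together with its out-edge), and it includes directed cycles, whose underlying undirected graphs are $2$-degenerate. So the ordering you build your rounding around need not exist for the graphs in question, and your closing remark that trees and directed cycles ``admit an orientation in which every vertex has at most one later neighbor'' is false for cycles under the usual reading.

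Second, and more seriously, the claim that at each step $y^*_{v_i}$ is ``coupled through the packing constraints with at most one vertex already processed'' is wrong even for trees. The constraint indexed by $i$ sums $y_j$ over $j\in N_{\mathrm{out}}(i)\cap U$, so two variables are coupled exactly when their indices share a common in-neighbor, i.e.\ they interact at distance two. In a star with center $c$ and leaves $l_1,\dots,l_k$, the single constraint at $c$ couples all $k$ leaf variables simultaneously, so under any ordering the last leaf processed is coupled with $k-1$ already-fixed variables; the local ``round one, fix one neighbor'' step cannot be carried out and the non-cascading guarantee you rely on fails. The total unimodularity fallback is asserted without proof and is doubtful for open out-neighborhood matrices. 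The paper instead proves $\alpha=1$ by a greedy construction that tracks the elimination sequence itself: when an in-degree-one vertex $i$ with unique in-edge $(j,i)$ is processed and $y_S(i)$ is still undetermined, it sets $y_S(i)=1$ and forces $y_S(k)=0$ for every other $k\in N_{\mathrm{out}}(j)\cap U$ (thereby handling the whole constraint at $j$ in one stroke, rather than one coupled variable at a time), and then verifies that feasibility is preserved and that the objective never decreases relative to an arbitrary feasible fractional solution. You would need an argument of this global-per-constraint form, keyed to the directed elimination order, to close the gap.
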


%

\subsection*{Comparison of previous results and our results}

In \Cref{tab:compare}, we compare our new upper bounds, lower bounds and their gap with previous best results.

\begin{table}[!htbp]
	\caption{A comparison of results}
	\label{tab:compare}
	\resizebox{\width}{\height}{
		\begin{tabular}{ccccc}
			\toprule
			\multirow{2}{*}{Graph Type} &\multicolumn{2}{c}{Previous best results \cite{ACDK15}} &\multicolumn{2}{c}{This work}\\
			\cmidrule(r){2-3}\morecmidrules\cmidrule{4-5} &Min-max regret & Gap & Min-max regret & Gap \\
			\midrule
			\multirow{2}{*}{General graphs} & \makecell[l]{$O\tp{(\delta\log n)^{\frac{1}{3}}\cdot T^{\frac{2}{3}}}$\\  $\Omega\tp{\max\big\{(\frac{\delta}{\log^2n})^{\frac{1}{3}},1\big\}\cdot T^{\frac{2}{3}}}$}& \makecell{See discussion\\ below}  & \makecell[l]{$O\tp{(\delta^*\log n)^{\frac{1}{3}}\cdot T^{\frac{2}{3}}}$\\ $\Omega\tp{\max\big\{(\frac{\delta^*}{\alpha})^{\frac{1}{3}},1\big\}\cdot T^{\frac{2}{3}}}$}  & \makecell{See discussion\\below }\\
			\cmidrule{2-5}
			& 	{\small for $\delta=\log^2n$}: $\Omega\tp{T^{\frac{2}{3}}}$&$O\tp{\log n}$ &\makecell[l]{{\small for $\delta=\log^2n$}:\\ {\quad  } $\Omega\tp{\log\log n\cdot T^{\frac{2}{3}}}$}& $O\tp{\frac{\log n}{\log\log n}}$
			\\
			\midrule
			\makecell{ Trees /\\ Bounded in-degree} & Same as general graphs & $O\tp{\log n}$& \makecell[l]{$O\tp{(\delta^*\log n)^{1/3}\cdot T^{2/3}}$\\ $\Omega\tp{(\delta^*)^{1/3}\cdot T^{2/3}}$} & $O\tp{\tp{\log n}^{1/3}}$\\
			\midrule 
			\makecell{Complete \\bipartite graphs}& \makecell[l]{$O\tp{(\log n)^{1/3}\cdot T^{2/3}}$\\ $\Omega\tp{T^{2/3}}$} &$O\tp{\tp{\log n}^{\frac{1}{3}}}$ & $\Theta\tp{(\log n)^{1/3}\cdot T^{2/3}}$& $O\tp{1}$\\
			\midrule
			\makecell{Orthogonal \\relation on $\mathbb{F}_2^k$} &   \makecell[l]{$O\tp{(\log n)^{2/3}\cdot T^{2/3}}$\\ $\Omega\tp{T^{2/3}}$}&$O\tp{\tp{\log n}^{\frac{2}{3}}}$& $\Theta\tp{(\log n)^{1/3}\cdot T^{2/3}}$&$O\tp{1}$\\
			\bottomrule
	\end{tabular}}
\end{table} 
 \begin{discussion}\label{dis}
 	In general, our upper bound is never worse than the previous one since $\delta^*\le \delta$.
 	Our lower bound is not directly comparable to the previously known lower bound as they are stated
 	in terms of different parameters. In fact, we can not find an instance such that our lower bound
 	$\Omega(\max\{1,(\delta^*/\alpha)^{1/3}\})$ is worse than the previous lower bound $\Omega(\max\{1,(\delta/(\log n)^2)^{1/3}\})$ and
 	there are instances on which our bound outperforms. The two key quantities, namely the integrality
 	gap $\frac{\delta}{\delta^*}$ of the primal linear programming and the integrality gap $\alpha$ of the dual linear programming, seem to be correlated in a graph. The relation between the two bounds is worth further investigation.
 \end{discussion}
 
\subsection*{Related Work}
The multi-armed bandit problem originated from the sequential decision making under uncertainty studied in~\cite{wald1947sequential,arrow1949bayes} and the adversarial bandit is a natural variant introduced by~\cite{auer1998line}. The work of~\cite{mannor2011bandits} introduced the graph feedback model with a self-loop on each vertex in order to interpolating between the full feedback and bandit feedback settings. This model has been extensively studied in the work of~\cite{mannor2011bandits,alon2013bandits,kocak2014efficient,alon2017nonstochastic}. The work of~\cite{ACDK15} removed the self-loop assumption and considered generalized constrained graphs. They gave a full characterization of the mini-max regret in terms of the time horizon $T$. In contrast to fixed graph feedback, recent work of~\cite{kocak2014efficient,cohen2016online,alon2015online,tossou2017thompson} considered the time-varying graphs. Another line of recent work in~\cite{li2020stochastic,liu2018information,alon2017nonstochastic} is to study random graphs, or the graphs with probabilistic feedback.

Most algorithms for adversarial bandits are derived from the EXP3 algorithm, e.g.~\cite{auer2002nonstochastic,radlinski2008learning}. However, even for the vanilla multi-armed bandit problem, a direct application of EXP3 can only get an upper bound of $O\tp{\sqrt{n\log n \cdot T}}$~\cite{auer2002nonstochastic}, which is suboptimal. In fact, the EXP3 is a special case of \emph{online stochastic mirror descent} (OSMD) algorithm when using negentropy function as the potential function. OSMD was developed by ~\cite{nemirovski1979efficient,nemirovskij1983problem} for online optimization. By choosing a more sophisticated potential function, namely the $1/2$-Tsallis entropy function~\cite{tsallis1988possible}, OSMD can achieve the tight bound $\Theta\tp{\sqrt{nT}}$~\cite{ZL19}.

	The idea of using domination number or related parameters to study the feedback graph appeared many times in literature, e.g.~\cite{ACDK15,buccapatnam2013multi,buccapatnam2014stochastic,alon2013bandits,alon2017nonstochastic}. The work of ~\cite{alon2017nonstochastic} used the idea of the fractional dominating set to study the high-probability regret bound for the strongly observable graph. Other similar works \cite{liu2018information,tossou2017thompson,buccapatnam2014stochastic} mainly focused on stochastic settings. The follow-up works related to the weakly observable graph mainly considered harder settings including the time-varying graphs~\cite{alon2015online,cohen2016online,alon2017nonstochastic}, bounded-memory adversaries~\cite{feng2018online} and the feedback graphs with switching costs~\cite{rangi2019online,arora2019bandits}. The recent work of~\cite{lee2020closer} considered the bound with respect to cumulative losses of the best arm. To the best of our knowledge, there is no further development on the min-max regret since the work of~\cite{ACDK15}.

The paper is organized as follows. In \Cref{sec:pre}, we set up our framework and introduce tools that will be used afterward. We introduce our algorithm in \Cref{sec:alg} and analyze it in \Cref{sec:ub-proof}. In \Cref{sec:lb}, we prove the meta lower bound in terms of $k$-packing independent set and derive our other lower bounds from it. Then we prove the meta lower bound in \Cref{sec:meta-lb}. In \Cref{sec:generalization}, we extend the deterministic graph to the more general time-varying graphs and probabilistic graphs. Finally, we conclude the article and discuss some open problems.
\section{Preliminaries}\label{sec:pre}

In this section, we formally describe the problem setting of bandits with graph feedback and introduce notations, definitions and propositions that will be used later.

Let $n\in \^N$ be a nonnegative integer. We will use $[n]$ to denote the set $\set{1,2,\dots,n}$. Let $\*x\in\^R^n$ be an $n$-dimensional vector. For every $i\in[n]$, we use $\*x(i)$ to denote the value on the $i$th-coordinate. We use $\set{\*e_1,\dots,\*e_n}$ to denote the standard basis of $\^R^n$. That is, $\*e_i\in\^R^n$ is the vector such that 
$
\*e_i(j)=
\begin{cases}
1, & \mbox{ if }j=i\\
0, & \mbox{ if }j\ne i
\end{cases}
$ for every $j\in [n]$.

For every $n\in \^N$,  we define $\Delta_n\defeq\set{\*x\in\^R_{\ge 0}^n\cmid \sum_{i=1}^n\*x(i)=1}$ as the $n$-dimensional probability simplex. Clearly $\Delta_n$ is convex and every $\*x\in \Delta_n$ can be viewed as a distribution on $[n]$. Throughout the article, sometimes we will view a function $\ell:[n]\to \^R$ equivalently as a vector in $\^R^n$, depending on which form is more convenient in the context. With this in mind, we have the inner product $\inner{\ell}{\*x} \defeq \sum_{i\in[n]}\ell(i)\cdot \*x(i)$ for every $\*x\in\^R^n$.

\subsection{Graphs}\label{sec:lp}

In this article, we use $G=(V,E)$ to denote a directed graph with possible self-loops but no multiple edges. Therefore each $(u,v)\in E$ indicates a directed edge from $u$ to $v$ in $G$. If we say a graph $G=(V,E)$ is undirected, we view each undirected edge $\set{u,v}\in E$ as two directed edges $(u,v)$ and $(v,u)$. In the following, we assume $\abs{V}\ge 2$ unless otherwise specified. For any $S\subseteq V$, $G[S]$ is the subgraph of $G$ induced by $S$. For every $v\in V$, we define $N_{\-{in}}(v)=\set{u\in V\cmid (u,v)\in E}$ and $N_{\-{out}}(v)=\set{u\in V\cmid (v,u)\in E}$ as the set of in-neighbors and out-neighbors of $v$ respectively. We also call $\abs{N_{\-{in}}(v)}$ and $\abs{N_{\-{out}}(v)}$  the in-degree and out-degree of $v$ respectively. A set $S\subseteq V$ is an \emph{independent set} if there is no $u,v\in S$ such that $(u,v)\in E$. Note that we \emph{do not} consider an isolated vertex with a self-loop as an independent set.

	A vertex $v\in V$ is \emph{strongly observable} if $(v,v)\in E$ or $\forall u\in V\setminus{v}$, $(u,v)\in E$. A vertex $v\in V$ is \emph{non-observable} if $N_\-{in}(v)=\varnothing$. 	A directed graph $G$ is called \emph{strongly observable} if each vertex of $G$ is strongly observable. It is called \emph{non-observable} if it contains at least one non-observable vertex. The graph is called \emph{weakly observable} if it is neither strong observable nor non-observable.
	
	We say a directed graph $G$ is $1$-degenerate if one can iteratively apply the following operators in arbitrary orders on $G$ to get an empty graph:
	\begin{itemize}
		\item Pick a vertex with in-degree one and remove the in-edge;
		\item Pick a vertex with in-degree zero and out-degree at most one, and remove both the vertex and the out-edge.
	\end{itemize}

	Typical $1$-degenerate graphs include trees (directed or undirected) and directed cycles.
	
\smallskip
	Let $U=\{i\in V:i\notin N_{\text{in}}(i)\}$ denote the set of vertices without self-loops. Consider the following linear program defined on $G$. We will call the linear program $\mathscr{P}$.

\begin{center}
	\begin{minipage}{0.6\textwidth}
		\begin{alignat}{3}
			\notag &\text{minimize} & \sum_{i\in V} x_i &\\
			\tag{$\mathscr{P}$}&\text{subject to} \quad & \sum_{i\in N_{\mathrm{in}}(j)}  x_i&\ge 1, &\quad \forall j\in U\\
			\notag& & 0\le x_i&\le 1, &\quad \forall i\in V
		\end{alignat}
	\end{minipage}
\end{center}

We use $\delta^*(G)$ to denote the optimum of the above linear program. We call $\delta^*(G)$ the \emph{fractional weak domination number} of $G$. The dual of the linear program is

\smallskip

\begin{center}
	\begin{minipage}{0.6\textwidth}
		\begin{alignat}{3}
			\notag &\text{maximize} & \sum_{j\in U} y_j &\\
			\tag{$\mathscr{D}$} &\text{subject to} \quad & \sum_{j\in N_{\mathrm{out}}(i)\cap U}  y_j&\le 1, &\quad \forall i\in V\\
			\notag & & 0\le y_j&\le 1, &\quad \forall j\in U
		\end{alignat}
	\end{minipage}
\end{center}

We call this linear program $\mathscr{D}$. We use $\zeta^*(G)$ to denote the optimum of the dual. We call $\zeta^*(G)$ the \emph{fractional vertex packing number} of $G$. Then it follows from the \emph{strong duality theorem} (see e.g.~\cite{BV04}) of linear programs that $\delta^*(G)=\zeta^*(G)$.

We remark that in \cite{ACDK15}, the weakly (integral) dominating set was defined to dominate all ``weakly observable vertices''. This was indeed a flaw in the paper as in some extreme cases, the set may fail to dominate vertices in $U$ that are ``strongly observable''. Therefore we ask for the set to dominate $U$. Nevertheless the two definitions of domination number, both integral and fractional, differ by at most one and do not affect the asymptotic bounds. More explanations on this can be found in \Cref{apx:dc}. 
\subsection{Bandits}

Let $G=(V,E)$ be a directed graph where $V=[n]$ is the collection of bandit arms. Let $T\in\^N$ be the time horizon which is known beforehand.
The bandit problem is an online-decision game running for $T$ rounds. The player designs an algorithm $\mathscr{A}$ with the following behavior in each round $t$ of the game:
	\begin{itemize}
		\item The algorithm $\mathscr{A}$ chooses an arm $A_t\in [n]$;
		\item An adversary privately provides a loss function $\ell_t:\^N\to[0,1]$ and $\mathscr{A}$ pays a loss $\ell_t(A_t)$;
		\item The algorithm receives feedback $\set{\ell_t(j)\cmid j\in N_{\-{out}}(A_t)}$.
	\end{itemize}

The \emph{expected regret} of the algorithm $\mathscr{A}$ against a specific loss sequence $\ell^*=\{\ell_1,\dots,\ell_T\}$ is defined by $R(G, T,\mathscr{A},\ell^*)=\E{\sum_{t=1}^T\ell_{t}(A_t)}-\min_{a\in [n]}\sum_{t=1}^T\ell_{t}(a)$. Note that we look at the expectation of the algorithm since $\mathscr{A}$ might be randomized and it is not hard to see that randomization is necessary to guarantee $o(T)$ regret due to the adversarial nature of the loss sequence. 

 The purpose of the problem is to design an algorithm performing well against the worst loss sequence, namely determining the min-max regret
 \[
 	R(G, T)\defeq \inf_{\mathscr{A}}\sup_{\ell^*}R(G, T,\mathscr{A},\ell^*)\,.
 \]
 
 There is another model called \emph{stochastic bandits} in which the loss function at each round is not adversarially chosen but sampled from a fixed distribution. It is clear that this model is not harder than the one introduced above in the sense that any algorithm performing well in the adversarial setting also performs well in the stochastic setting.
  Therefore, we will construct instances of stochastic bandits to derive lower bounds in \Cref{sec:lb}.

\subsection{Optimization} Our upper bound is obtained via the online mirror descent algorithm. In this section, we collect a minimal set of terminologies to understand the algorithm. More details about the approach and its application to online decision-making can be found in e.g.\ \cite{Ora2019}.

Let $C\subseteq \^R^n$. We use $\interior(C)$ to denote the interior $C$. For a convex function $\Psi:\mathbb{R}^d\to \mathbb{R}\cup \{\infty\}$, $\text{dom}(\Psi)\defeq \{x:\Psi(x)<\infty\}$ is the domain of $\Psi$. Assume $\Psi$ is differentiable in its domain. For every $\*x,\*y\in \dom\tp{\Psi}$,  $B_\Psi(\*x,\*y)\defeq \Psi(\*x)-\Psi(\*y)-\langle \*x-\*y,\nabla\Psi(\*y)\rangle\ge 0$ is the \emph{Bregman divergence} between $\*x$ and $\*y$ with respect to the convex function $\Psi$. The diameter of $C$ with respect to $\Psi$ is $D_\Psi(C) \defeq \sup_{\*x,\*y\in C}\{\Psi(\*x)-\Psi(\*y)\}$.
	
		Let $A\in \^R^{n\times n}$ be a \emph{semi-definite positive} matrix and $\*x\in \^R^n$ be a vector. We define $\|\*x\|_{A}\defeq \sqrt{\*x^{\mathtt{T}}A\*x}$ as the norm of $\*x$ with respect to $A$. 

\subsection{Information Theory}

We borrow tools from information theory to establish lower bounds. More details can be found in the standard textbook~\cite{COV99} on the topic. To ease the notation, each ``$\log$'' appeared in the article without subscript is of base $e$. We fix a probability space $(\Omega,\+F,\*{Pr})$ and let $X,Y:\Omega\to U$ be discrete-valued random variables for a finite set $U$. 

The entropy of $X$ is $H(X)\defeq -\sum_{x\in U}\Pr{X=x}\cdot \log \Pr{X=x}$. The conditional information $H(X|Y)\defeq -\sum_{x,y\in U}\Pr{X=x, Y=y}\log \Pr{X=x\mid Y=y}$. The mutual information between $X$ and $Y$ is $I(X;Y)\defeq \sum_{x,y\in U}\Pr{X=x, Y=y}\cdot\log \frac{\Pr{X=x, Y=y}}{\Pr{X=x}\cdot \Pr{Y=y}}$. It is a basic fact that $H(X)=H(X|Y)+I(X;Y)$. Suppose we have another random variable $Z$, then $I(X;Y|Z)\defeq H(X|Z)-H(X|Y,Z)$.

Suppose $Z:\Omega\to W$  is a random variable correlated to $X$ and one needs to guess the value of $X$ via observing $Z$. The Fano's inequality reveals the inherent difficulty of this task:

\begin{lemma}[Fano's inequality \cite{Fan61}]\label{lem:fano}
	For any function $f:W\to U$, it holds that
	\[
		\Pr{f(Z)\ne X} \ge \frac{H(X)-I(X;Z)-\log 2}{\log\abs{U}}\,.
	\]
\end{lemma}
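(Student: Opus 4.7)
My plan is to follow the standard information-theoretic proof via an error-indicator random variable. Let $\hat X \defeq f(Z)$ and introduce the binary error indicator $E \defeq \ind{\hat X \ne X}$. The pivotal observation is that $E$ is a deterministic function of the pair $(X, \hat X)$, so $H(E \mid X, \hat X) = 0$; this is what lets the chain-rule decomposition be turned into a one-sided bound.

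First I would apply the data-processing inequality to transfer from $Z$ to $\hat X$, obtaining $H(X \mid Z) \le H(X \mid \hat X)$, since $\hat X$ is a deterministic function of $Z$. Next I would compute $H(X, E \mid \hat X)$ by the chain rule in two directions. One direction gives $H(X \mid \hat X) + H(E \mid X, \hat X) = H(X \mid \hat X)$ using the pivotal observation, while the other gives $H(E \mid \hat X) + H(X \mid E, \hat X)$. Equating the two yields
\[
H(X \mid \hat X) \;=\; H(E \mid \hat X) + H(X \mid E, \hat X).
\]
The term $H(E \mid \hat X)$ is at most $H(E) \le \log 2$ since $E$ is binary. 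For the second term I would split according to the two values of $E$: when $E = 0$ the conditional entropy vanishes because then $X = \hat X$ is determined, and when $E = 1$ it is at most $\log \abs{U}$; weighting by probabilities produces the bound $\Pr{E = 1}\cdot \log\abs{U}$. Combining these estimates yields $H(X \mid \hat X) \le \log 2 + \Pr{E = 1}\cdot \log\abs{U}$.

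The final step is to substitute $H(X \mid Z) = H(X) - I(X;Z)$ into the data-processing bound and rearrange to isolate $\Pr{f(Z) \ne X} = \Pr{E = 1}$. I do not anticipate any substantial obstacle here: this is a classical textbook result and the entire argument rests on routine entropy identities (chain rule, monotonicity under conditioning, the $\log\abs{U}$ upper bound on entropy of a $U$-valued random variable) plus the one conceptual trick of introducing $E$. The only step requiring minor care is verifying that the $E = 0$ contribution to $H(X \mid E, \hat X)$ really is zero; otherwise the plan is to execute a clean version of this standard calculation.
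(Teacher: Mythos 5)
Your proof is correct: it is the classical textbook derivation of Fano's inequality via the error indicator $E=\ind{f(Z)\ne X}$, the two-way chain-rule expansion of $H(X,E\,\vert\,\hat X)$, and the data-processing step $H(X\,\vert\,Z)\le H(X\,\vert\,\hat X)$. The paper itself gives no proof of this lemma --- it is quoted as a known result with a citation --- so there is nothing to compare against; your argument is exactly the standard one found in the references the paper points to, and every step (including the $E=0$ case contributing zero to $H(X\,\vert\,E,\hat X)$) checks out.
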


If we assume $Y=(Y_1,\dots,Y_n)$ is a vector of random variables such that $\set{Y_i}_{i\in [n]}$ are mutually independent conditional on $X$, then we have the following lemma of tensorization of mutual information:

\begin{lemma}[Tensorization of Mutual Information]\label{lem:tensor}
If $Y=(Y_1,\dots,Y_n)$ and random variables $\set{Y_i}_{i\in[n]}$ are mutually independent conditional on $X$, then
\[
	I(X;Y)\le \sum_{i=1}^n I(X;Y_i)\,.
\]
\end{lemma}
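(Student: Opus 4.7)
The plan is to use the chain rule for mutual information together with the identities $I(U;V)=H(V)-H(V\mid U)$ and the fact that conditioning reduces entropy. I would first expand
\[
I(X;Y)=I(X;Y_1,\dots,Y_n)=\sum_{i=1}^{n} I(X;Y_i\mid Y_1,\dots,Y_{i-1}),
\]
which is the standard chain rule for mutual information, and then bound each term by $I(X;Y_i)$.

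To bound the $i$-th term, write $Y_{<i}=(Y_1,\dots,Y_{i-1})$ and expand
\[
I(X;Y_i\mid Y_{<i})=H(Y_i\mid Y_{<i})-H(Y_i\mid X,Y_{<i}).
\]
The conditional independence hypothesis states that $Y_i$ is independent of $Y_{<i}$ given $X$, so $H(Y_i\mid X,Y_{<i})=H(Y_i\mid X)$. On the other hand, conditioning can only reduce entropy, so $H(Y_i\mid Y_{<i})\le H(Y_i)$. Combining the two yields
\[
I(X;Y_i\mid Y_{<i})\le H(Y_i)-H(Y_i\mid X)=I(X;Y_i).
\]
Summing over $i$ completes the proof.

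There isn't really a hard step here; the argument is a textbook manipulation. The only place one has to be careful is in correctly invoking the conditional independence: it must be used to cancel $Y_{<i}$ from the conditioning in $H(Y_i\mid X,Y_{<i})$, not from $H(Y_i\mid Y_{<i})$ (which would actually require unconditional independence and is false in general). Once that subtlety is respected, the chain rule plus ``conditioning reduces entropy'' gives the bound immediately.
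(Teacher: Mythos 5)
Your proof is correct and follows exactly the same route as the paper: the chain rule for mutual information, the identity $H(Y_i\mid X,Y_1,\dots,Y_{i-1})=H(Y_i\mid X)$ from conditional independence, and the fact that conditioning reduces entropy. The remark about where the conditional independence must be invoked is a nice touch, but there is no substantive difference from the paper's argument.
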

\begin{proof}
	By the chain rule of the mutual information, 
	\[
		I(X;Y) = \sum_{i=1}^n I(X;Y_i|Y_1,\dots,Y_{i-1})\,.
	\]
For every $i\in [n]$, we have
\[
	I(X;Y_i|Y_1,\dots,Y_{i-1}) = H(Y_i|Y_1,\dots,Y_{i-1}) - H(Y_i|X,Y_1,\dots,Y_{i-1}) \le H(Y_i) - H(Y_i|X)=I(X;Y_i)\,,
\]
where we use the fact that $H(Y_i|X,Y_1,\dots,Y_{i-1})=H(Y_i|X)$ due to the conditional mutual independence.
\end{proof}

\section{The Algorithm}\label{sec:alg}

\subsection{Online Stochastic Mirror Descent (OSMD)}\label{sec:osmd}

Our algorithm is based on the \emph{Online Stochastic Mirror Descent} framework that has been widely used for bandit problems in various settings. Assuming the set of arms is $[n]$, possibly with many additional structures, a typical OSMD algorithm usually consists of following steps:
\begin{itemize}
	\item Pick some initial distribution $X_1$ over all $n$ arms.
	\item For each round $t=1,2,\dots,T$:
		\begin{itemize}
			\item Tweak $X_t$ according to the problem structure to get a distribution $\tilde X_t$ over $n$ arms.
			\item The adversary chooses some (unknown) loss vector $\ell_t:[n]\to [0,1]$ with the knowledge of all previous information including $\tilde X_t$. The algorithm then picks an arm $A_t\sim \tilde X_t$ and pays a loss $\ell_t(A_t)$. After this, the algorithm observes some (partial) information $\Phi_t$ about $\ell_t$.
			\item Construct an estimator $\hat \ell_t$ of $\ell_t$ using collected information $\Phi_t$, $A_t$ and $\tilde X_t$.
			\item Compute an updated distribution $X_{t+1}$ from $X_t$ using mirror descent with a pre-specified potential function $\Psi$ and the estimated ``gradient'' $\hat\ell_t$.
		\end{itemize}
\end{itemize}

Although the framework of OSMD is standard, there are several key ingredients left for the algorithm designer to specify:
\begin{itemize}
	\item How to construct the distribution $\tilde X_t$ from $X_t$?
	\item How to construct the estimator $\hat \ell_t$?
	\item How to pick the potential function $\Psi$?
\end{itemize}
In general, filling the blanks above heavily relies on the problem structure and sometimes requires ingenious construction to achieve low regret. We will first describe our algorithm and then explain our choices in \Cref{sec:algo}.

\subsection{The Algorithm for Bandits with Graph Feedback}\label{sec:algo}

Let $G=(V,E)$ be a directed graph where $V=[n]$. In the whole section, $G$ is the input instance to the problem.

Some offline preprocessing is required before the online part of the algorithm. We first solve the linear program $\mathscr{P}$ to get an optimal solution $\set{x^*_i}_{i\in [n]}$. Recall $\delta^*(G)=\sum_{i\in[n]}x_i^*$ is the fractional weak domination number of $G$. Define a distribution $\*u\in \Delta_n$ by normalizing $\set{x^*_i}_{i\in [n]}$, i.e., we let $\*u(i)=\frac{x^*_i}{\sum_{j\in[n]}x^*_j}$ for all $i\in[n]$. The distribution $\*u$ will be the \emph{exploration distribution} whose function will be explained later. Define parameters $\gamma=\tp{\frac{\delta^*(G)\log n}{T}}^{1/3}$, $\eta=\frac{\gamma^2}{\delta^*(G)}$ where $\gamma$ is the \emph{exploration rate} and $\eta$ is the step size in the mirror descent. 
	Finally, we let the potential function $\Psi:\mathbb{R}_{\ge 0}^n\to\mathbb{R}$ be $\*x\in\mathbb{R}_{\ge 0}^n\mapsto \sum_{i=1}^n \*x(i)\log\*x(i)$ with the convention that $0\cdot \log 0 = 0$. When restricted to $\Delta_n$, $\Psi(\*x)$ is the negative entropy of the distribution $\*x$.

\begin{algorithm}[!h]
	\Begin{
		$X_1\gets\argmin\limits_{a\in \Delta_n}\Psi(a)$\;
	\For{$t=1,2,\dots,T$}{
		$\tilde{X}_t\gets (1-\gamma)\cdot X_t+\gamma\cdot \*u$\;
		\tcc{use $\*u$ to explore with rate $\gamma$.}
		Play $A_t\sim \tilde X_t$ and observe $\ell_t(j)$ for all $j\in N_{\mathrm{out}}(A_t)$\;		
		  \tcc{If $j\not\in N_{\mathrm{out}}(A_t)$, the value of $\ell_t(j)$ is unset.}
		$\forall j\in [n]:\;\hat{\ell}_t(j)\gets \frac{\ind{j\in N_{\mathrm{out}}(A_t)}}{\sum_{i\in N_{\mathrm{in}}(j)}\tilde X_t(i)}\cdot\ell_t(j)$\; 
		\tcc{For $j\not\in N_{\mathrm{out}}(A_t)$, $\hat\ell_t(j)=0$.}  
		$X_{t+1}\gets\argmin\limits_{x\in\Delta_n} \eta\inner{x}{\hat{\ell}_t}+B_\Psi(x,X_t)$\; 		
		\tcc{The update rule of mirror descent w.r.t.\ $\Psi$.}
		}
	}
	\caption{Online Stochastic Mirror Descent with Exploration}
	\label[Algorithm]{alg:osmde}
\end{algorithm}

Clearly our algorithm implements OSMD framework by specializing the three ingredients mentioned in \Cref{sec:osmd}. 
\begin{itemize}
	\item We choose $\tilde X_t=(1-\gamma)\cdot X_t+\gamma\cdot \*u$. This means that our algorithm basically follows $X_t$ but with a certain probability $\gamma$ to explore the arms according to $\*u$. The reason for doing this is to guarantee that each arm has some not-so-small chance to be observed. It will be clear from the analysis of OSMD that the performance of the algorithm depends on the variance of $\hat \ell_t$, and a lower bound for each $\tilde X_t(i)$ implies an upper bound on the variance. On the other hand, we cannot choose $\gamma$ too large since it is $X_t$ who contains information on which arm is good, and our probability to follow $X_t$ is $1-\gamma$. Therefore, our choice of $\gamma$ is optimized with respect to the trade-off between the two effects. The Exp3.G algorithm in~\cite{ACDK15} used a uniform distribution over the weakly dominating set as an exploration probability instead of $\*u$, which is the only difference between the two algorithms and leads to different graph parameters in regret bounds. Moreover, our exploration probability can be efficiently computed by solving the linear program~$\mathscr{P}$ while it is NP-hard to determine theirs.
	\item Our estimator $\hat\ell_t$ is a simple unbiased estimator for $\ell_t$, namely $\E{\hat\ell_t}=\ell_t$.
	\item The potential function we used is the negative entropy function. In fact, other choices of potential functions also work for our purpose and achieve the same regret bound. We choose it because it is simple and performs best among common choices. The OSMD algorithm with negative entropy function coincides with previous algorithms for the problem in the literature (e.g.\ \cite{freund1997decision,auer2002nonstochastic,radlinski2008learning}) where it was called EXP3 in disguise. 
\end{itemize}


We will analyze the regret of~Algorithm \ref{alg:osmde} and prove \Cref{thm:regret} in \Cref{sec:ub-proof}.

\section{The Analysis}\label{sec:ub-proof}

Since our algorithm only deviates from the standard OSMD algorithm by incorporating an additional exploration term $\gamma\cdot \*u$, the regret naturally consists of two parts: the standard OSMD regret and the amount introduced by the additional exploration.

Fix a sequence of loss function $\ell_1,\dots,\ell_T$ and let $a^*=\argmin_{a\in[n]}\sum_{t=1}^T\ell_{t}(a)$ be the optimal arm.
\begin{lemma}\label{lem:gamma} For any time horizon $T\in\^N$, the Algorithm~\ref{alg:osmde} satisfies
	\[
		R(G,T)\le \sum_{t=1}^T\E{\inner{X_t-\*e_{a^*}}{\hat\ell_t}}+\gamma T\,.
	\]	
\end{lemma}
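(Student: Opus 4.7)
The plan is to split the per-round regret into two pieces: the ``true'' regret measured against $X_t$ plus an exploration overhead caused by mixing with $\*u$, and then replace the true loss $\ell_t$ with its unbiased estimator $\hat\ell_t$ via the tower property.

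First I would condition on the history up to round $t$ (which determines $X_t$ and hence $\tilde X_t$). Since $A_t\sim \tilde X_t$,
\[
\E{\ell_t(A_t)\mid \tilde X_t}=\inner{\ell_t}{\tilde X_t}=(1-\gamma)\inner{\ell_t}{X_t}+\gamma\inner{\ell_t}{\*u}.
\]
Because $\ell_t(i)\in[0,1]$ and $\*u\in\Delta_n$, we have $\inner{\ell_t}{\*u}\le 1$, and since $\inner{\ell_t}{X_t}\ge0$ we further get $(1-\gamma)\inner{\ell_t}{X_t}\le \inner{\ell_t}{X_t}$, so
\[
\E{\ell_t(A_t)}\le \E{\inner{\ell_t}{X_t}}+\gamma.
\]
Summing from $t=1$ to $T$ and subtracting $\sum_t \ell_t(a^*)$ yields
\[
R(G,T)\le \sum_{t=1}^T\E{\inner{\ell_t}{X_t-\*e_{a^*}}}+\gamma T.
\]

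Next I would show the estimator $\hat\ell_t$ is unbiased for $\ell_t$ conditional on the history. Note that in a weakly observable graph every vertex has at least one in-neighbor, and the LP constraint $\sum_{i\in N_{\mathrm{in}}(j)} x_i^*\ge 1$ for $j\in U$ together with the mixing weight $\gamma$ on $\*u$ ensures $\sum_{i\in N_{\mathrm{in}}(j)}\tilde X_t(i)>0$ whenever the indicator $\ind{j\in N_{\mathrm{out}}(A_t)}$ can be nonzero (and for $j\notin U$ the self-loop together with $A_t$ being sampled from $\tilde X_t$ makes this automatic). Therefore
\[
\E{\hat\ell_t(j)\mid X_t}=\sum_{i\in N_{\mathrm{in}}(j)}\tilde X_t(i)\cdot\frac{\ell_t(j)}{\sum_{i'\in N_{\mathrm{in}}(j)}\tilde X_t(i')}=\ell_t(j).
\]
Since $X_t$ is measurable with respect to the history, the tower property gives $\E{\inner{X_t}{\hat\ell_t}}=\E{\inner{X_t}{\ell_t}}$, and similarly $\E{\inner{\*e_{a^*}}{\hat\ell_t}}=\ell_t(a^*)$.

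Combining the two identities with the inequality from the first step gives the stated bound. The proof is essentially a direct calculation; the only point that needs a little care is verifying that the denominator in $\hat\ell_t$ is strictly positive whenever the numerator is nonzero, which I would justify by the weak observability of $G$ and the strict positivity of the exploration term $\gamma\*u$ on a weakly dominating set.
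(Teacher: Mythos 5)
Your proposal is correct and follows essentially the same route as the paper: condition on the history so that $\E{\ell_t(A_t)\mid\mathcal{F}_{t-1}}=\inner{\tilde X_t}{\ell_t}$, drop the $(1-\gamma)$ factor and bound the exploration term by $\gamma$ using $\inner{\*u}{\ell_t}\le 1$, then swap $\ell_t$ for $\hat\ell_t$ by unbiasedness and the tower property. The only difference is that you spell out the unbiasedness computation and the positivity of the denominator (which the paper asserts without proof), and that extra care is harmless and correct.
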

\begin{proof}
	For every $t=1,2,\dots,T$, let $\+F_t$ be the $\sigma$-algebra generated by the random variables appeared at and before the $t$-th round. Define 
	$\E[t]{\cdot}=\E{\cdot\vert\mathcal{F}_t}$, then 
\[
		R(G,T)=\E{\sum_{t=1}^T\ell_t(A_t)}-\sum_{t=1}^T\ell_t(a^*) = \sum_{t=1}^T\E{\E[t-1]{\ell_t(A_t)}} - \sum_{t=1}^T\E{\ell_t(a^*)}.
\]
Since $\tilde X_t$ is $\+F_{t-1}$-measurable and $A_t\sim\tilde X_t$, we have 
\begin{align*}
	R(G,T)&=\sum_{t=1}^T\E{\E[t-1]{\ell_t(A_t)}} - \sum_{t=1}^T\E{\ell_t(a^*)}\\
	&=\sum_{t=1}^T\E{\inner{\tilde{X}_t}{\ell_t}} - \sum_{t=1}^T\inner{\*e_{a^*}}{\ell_t}\le \sum_{t=1}^T\E{\inner{X_t+\gamma\cdot\*u}{\ell_t}} - \sum_{t=1}^T\inner{\*e_{a^*}}{\ell_t}\\
	&=\sum_{t=1}^T\E{\inner{X_t-\*e_{a^*}}{\ell_t}}+\sum_{t=1}^T\gamma\cdot\inner{\*u}{\ell_t}\le \sum_{t=1}^T\E{\inner{X_t-\*e_{a^*}}{\hat{\ell}_t}}+\gamma\cdot T\,,
\end{align*}
where in the last inequality we used the facts that $\hat\ell_t$ is an unbiased estimator for $\ell_t$ and $\inner{\*u}{\ell_t}\le 1$.
\end{proof}

Expanding the first term, we have the following result.

\begin{lemma}\label{lem:osmde}
	\[
	R(G,T)\leq\frac{D_\Psi(\Delta_n)}{\eta}+\frac{\eta}{2}\sum_{t=1}^T\E[A_t\sim\tilde{X}_t]{\sup_{\*z\in[Y_t,X_t]} \norm{\hat\ell_t}_{(\nabla^2\Psi(\*z))^{-1}}^2}+\gamma T\,,
	\]
	where $Y_t=\argmin_{\*y\in\interior(\dom(\Psi))}\tp{\eta\inner{\*y}{\hat\ell_t}+B_\Psi(\*y,X_t)}$.
\end{lemma}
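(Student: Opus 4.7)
My plan is to begin from \Cref{lem:gamma}, which reduces the task to bounding $\sum_{t=1}^T\E{\inner{X_t-\*e_{a^*}}{\hat\ell_t}}$ by $\frac{D_\Psi(\Delta_n)}{\eta}+\frac{\eta}{2}\sum_{t=1}^T\E{\sup_{\*z\in[Y_t,X_t]}\|\hat\ell_t\|_{(\nabla^2\Psi(\*z))^{-1}}^2}$. This is the standard one-step regret analysis of online mirror descent, and I would handle each sample path separately (the expectation then passes through at the end). The key intermediate point $Y_t$ is the unconstrained mirror descent iterate, for which first-order optimality yields the clean identity $\nabla\Psi(Y_t)=\nabla\Psi(X_t)-\eta\hat\ell_t$; by contrast, $X_{t+1}$ is the Bregman projection of $Y_t$ onto $\Delta_n$.

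The first step is to split $\inner{X_t-\*e_{a^*}}{\hat\ell_t}=\inner{X_t-Y_t}{\hat\ell_t}+\inner{Y_t-\*e_{a^*}}{\hat\ell_t}$ and apply the three-point Bregman identity
\[
\inner{a-b}{\nabla\Psi(c)-\nabla\Psi(b)}=B_\Psi(a,b)-B_\Psi(a,c)+B_\Psi(b,c)
\]
with $(a,b,c)=(\*e_{a^*},Y_t,X_t)$ and with $(a,b,c)=(X_t,Y_t,X_t)$, using $\nabla\Psi(X_t)-\nabla\Psi(Y_t)=\eta\hat\ell_t$. After cancellation this yields
\[
\eta\inner{X_t-\*e_{a^*}}{\hat\ell_t}\le B_\Psi(\*e_{a^*},X_t)-B_\Psi(\*e_{a^*},Y_t)+B_\Psi(X_t,Y_t).
\]
The second step is to replace $Y_t$ by $X_{t+1}$ in the middle term using the generalized Pythagorean inequality $B_\Psi(\*e_{a^*},Y_t)\ge B_\Psi(\*e_{a^*},X_{t+1})$, which holds because $X_{t+1}$ is the Bregman projection of $Y_t$ onto the convex set $\Delta_n$. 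Summing $t=1,\dots,T$ telescopes the first two terms into at most $B_\Psi(\*e_{a^*},X_1)-B_\Psi(\*e_{a^*},X_{T+1})\le D_\Psi(\Delta_n)$.

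The third step is to control the remaining stability term $B_\Psi(X_t,Y_t)$. Using the dual Bregman identity $B_\Psi(X_t,Y_t)=B_{\Psi^*}(\nabla\Psi(Y_t),\nabla\Psi(X_t))$ together with $\nabla\Psi(X_t)-\nabla\Psi(Y_t)=\eta\hat\ell_t$ and the second-order Taylor expansion of $\Psi^*$, one obtains
\[
B_\Psi(X_t,Y_t)=\frac{\eta^2}{2}\,\hat\ell_t^{\mathtt T}\nabla^2\Psi^*(\xi)\,\hat\ell_t
\]
for some $\xi$ on the segment between $\nabla\Psi(Y_t)$ and $\nabla\Psi(X_t)$. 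Invoking $\nabla^2\Psi^*=(\nabla^2\Psi)^{-1}$ and the inverse map $\nabla\Psi^*$, this equals $\frac{\eta^2}{2}\|\hat\ell_t\|_{(\nabla^2\Psi(\*z))^{-1}}^2$ for some $\*z\in[Y_t,X_t]$, and is therefore bounded by $\frac{\eta^2}{2}\sup_{\*z\in[Y_t,X_t]}\|\hat\ell_t\|_{(\nabla^2\Psi(\*z))^{-1}}^2$. Dividing through by $\eta$, taking expectations, and plugging into \Cref{lem:gamma} finishes the argument.

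The only genuine obstacle is the third step: writing the second-order remainder in the exact form $\|\hat\ell_t\|_{(\nabla^2\Psi(\*z))^{-1}}^2$ with $\*z$ lying in the primal segment $[Y_t,X_t]$, rather than in dual coordinates. Passing through $\Psi^*$ and using $\nabla^2\Psi^*(\nabla\Psi(\*z))=(\nabla^2\Psi(\*z))^{-1}$ handles this, and the sup in the lemma statement is exactly what lets us avoid pinning down $\*z$ explicitly. Everything else is bookkeeping with the three-point identity and the Pythagorean inequality.
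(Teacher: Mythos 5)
Your overall architecture is exactly the standard OSMD analysis that the paper itself invokes for this lemma (the paper gives no self-contained proof and simply cites the literature): reduce via \Cref{lem:gamma}, apply the three-point identity with the unconstrained iterate $Y_t$ satisfying $\nabla\Psi(Y_t)=\nabla\Psi(X_t)-\eta\hat\ell_t$, pass from $Y_t$ to $X_{t+1}$ by the generalized Pythagorean inequality for the Bregman projection, telescope against $D_\Psi(\Delta_n)$, and control the stability term by a local norm. Your first two steps are correct as written (your displayed inequality is in fact an equality).

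The one genuine gap is in your third step, and it sits exactly where you flag the crux. The mean-value point $\xi$ from the Taylor expansion of $\Psi^*$ lies on the \emph{dual} segment joining $\nabla\Psi(X_t)$ and $\nabla\Psi(Y_t)$; its primal image $\*z=\nabla\Psi^*(\xi)$ therefore lies on the curve $\set{\nabla\Psi^*\tp{(1-s)\nabla\Psi(X_t)+s\nabla\Psi(Y_t)}\cmid s\in[0,1]}$, which joins $X_t$ to $Y_t$ but is in general \emph{not} the straight segment $[Y_t,X_t]$ over which the lemma takes its supremum. The identity $\nabla^2\Psi^*(\nabla\Psi(\*z))=(\nabla^2\Psi(\*z))^{-1}$ is true but does not relocate $\*z$ onto that segment, so the final ``for some $\*z\in[Y_t,X_t]$'' is unjustified for general $\Psi$. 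The standard repair stays in primal coordinates: keep the stability term in the form $\inner{X_t-Y_t}{\hat\ell_t}-\frac{1}{\eta}B_\Psi(Y_t,X_t)$, Taylor-expand $\Psi$ to get $B_\Psi(Y_t,X_t)=\frac{1}{2}\norm{X_t-Y_t}_{\nabla^2\Psi(\*z)}^2$ for a genuine $\*z\in[Y_t,X_t]$, and then bound
\[
\inner{X_t-Y_t}{\hat\ell_t}-\frac{1}{2\eta}\norm{X_t-Y_t}_{\nabla^2\Psi(\*z)}^2\le \norm{X_t-Y_t}_{\nabla^2\Psi(\*z)}\norm{\hat\ell_t}_{(\nabla^2\Psi(\*z))^{-1}}-\frac{1}{2\eta}\norm{X_t-Y_t}_{\nabla^2\Psi(\*z)}^2\le\frac{\eta}{2}\norm{\hat\ell_t}_{(\nabla^2\Psi(\*z))^{-1}}^2
\]
by the generalized Cauchy--Schwarz inequality and AM--GM. (For the negentropy potential actually used in the paper your dual route can also be salvaged: $\nabla\Psi^*$ of a dual convex combination is the coordinatewise geometric mean of $X_t$ and $Y_t$, which is dominated coordinatewise by the arithmetic mean, and $(\nabla^2\Psi(\*z))^{-1}=\mathrm{diag}(\*z)$ is coordinatewise monotone --- but that argument does not establish the lemma as stated for general $\Psi$.)
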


	\Cref{lem:osmde} is a consequence of \Cref{lem:gamma} and an upper bound for $\sum_{t=1}^T\E{\inner{X_t-\*e_a}{\hat{\ell}_t}}$. The latter is a standard regret bound for OSMD and a proof can be found in, e.g.\ ~\cite{ZL19} and references therein. 

We are now ready to prove \Cref{thm:regret}. It is proved by plugging our choice of potential function into the bound of \Cref{lem:osmde}.

\begin{proof}[Proof of \Cref{thm:regret}]
	Recall that we choose $\Psi(\*x)=\sum_{i\in[n]}\*x(i)\log\*x(i)$ for every $\*x\in\Delta_n$. 
	Direct calculation yields $D_\Psi(\Delta_n)\le \log n$, $(\nabla^2\Psi(\*z))^{-1}=\text{diag}(\*z(1),\dots,\*z(n))$ and $Y_t(i)=X_t(i)\cdot e^{-\eta\hat\ell_t(i)}\le X_t(i)$ for all $t$ and $i\in [n]$. 	Therefore
	\begin{align*}
		\E[A_t\sim \tilde{X}_t]{\sup_{\*z\in[{X}_t,Y_t]} \norm{\hat\ell_t}_{(\nabla^2\psi(\*z))^{-1}}^2}
		&=\E[A_t\sim \tilde{X}_t]{\sup_{\*z\in[{X}_t,Y_t]}\sum_{i=1}^n\frac{\ind{i\in N_{\text{out}}(A_t)}^2}{\tp{\sum_{j\in N_{\mathrm{in}}(i)}\tilde X_t(j)}^2}\cdot\ell_{t}(i)^2 \cdot \*z(i)}\\
		&\le \E{\sum_{i=1}^n\frac{X_{t}(i)}{\sum_{j\in N_{\mathrm{in}}(i)}\tilde X_t(j)}}.
	\end{align*}
	It remains to lower bound $\sum_{j\in N_{\mathrm{in}}(i)}\tilde X_t(j)$ for every $i\in[n]$, which is the probability that the arm $i$ is observed at the $t$-th round. We require that the probability is not too small compared to $X_t(i)$ for every $i\in[n]$. Recall $U=\{i\notin N_{\text{in}}(i)\}$ denotes the set of vertices without self-loops. Then for every $i\not\in U$, the self-loop on $i$ guarantees that the chance for $i$ to be observed is comparable to $X_t(i)$. On the other hand, if $i\in U$, we use our additional exploration term $\gamma\cdot\*u$ to lower bound the probability.
	
	    It is clear that $\gamma\leq\frac{1}{2}$ by our choice of $\gamma$ and $T$. So the contribution of vertices in $V\setminus U$ is 
	 \begin{equation}\label{eqn:V-U}
		\sum_{i\notin U}\frac{X_t(i)}{\sum_{j\in N_{\mathrm{in}}(i)}\tilde X_t(j)}=\sum_{i\notin U}\frac{X_t(i)}{\sum_{j\in N_{\text{in}}(i)}\tp{(1-\gamma)\cdot X_{t}(j)+\gamma\cdot \*u(j)}}\leq\sum_{i\notin U}\frac{1}{1-\gamma}\leq 2n\,.
	 \end{equation}
	The contribution of vertices in $U$ is
	\begin{equation}\label{eqn:U}
	\sum_{i\in U}\frac{X_t(i)}{\sum_{j\in N_{\mathrm{in}}(i)}\tilde X_t(j)}\leq \sum_{i\in U}\frac{X_t(i)}{\gamma\sum_{j\in N_{\text{in}}(i)}\*u(j)}\stackrel{(\heartsuit)}{\leq}\frac{\sum_{i\in U}X_{t}(i)\cdot \delta^*(G)}{\gamma}\leq\frac{\delta^*(G)}{\gamma}\,,
	\end{equation}
	where $(\heartsuit)$ is due to the first constraint of the linear program $\mathscr{P}$ and our definition of $\*u$.
		
	Combining~\eqref{eqn:V-U} and~\eqref{eqn:U}, we obtain
	\begin{equation}\label{eqn:regret}
		R(G,T)\leq\frac{\log n}{\eta}+\eta nT+\frac{\eta\delta^*(G)T}{2\gamma}+\gamma T\,.
	\end{equation}

	The theorem follows by plugging in our parameters $\gamma=\tp{\frac{\delta^*(G)\log n}{T}}^{1/3}$, $\eta=\frac{\gamma^2}{\delta^*(G)}$ and assuming $T\geq n^3\log n/{\delta^*}(G)^2$.
\end{proof}

\section{Lower Bounds}\label{sec:lb}

In this section, we prove several lower bounds for the regret in terms of different graph parameters. All the lower bounds obtained in this section are based on a \emph{meta lower bound} (\Cref{thm:meta-lb}) via the notion of \emph{$k$-packing independent set}.

\begin{definition}
	Let $G=(V,E)$ be a directed graph and $k\in\^N$. A set $S\subseteq V$ is a \emph{$k$-packing independent set} of $G$ if
	\begin{itemize}
		\item $S$ is an independent set;
		\item for any $v\in V$, it holds that $\abs{N_{\-{out}}(v)\cap S}\le k$.
	\end{itemize}
\end{definition}

Intuitively, if a graph contains a large $k$-packing independent set $S$, then one can construct a hard instance as follows:
\begin{itemize}
	\item All arms in $V-S$ are bad, say with loss $1$;
	\item All arms in $S$ have loss $\-{Ber}\tp{\frac{1}{2}}$ except a special one with loss $\-{Ber}\tp{\frac{1}{2}-\eps}$.
\end{itemize}
Then any algorithm with low regret must successfully identify the special arm from $S$ without observing arms in $S$ much (since each observation of arms in $S$ comes from pulling $V-S$, which costs a penalty at least $\frac{1}{2}$ in the regret), and we can tweak the parameters to make this impossible. A similar idea already appeared in \cite{ACDK15}. However, we will formally identify the problem of minimizing regret on this family of instances with the problem of \emph{best arm identification}. Therefore, stronger lower bounds can be obtained using tools from information theory.

\begin{theorem}[Restate of \Cref{thm:meta-lb}]
	Let $G=(V,E)$ be a directed graph. If $G$ contains a $k$-packing independent set $S$ with $\abs{S}\ge 2$, then for any randomized algorithm and any time horizon $T$, there exists a sequence of loss functions such that the expected regret is $\Omega\tp{\tp{\max\set{\frac{|S|}{k},\log |S|}}^{\frac{1}{3}}\cdot T^{\frac{2}{3}}}$. 
\end{theorem}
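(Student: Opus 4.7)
The plan is to construct a family of stochastic bandit instances that exploits the packing structure of $S$, and then reduce the regret lower bound to a best-arm-identification problem on $|S|$ arms, which I handle via the information-theoretic tools of \Cref{sec:pre} (Fano's inequality, \Cref{lem:fano}, and tensorization, \Cref{lem:tensor}). Fix a parameter $\eps\in(0,1/4)$ to be tuned later and draw $a^*$ uniformly at random from $S$. In the constructed instance, every arm $v\in V\setminus S$ deterministically incurs loss $1$; every arm $v\in S\setminus\set{a^*}$ yields independent $\Ber{1/2}$ losses; and the distinguished arm $a^*$ yields $\Ber{1/2-\eps}$ losses, so the unique optimal arm is $a^*$.

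\textbf{Regret decomposition and the role of packing.} Writing $\tau_a$ for the number of pulls of arm $a$ and $M=\sum_{v\notin S}\tau_v$ for the total number of pulls outside $S$, an elementary computation of the per-round expected loss gives
\[
R \ge \eps\bigl(T-\E{\tau_{a^*}}\bigr)+\tfrac{1}{2}\E{M}.
\]
The packing property now enters: because $S$ is independent and (by definition) contains no self-loops, pulls of arms in $S$ reveal no loss of any arm in $S$; because $\abs{N_{\-{out}}(v)\cap S}\le k$ for every $v\in V$, each pull outside $S$ reveals at most $k$ losses of $S$-arms. Hence the total number of $S$-observations across the $T$ rounds is bounded by $k\cdot M$, so the ``information budget'' available to the algorithm scales with $M$.

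\textbf{Information-theoretic core.} I will argue two branches, one for each side of the $\max$. For the $\log|S|$ branch, define a randomized estimator $\hat a\in S$ by sampling $\hat a=a$ with probability $\tau_a/T$ and assigning the remaining mass arbitrarily to some fixed arm of $S$; then $\Pr{\hat a=a^*}\ge \E{\tau_{a^*}}/T$, so if $\E{\tau_{a^*}}\ge T/2$, \Cref{lem:fano} applied to the uniform prior forces $I(a^*;H_T)=\Omega(\log|S|)$ on the algorithm's full history $H_T$. Using the chain rule for mutual information and conditioning on the past so that each round's observation vector is independent of the past given $(A_t,a^*)$, then invoking \Cref{lem:tensor} within each round to split across the at most $k$ $S$-coordinates of that vector, a short KL-to-MI computation shows that each individual $S$-observation carries only $O(\eps^2/|S|)$ bits about $a^*$ under the uniform prior (the $1/|S|$ factor is what the prior averaging buys us). Summing over rounds gives $I(a^*;H_T)\le O\tp{k\eps^2\E{M}/|S|}$, whence $\E{M}=\Omega\tp{|S|\log|S|/(k\eps^2)}$. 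For the $|S|/k$ branch, a Le Cam--style change of measure compares $P_j$ (the instance with $a^*=j$) to the null $P_0$ in which every $S$-arm is $\Ber{1/2}$: the standard divergence decomposition yields $\KL(P_j\|P_0)\le O\tp{\eps^2\E[P_j]{N_j^{\-{obs}}}}$, and a pigeonhole selection of $j\in S$ simultaneously controlling the $P_0$-moments of $\tau_j$ and $N_j^{\-{obs}}$, followed by Pinsker's inequality applied to the event $\set{\tau_j\ge T/2}$, forces $\E{M}=\Omega(|S|/(k\eps^2))$. Plugging either bound back into the regret decomposition and optimizing $\eps=\Theta((\zeta/T)^{1/3})$ for $\zeta\in\set{|S|/k,\,\log|S|}$ yields $R=\Omega(\zeta^{1/3}T^{2/3})$; taking the larger of the two gives the claim.

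\textbf{Main obstacle.} The chief technical hurdle will be rigorously bounding $I(a^*;H_T)$ for an adaptive algorithm: the arm pulled in round $t$ depends on past observations, which themselves depend on $a^*$, so observations across rounds cannot be treated as independent a priori. I would overcome this by conditioning on the past before invoking \Cref{lem:tensor}, exploiting the fact that once the pulled arm is fixed, the observations at the current round are fresh Bernoulli draws conditionally independent of the past given $a^*$. A secondary subtlety, in the $|S|/k$ branch, is the pigeonhole that picks a single ``hard'' arm $j$ for the change-of-measure step while simultaneously controlling both $\E[P_0]{\tau_j}$ and $\E[P_0]{N_j^{\-{obs}}}$.
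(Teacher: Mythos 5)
Your overall strategy is the same as the paper's: the identical family of hard instances (loss $1$ off $S$, $\Ber{\frac{1}{2}}$ on $S$ except one $\Ber{\frac{1}{2}-\eps}$ arm), the same regret decomposition isolating pulls outside $S$, and the same two-branch reduction to best-arm identification --- an MT04-style change of measure for the $|S|/k$ term and Fano plus tensorization for the $\log|S|$ term. The $|S|/k$ branch as you sketch it is essentially the paper's \Cref{lem:BAI-hard-1} and is fine.

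The genuine gap is in the $\log|S|$ branch, precisely at the step you flag as the main obstacle but whose proposed fix does not work. You claim each individual $S$-observation carries $O(\eps^2/|S|)$ bits about $a^*$, attributing the $1/|S|$ factor to ``prior averaging,'' and conclude $I(a^*;H_T)\le O\tp{k\eps^2\E{M}/|S|}$, hence $\E{M}=\Omega\tp{|S|\log|S|/(k\eps^2)}$. The $1/|S|$ factor is valid only when the law of $a^*$ \emph{conditional on the history} is (near-)uniform, which is exactly what adaptivity destroys: after conditioning on $H_{t-1}$, the correct bound is $I(a^*;w_i\mid H_{t-1}=h)\le \Pr{a^*=i\mid h}\cdot O(\eps^2)$, and an adaptive algorithm can steer its observations toward arms on which its posterior has concentrated, so the per-observation information can be as large as $O(\eps^2)$. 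Summing over the at most $k$ arms observed in a round, the posterior masses add to at most $1$, so the bound that actually survives adaptivity is $I(a^*;H_T)\le O(\eps^2\,\E{M})$, with no $k/|S|$ factor. Your stated conclusion $\E{M}=\Omega(|S|\log|S|/(k\eps^2))$ is in fact false in general: for $k=1$ it would yield a regret lower bound $\Omega\tp{(\tfrac{|S|}{k}\log|S|)^{1/3}T^{2/3}}$ (a \emph{product} rather than a max), whereas adaptive elimination-style strategies identify the best arm with $O(|S|/\eps^2)$ single-arm observations, matching \Cref{lem:BAI-hard-1} without the extra $\log|S|$.

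The repair is easy and leaves your theorem intact: with the correct bound $I(a^*;H_T)\le O(\eps^2\,\E{M})$, Fano still forces $\E{M}=\Omega(\log|S|/\eps^2)$ in the branch where $\E{\tau_{a^*}}\ge T/2$, and plugging this into your regret decomposition with $\eps=\Theta((\log|S|/T)^{1/3})$ gives the desired $\Omega\tp{(\log|S|)^{1/3}T^{2/3}}$. The paper takes a slightly different route to the same end: it couples the adaptive algorithm to one in which every pull outside $S$ reveals \emph{all} of $S$, reducing to a non-adaptive observation schedule (\Cref{lem:BAI-hard-2}) where the uniform-prior computation $I(J;w_1^{(1)})=O(\eps^2/|S|)$ is legitimate because which coordinates are observed no longer depends on the data. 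Either fix works; what you cannot do is keep the $1/|S|$ factor while also charging only $k$ observations per external pull for an adaptive learner.
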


\begin{remark}
	If the maximum independent set of the graph $G$ is of size one and $G$ is ``weakly observable'', then it has been shown in~\cite{ACDK15} that $R(G,T)=\Omega\tp{T^{\frac{2}{3}}}$ for any algorithm. If the graph has no independent set, which means each vertex contains a self-loop, then the graph is ``strongly observable'' and its regret can be $O(T^{\frac{1}{2}})$. In particular, the problem of vanilla $n$-armed bandits falls into this case.
\end{remark}

We delay the proof of \Cref{thm:meta-lb} to \Cref{sec:meta-lb} and discuss some of its consequences in the remaining of this section. We recover and strengthen the lower bound based on the (integral) domination number of \cite{ACDK15} in \Cref{sec:alon}. Then we prove \Cref{thm:dual-lb} in \Cref{sec:dual-lb} and discuss some of its useful corollaries. Finally, we discuss in \Cref{sec:instance-lb} when our lower bounds are optimal.

Our main technical contribution here is that we relate $\delta^*$ to the lower bound as
well. This is achieved via applying the strong duality theorem of linear programming and using a new rounding
method to construct hard instances from fractional solutions of the dual linear programming. This approach towards
the lower bounds is much cleaner and in many cases stronger than previous ones in \cite{ACDK15}. To the best of our knowledge, the method is new to the community of bandits algorithms

\subsection{Lower Bound via the (Integral) Weak Domination Number}\label{sec:alon}

We first use \Cref{thm:meta-lb} to recover and strengthen lower bounds in \cite{ACDK15}. Let $G=(V,E)$ be a directed graph and $U\subseteq V$ be the set of vertices without self-loops. 	

The weakly dominating set of $U$ is a set $S\subseteq V$ such that for every $u\in U$, there exists some $v\in S$ with $(v,u)\in E$.  The weak domination number, denoted by $\delta(G)$, is the size of the minimum weakly dominating set of $U$. In fact, $\delta(G)$ is the optimum of the integral restriction of the linear program $\mathscr{P}$ in \Cref{sec:lp}:
\begin{center}
	\begin{minipage}{0.6\textwidth}
		\begin{alignat}{3}
			\notag &\text{minimize} & \sum_{i\in V} x_i &\\
			\tag{$\mathscr{P}'$}&\text{subject to} \quad &\sum_{i\in N_{\mathrm{in}}(j)}  x_i\ge 1, &\quad \forall j\in U\,,\\
			\notag& &\quad x_i\in\set{0,1}, &\quad \forall i\in V\,.
		\end{alignat}
	\end{minipage}
\end{center}

\smallskip

The following structural lemma was proved in \cite{ACDK15}
\begin{lemma}
\label{lem:alonpacking}	The graph $G$ contains a $(\log n)$-packing independent set $S$ of size at least $\frac{\delta(G)}{50\log n}$.
\end{lemma}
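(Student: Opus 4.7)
The plan is to combine LP duality with randomized rounding of a dual fractional vertex packing. First I would invoke the classical $\Theta(\log n)$ integrality gap for set cover applied to $\mathscr{P}$: this yields $\delta(G)\le (1+\log n)\,\delta^*(G)$, and strong duality gives $\delta^*(G)=\zeta^*(G)$. Let $y^*$ be an optimal solution of the dual $\mathscr{D}$, so $\sum_{j\in U}y^*_j=\zeta^*(G)\ge \delta(G)/(c\log n)$ for an absolute constant $c$, while dual feasibility reads $\sum_{j\in N_{\mathrm{out}}(v)\cap U}y^*_j\le 1$ for every $v\in V$.

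Next I would form a random $S_0\subseteq U$ by independently including each $j\in U$ with probability $p_j=y^*_j/C$ for a sufficiently large absolute constant $C$. Three expectations are immediate from linearity and the dual constraints: $\E{|S_0|}=\delta^*(G)/C$; for every $v\in V$, $\E{|N_{\mathrm{out}}(v)\cap S_0|}\le 1/C$; and the expected number of directed edges with both endpoints in $S_0$ equals $\sum_{(u,w)\in E,\,u,w\in U}y^*_uy^*_w/C^2\le \delta^*(G)/C^2$, where the inequality uses the $\mathscr{D}$-constraint at each $u\in U$ to bound the inner sum by $1$.

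The third step applies three tail bounds. Chebyshev's inequality gives $|S_0|\ge \delta^*(G)/(2C)$ with constant probability (WLOG $\delta(G)>50\log n$, else the claim is trivial). A Chernoff tail $\Pr{X\ge t}\le (e\mu/t)^t$ with $\mu\le 1/C$ and $t=\log n$, followed by a union bound over $|V|\le n$ vertices, forces $|N_{\mathrm{out}}(v)\cap S_0|\le \log n$ simultaneously for every $v\in V$ with probability close to $1$. Markov's inequality caps the number of internal edges of $S_0$ at $O(\delta^*(G)/C^2)$. Deleting one endpoint per internal edge produces an \emph{independent} set $S$ that still satisfies the $(\log n)$-packing condition, and $|S|\ge \delta^*(G)/(2C)-O(\delta^*(G)/C^2)=\Omega(\delta^*(G))\ge \delta(G)/(50\log n)$ once $C$ is a sufficiently large constant.

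The main obstacle is essentially bookkeeping: calibrating $C$ and the slacks in the three probabilistic estimates so that all three events occur simultaneously with positive probability and so that the constants line up to yield the explicit factor $1/50$. No conceptual subtlety beyond the rounding and cleanup steps is involved, but keeping the three events loosely coupled through the single parameter $C$ requires care.
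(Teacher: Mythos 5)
The paper itself gives no proof of \Cref{lem:alonpacking}; it is imported from \cite{ACDK15}, so there is no in-paper argument to compare against and your derivation has to stand on its own. Its skeleton is reasonable and is a genuinely different route from the direct combinatorial argument of \cite{ACDK15}: pass to the fractional packing $\mathscr{D}$ via the $O(\log n)$ set-cover integrality gap and strong duality, round it, and clean up. The three expectation computations (size of $S_0$, per-vertex out-degree into $S_0$, number of internal edges) are all correct consequences of dual feasibility, and deleting one endpoint per internal edge does preserve the packing property since it is monotone under taking subsets.

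The gap is in the calibration, and it is not mere bookkeeping: the prescription ``take $C$ sufficiently large'' is self-defeating. Chebyshev fails with probability $4C/\delta^*$, and the reduction to $\delta>50\log n$ only guarantees $\delta^*\ge\delta/(1+\log n)\approx 50$, so in the critical regime $\delta^*=\Theta(1)$ you need $C$ to be a \emph{small} constant for this probability to be bounded away from $1$ --- and the Markov event already costs another constant, while the Chernoff-plus-union-bound term $n\cdot\tp{e/(C\log n)}^{\log n}=n^{2-\log C-\log\log n}$ is $o(1)$ for constant $C$ only once $n$ is large. Simultaneously, the target $\delta^*/(2C)-O(\delta^*/C^2)\ge\delta/(50\log n)$ together with $\delta\le(1+\log n)\delta^*$ forces $2C<50$, another \emph{upper} bound on $C$, and for such small $C$ the subtracted $O(\delta^*/C^2)$ term is not negligible. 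As written, the three failure probabilities are not shown to sum to less than $1$ when, say, $\delta\approx 51\log n$, and the constant $50$ cannot come out of this accounting. A clean repair is to drop Chebyshev and Markov in favor of a single alteration argument: bound $\E{|S_0|-e(S_0)}\ge\delta^*/C-\delta^*/C^2$ where $e(S_0)$ is the number of internal edges, restrict the expectation to the high-probability Chernoff event (whose complement contributes at most $n\cdot n^{2-\log C-\log\log n}=o(1)$), and select one good outcome. This yields a $(\log n)$-packing independent set of size $\Omega(\delta^*)=\Omega(\delta/\log n)$, which suffices for every use of the lemma in this paper, but it proves the statement only with ``$50$'' replaced by a larger absolute constant and for $n$ above a threshold.
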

Applying \Cref{lem:alonpacking} to \Cref{thm:meta-lb}, we obtain 

\begin{theorem}\label{thm:strong-alon} 
	If $G$ is weakly observable, then for any algorithm and any sufficiently large time horizon $T\in\^N$, there exists a sequence of loss functions such that
	\[
	R(G,T)=\Omega\tp{\max\set{\frac{\delta(G)}{\log^2 n},\log\tp{\frac{\delta(G)}{\log n}}}^{1/3}\cdot T^{2/3}}\,.
	\]
 \end{theorem}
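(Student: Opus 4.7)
The statement is essentially a direct corollary of the two results already in hand: the structural Lemma \ref{lem:alonpacking} (a $(\log n)$-packing independent set of size $\ge \delta(G)/(50\log n)$ exists in any weakly observable graph), and the meta lower bound Theorem \ref{thm:meta-lb}, which converts such a packing into a regret lower bound. So my plan is simply to chain the two and then simplify the resulting expression.

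First, since $G$ is weakly observable, I invoke Lemma \ref{lem:alonpacking} to obtain a $k$-packing independent set $S \subseteq V$ with $k = \lceil \log n \rceil$ and
\[
 |S| \;\ge\; \frac{\delta(G)}{50\log n}.
\]
Next, assuming $|S|\ge 2$ (I'll handle the other case separately), I apply Theorem \ref{thm:meta-lb} to this particular $S$. That gives a sequence of loss functions against which every randomized algorithm has expected regret
\[
 \Omega\!\left(\max\!\left\{\tfrac{|S|}{k},\;\log|S|\right\}^{1/3}\cdot T^{2/3}\right).
\]
Substituting $k = \Theta(\log n)$ and the lower bound on $|S|$, we get $|S|/k = \Omega(\delta(G)/\log^2 n)$ and $\log|S| = \Omega(\log(\delta(G)/\log n))$ (absorbing the constant $50$ into the $\Omega$), which immediately yields the claimed bound.

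The remaining case $|S| < 2$ happens only when $\delta(G) < 100\log n$, in which case $\delta(G)/\log^2 n = O(1/\log n) = O(1)$ and $\log(\delta(G)/\log n) = O(1)$, so the entire quantity inside the $\max$ is $O(1)$. The theorem then reduces to the baseline claim $R(G,T) = \Omega(T^{2/3})$ for weakly observable $G$, which is already established in \cite{ACDK15} (and also noted in the Remark following Theorem \ref{thm:meta-lb}). Thus the bound holds in both regimes.

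The main obstacle here is not in the present corollary but upstream: all of the non-trivial work is carried by Lemma \ref{lem:alonpacking}, imported from \cite{ACDK15}, and by Theorem \ref{thm:meta-lb}, whose information-theoretic proof via Fano's inequality and the tensorization lemma is deferred to Section \ref{sec:meta-lb}. Given those two ingredients, Theorem \ref{thm:strong-alon} is essentially a one-line application of Theorem \ref{thm:meta-lb} to the packing independent set produced by Lemma \ref{lem:alonpacking}, together with the trivial case analysis above; the only thing worth being careful about is parameterizing the meta bound with the correct $k$ (here $k=\lceil\log n\rceil$, matching the packing guarantee of the lemma) so that both branches of the $\max$ carry through the substitution cleanly.
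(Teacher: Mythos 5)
Your proposal is correct and matches the paper's argument exactly: the paper likewise obtains \Cref{thm:strong-alon} by feeding the $(\log n)$-packing independent set of size at least $\delta(G)/(50\log n)$ from \Cref{lem:alonpacking} into \Cref{thm:meta-lb} and simplifying. Your extra handling of the $|S|<2$ regime via the baseline $\Omega(T^{2/3})$ bound is a reasonable bit of care that the paper leaves implicit.
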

 
 Note that the bound in~\cite{ACDK15} is $R(G,T)=\Omega\tp{\max\set{\frac{\delta(G)}{\log^2 n},1}^{1/3}\cdot T^{2/3}}$. \Cref{thm:strong-alon} outperforms when $\omega(\log n)<\delta(G)<o(\log^2n\cdot \log\log n)$.

\subsection{Lower Bound via the Linear Program Dual}\label{sec:dual-lb}

In this section, we use \Cref{thm:meta-lb} to derive lower bounds in terms of $\delta^*(G)$. Recall the linear program $\mathscr{D}$ in \Cref{sec:lp} whose optimum is $\zeta^*(G) = \delta^*(G)$ by the strong duality theorem of linear programming. Consider its integral restriction $\mathscr{D}'$:

\begin{center}
	\begin{minipage}{0.6\textwidth}
		\begin{alignat}{3}
			\notag &\text{maximize} & \sum_{j\in U} y_j &\\
			\tag{$\mathscr{D}'$} &\text{subject to} \quad & \sum_{j\in N_{\mathrm{out}}(i)\cap U}  y_j&\le 1, &\quad \forall i\in V\\
			\notag & & y_j\in\set{0,1}, &\quad \forall j\in U
		\end{alignat}
	\end{minipage}
\end{center}

For every feasible solution $\set{\hat y_j}_{j\in U}$ of $\mathscr{D}'$, the set $S\defeq \set{j\in U\cmid \hat y_j=1}$ is called a \emph{vertex packing set} on $U$. It enjoys the property that for every $i\in V$, $\abs{N_{\-{out}}(i)\cap S}\le 1$.

Let $\zeta(G)$ be the optimum of $\mathscr{D}'$, namely the size of the maximum vertex packing set on $U$. Let $\alpha \defeq \frac{\zeta^*(G)}{\zeta(G)}$ be the integrality gap of $\mathscr{D}$. In the following, we will write $\delta^*, \delta, \zeta^*,\zeta$ for $\delta^*(G),\delta(G),\zeta^*(G),\zeta(G)$ respectively if the graph $G$ is clear from the context.

We now prove \Cref{thm:dual-lb}, which is restated here. 

\begin{theorem}[Restate of \Cref{thm:dual-lb}] If $G$ is weakly observable, then for any algorithm and any sufficiently large time horizon $T\in\^N$, there exists a sequence of loss functions such that
	\[
		R(G,T)=\Omega\tp{\tp{\frac{\delta^*}{\alpha}}^{1/3}\cdot T^\frac{2}{3}}\,.
	\]
\end{theorem}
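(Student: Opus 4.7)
The plan is to convert the fractional dual optimum $\zeta^* = \delta^*$ (by strong duality) into a combinatorial $1$-packing independent set, then invoke the meta lower bound \Cref{thm:meta-lb}. First I would take an optimal integer solution of $\mathscr{D}'$, giving a vertex packing set $S\subseteq U$ with $|S| = \zeta \ge \zeta^*/\alpha = \delta^*/\alpha$. The defining constraint of $\mathscr{D}'$ directly yields $|N_{\mathrm{out}}(v)\cap S|\le 1$ for every $v\in V$ (noting $S\subseteq U$), so $S$ already has the $1$-packing property; the only thing missing is that $S$ need not be independent in $G$, because the constraint does not forbid edges internal to $S$.

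The key technical step is to extract an independent subset $S'\subseteq S$ of size $\ge |S|/3$. The $1$-packing condition, applied to every $v\in S$, forces the induced subgraph $G[S]$ to have out-degree at most $1$ at each vertex, so its underlying undirected graph has at most $|S|$ edges. A graph on $n$ vertices with at most $n$ edges is a pseudoforest, and pseudoforests are $3$-colorable, so one of the three color classes furnishes an independent set $S'\subseteq S$ with $|S'|\ge |S|/3\ge \delta^*/(3\alpha)$. Crucially, shrinking $S$ to $S'$ only decreases each intersection $N_{\mathrm{out}}(v)\cap\cdot$, so $|N_{\mathrm{out}}(v)\cap S'|\le 1$ for all $v\in V$ is preserved and $S'$ is now a bona fide $1$-packing independent set.

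With this $S'$ in hand, applying \Cref{thm:meta-lb} with $k=1$ immediately yields $R(G,T) = \Omega(|S'|^{1/3}T^{2/3}) = \Omega((\delta^*/\alpha)^{1/3}T^{2/3})$ whenever $|S'|\ge 2$, i.e.\ whenever $\delta^*/\alpha$ exceeds a universal constant. In the residual regime $\delta^*/\alpha = O(1)$ the target bound degenerates to $R(G,T) = \Omega(T^{2/3})$, which holds for every weakly observable graph: either by the remark following \Cref{thm:meta-lb} when $G$ has no independent set of size $\ge 2$, or by applying \Cref{thm:meta-lb} itself to any two-element independent set and reading off the $\log|S|$ term. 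The sufficient-largeness hypothesis on $T$ is inherited from \Cref{thm:meta-lb}. The main obstacle I anticipate is precisely the passage from a vertex packing set to a $1$-packing independent set with only constant loss; since the $1$-packing property is preserved by taking subsets, everything hinges on the pseudoforest/$3$-coloring observation above, after which the theorem follows mechanically from strong duality and the meta lower bound.
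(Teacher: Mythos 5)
Your proposal is correct and follows the same overall route as the paper: rewrite the target bound as $\Omega(\zeta^{1/3}T^{2/3})$ via $\delta^*=\zeta^*=\alpha\zeta$, take an optimal integral vertex packing set $S$ from $\mathscr{D}'$, extract from it a $1$-packing \emph{independent} set of size at least $|S|/3$ (using that the $1$-packing property is inherited by subsets), and feed that into \Cref{thm:meta-lb}. The only genuine difference is how the independent subset is extracted. The paper peels greedily: repeatedly pick $v\in S'$ of minimum in-degree in $G[S']$, observe that since every vertex of $S'$ has out-degree at most one in $G[S']$ the minimum in-degree is at most one, and remove $N_{\mathrm{in}}(v)\cup\{v\}\cup N_{\mathrm{out}}(v)$ (at most three vertices) per step. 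You instead observe that $G[S]$ admits an orientation with out-degree at most one, conclude its underlying undirected graph is a pseudoforest, and take the largest of three color classes. Both arguments lose exactly the factor $3$ and are comparably elementary; yours is arguably a cleaner appeal to a standard structural fact, while the paper's greedy is self-contained. Your treatment of the residual regime $|S'|<2$ (falling back to the $\Omega(T^{2/3})$ bound for weakly observable graphs) is in fact slightly more careful than the paper, which leaves that case implicit.

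One step as written is imprecise, though easily repaired: ``a graph on $n$ vertices with at most $n$ edges is a pseudoforest'' is false in general (e.g.\ $K_4$ together with isolated vertices has $6$ vertices and $6$ edges but two independent cycles in one component, and is not $3$-colorable). What you actually have is stronger and does suffice: every vertex of $G[S]$ has out-degree at most one, and this property is hereditary, so every subgraph of the underlying undirected graph has at most as many edges as vertices --- equivalently, $G[S]$ is orientable with out-degree at most one --- which is a standard characterization of pseudoforests (and shows $2$-degeneracy, hence $3$-colorability). State the pseudoforest claim via the orientation rather than via the global edge count and the argument is complete.
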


\begin{proof}
Since $\delta^* = \zeta^* = \alpha\cdot \zeta$, the bound in \Cref{thm:dual-lb} is equivalent to $R(G,T)=\Omega\tp{\zeta^{\frac{1}{3}}\cdot T^{\frac{2}{3}}}$. We will prove that $G$ contains a $1$-packing independent set of size $\Theta(\zeta)$, then the theorem follows from \Cref{thm:meta-lb}.

Let $\set{y_j^{\dagger}}_{j\in U}$ be the optimal solution of the integral program $\mathscr{D}'$. Let $S^\dagger\defeq \set{j\in U\cmid y_j^\dagger =1}$ be the corresponding vertex packing set. Then clearly $\zeta = \abs{S^\dagger}$. We will show that there exists a $1$-packing independent set $H\subseteq S^\dagger$ with $\abs{H}\ge \abs{S^\dagger}/3$.

We use the following greedy strategy to construct $H$.

	\begin{itemize}
		\item \textsc{Initialization:} Set $H=\emptyset$ and $S^\prime=S^\dagger$.
		\item \textsc{Update:} While $S' \neq\emptyset$: Pick a vertex $v\in S'$ with minimum $\abs{N_{\-{in}}(v)\cap S'}$;  Set  $H\gets H\cup\{v\}$; $S'\gets S'\setminus \tp{N_\-{in}(v)\cup \{v\}\cup N_\-{out}(v)}$.
	\end{itemize}
	First of all, the set $H$ constructed above must be an independent set as whenever we add some vertex $v$ into $H$, we remove all its incident vertices, both its in-neighbors and out-neighbors, from $S'$. 	Clearly each $S'$ after removing these vertices is still a vertex packing set. Therefore, every $v\in S'$ has out-degree at most one in $G[S']$. This implies that the vertex $v\in S'$ with minimum $\abs{N_\-{in}(v)\cap S'}$, or equivalently minimum in-degree in $G[S']$, satisfies $\abs{N_\-{in}(v)\cap S'}\le 1$. So the update step $S'\gets S'\setminus \tp{N_\-{in}(v)\cup \{v\}\cup N_\-{out}(v)}$ removes at most three vertices from $S'$. This concludes that $H$ is a $1$-packing independent set of size at least $\abs{S^\dagger}/3$.
\end{proof}	

\Cref{thm:dual-lb} is less informative if we do not know how large the integrality gap $\alpha$ is. On the other hand, the integrality gap of linear programs for packing programs has been well-studied in the literature of approximation algorithms. The following bound due to~\cite{Sri99} is tight for general graphs.

\begin{lemma}[\cite{Sri99}]\label{lem:general-gap}
	For any directed graph $G$, the integrality gap $\alpha=O\tp{n/\delta^*}$.
\end{lemma}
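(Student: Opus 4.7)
My plan is to establish $\zeta = \Omega((\delta^*)^2/n)$ by randomized rounding with alteration on an optimal fractional solution of $\mathscr{D}$; combined with strong duality $\zeta^* = \delta^*$, this yields $\alpha = \delta^*/\zeta = O(n/\delta^*)$ immediately.

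First I would fix an optimal solution $\set{y_j^*}_{j\in U}$ of $\mathscr{D}$ and a parameter $p\in(0,1]$ to be tuned at the end. Include each $j\in U$ independently in a candidate set $S_0$ with probability $p\cdot y_j^*$, so that $\mathbb{E}\abs{S_0}=p\delta^*$. For each $i\in V$ set $X_i\defeq\abs{N_{\mathrm{out}}(i)\cap S_0}$; feasibility of $y^*$ gives $\mathbb{E}\left[X_i\right]=p\sum_{j\in N_{\mathrm{out}}(i)\cap U}y_j^*\le p$. I would then define the \emph{conflict graph} $H$ on vertex set $S_0$ by placing an edge between $j$ and $j'$ whenever $N_{\mathrm{in}}(j)\cap N_{\mathrm{in}}(j')\ne\varnothing$ --- precisely the condition for $\set{j,j'}$ to jointly violate some constraint of $\mathscr{D}'$. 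Every edge of $H$ is witnessed by at least one $i\in V$ with $j,j'\in N_{\mathrm{out}}(i)\cap U$, whence $\abs{E(H)}\le\sum_{i\in V}\binom{X_i}{2}$. Independence and feasibility then give $\mathbb{E}\left[\binom{X_i}{2}\right]=\tfrac12 p^2\sum_{j\ne j'}y_j^*y_{j'}^*\le\tfrac12 p^2\left(\sum_{j\in N_{\mathrm{out}}(i)\cap U}y_j^*\right)^2\le p^2/2$, so $\mathbb{E}\abs{E(H)}\le np^2/2$.

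Next I would invoke the elementary fact that every graph has an independent set of size at least (vertices) $-$ (edges), obtained by iteratively deleting one endpoint of each remaining edge. Applied to $H$, this produces a random subset $S\subseteq S_0$ that is, by construction of $H$, a feasible integer vertex packing satisfying
\[
\mathbb{E}\abs{S}\ge p\delta^*-np^2/2.
\]
Setting $p=\delta^*/n$ (at most $1$ since $\delta^*\le n$) maximizes this bound and yields $\mathbb{E}\abs{S}\ge(\delta^*)^2/(2n)$. The probabilistic method then produces an integer packing of this size, so $\zeta\ge(\delta^*)^2/(2n)$, hence $\alpha\le 2n/\delta^*$, as claimed.

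I expect the main obstacle to be the bookkeeping in the cleanup step: a naive argument that charges one deletion per violated constraint would overcount, because a single conflicting pair $\set{j,j'}$ may be witnessed by many common in-neighbors $i$, and a single deletion can resolve multiple violations simultaneously. Routing the analysis through the edges of the conflict graph $H$ sidesteps this issue cleanly. Apart from this subtlety, the proof is a textbook randomized-rounding computation; the constants can be tightened but the $O(n/\delta^*)$ asymptotic matches the tight examples of~\cite{Sri99}.
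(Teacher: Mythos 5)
Your argument is correct. The paper itself gives no proof of this lemma---it simply cites \cite{Sri99}, which establishes bounds of this type for general packing integer programs via randomized rounding---so you have supplied a self-contained derivation where the paper relies on a black-box citation. Your route is the standard ``rounding with alteration'' argument specialized to the packing LP $\mathscr{D}$: the key observations all check out. The conflict graph $H$ is exactly right, since a set $S\subseteq U$ is feasible for $\mathscr{D}'$ if and only if no two of its elements share a common in-neighbor, so independence in $H$ is equivalent to feasibility; the bound $\abs{E(H)}\le\sum_{i\in V}\binom{X_i}{2}$ correctly handles the multiplicity of witnesses (in the safe direction, as an overcount); the second-moment computation $\E{\binom{X_i}{2}}\le p^2/2$ follows from independence and the constraint $\sum_{j\in N_{\mathrm{out}}(i)\cap U}y_j^*\le 1$; and the deletion step plus $p=\delta^*/n\le 1$ gives $\zeta\ge(\delta^*)^2/(2n)$, hence $\alpha=\delta^*/\zeta\le 2n/\delta^*$. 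Compared with invoking \cite{Sri99} wholesale, your proof is more elementary and makes the constant explicit, at the cost of not recovering the sharper exponents Srinivasan obtains for column-sparse or large-right-hand-side programs---but for the $O(n/\delta^*)$ bound actually used in \Cref{cor:gap-lb}, nothing more is needed. One cosmetic point: it is worth saying explicitly that $\zeta\ge 1$ (any singleton of $U$ is a feasible packing, and $U\ne\varnothing$ for weakly observable graphs), so that the ratio $\alpha$ is well defined even when $(\delta^*)^2/(2n)<1$.
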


Therefore we have 

\begin{corollary}\label{cor:gap-lb}
If $G$ is weakly observable, then for any algorithm and any sufficiently large time horizon $T\in\^N$, there exists a sequence of loss functions such that\[
	R(G,T)=\Omega\tp{\tp{\frac{\delta^*}{n/\delta^*}}^{\frac{1}{3}}\cdot T^{\frac{2}{3}}}\,.
\]
\end{corollary}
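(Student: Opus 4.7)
The plan is essentially a one-line substitution: combine the graph-parameter lower bound of Theorem \ref{thm:dual-lb} with the universal integrality-gap bound of Lemma \ref{lem:general-gap}. Since both ingredients have already been established in the excerpt, no new combinatorial or algorithmic construction is needed.

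First I would invoke Theorem \ref{thm:dual-lb}, which asserts that for any weakly observable $G$ and any sufficiently large $T$, there is a loss sequence forcing $R(G,T)=\Omega((\delta^*/\alpha)^{1/3}\cdot T^{2/3})$, where $\alpha$ is the integrality gap of the dual linear program $\mathscr{D}$. Then I would apply Lemma \ref{lem:general-gap}, which gives the universal upper bound $\alpha=O(n/\delta^*)$ valid for every directed graph. Substituting this into the denominator of $\delta^*/\alpha$ yields
\[
\frac{\delta^*}{\alpha}\;=\;\Omega\!\left(\frac{\delta^*}{n/\delta^*}\right),
\]
so the resulting regret lower bound becomes $\Omega((\delta^*/(n/\delta^*))^{1/3}\cdot T^{2/3})$, which is exactly the claimed form.

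There is really no obstacle in this derivation: the two quoted results already do all the work. The only thing worth checking is that the substitution preserves the $\Omega$-notation, which it does because the constant hidden in Lemma \ref{lem:general-gap} is absorbed into the $\Omega$ in Theorem \ref{thm:dual-lb} after raising to the $1/3$ power. The assumption that $G$ is weakly observable is the same hypothesis needed by Theorem \ref{thm:dual-lb}, and the qualifier on $T$ is inherited verbatim, so the corollary holds under precisely the stated conditions.
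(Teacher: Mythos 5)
Your proposal is correct and matches the paper's own derivation exactly: the paper states \Cref{cor:gap-lb} as an immediate consequence of \Cref{thm:dual-lb} combined with the universal bound $\alpha=O(n/\delta^*)$ from \Cref{lem:general-gap}, with the constant absorbed into the $\Omega$. Nothing further is needed.
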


The bound for the integrality gap in \Cref{lem:general-gap} holds for any graphs. It can be greatly improved for special graphs.

An interesting family of graphs with small $\alpha$ is those bounded degree graphs.  If the in-degree of every vertex in $U$ is bounded, we have the following bound for the integrality gap:

\begin{lemma}[\cite{BKNS12}]
	If the in-degree of every vertex in $U$ is bounded by a constant $\Delta$, then the integrality gap $\alpha\le 8\Delta$.
\end{lemma}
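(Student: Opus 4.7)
The plan is to give a randomized rounding scheme with alteration that converts any optimal fractional solution $\{y_j^*\}_{j\in U}$ of $\mathscr{D}$ into a feasible integer solution for $\mathscr{D}'$ of size $\Omega(\zeta^*/\Delta)$, thereby establishing $\alpha=\zeta^*/\zeta=O(\Delta)$. First, I would sample a random set $S_0\subseteq U$ by including each $j\in U$ independently with probability $p_j=y_j^*/(c\Delta)$ for a constant $c>1$ to be tuned; since $y_j^*\le 1$, these are legitimate probabilities. Then I would produce $S\subseteq S_0$ by discarding every $j\in S_0$ that participates in a conflict, i.e., every $j$ for which there exist $j'\in S_0\setminus\{j\}$ and $i\in V$ with $\{j,j'\}\subseteq N_{\mathrm{out}}(i)\cap U$. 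After this deletion, each constraint $|N_{\mathrm{out}}(i)\cap S|\le 1$ is satisfied by construction, so $S$ is a feasible vertex packing set and $\zeta\ge|S|$ deterministically.

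The core step is an expectation computation. For a fixed $j\in U$, the event that $j$ is discarded given $j\in S_0$ is contained in the union over $i\in N_{\mathrm{in}}(j)$ of the events ``some $j'\ne j$ in $N_{\mathrm{out}}(i)\cap U$ is in $S_0$''. Hence by a union bound and the LP constraint $\sum_{j'\in N_{\mathrm{out}}(i)\cap U} y_{j'}^*\le 1$,
\[
\Pr{j\text{ discarded}\mid j\in S_0}\le \sum_{i\in N_{\mathrm{in}}(j)}\sum_{j'\in N_{\mathrm{out}}(i)\cap U}p_{j'}=\frac{1}{c\Delta}\sum_{i\in N_{\mathrm{in}}(j)}\sum_{j'\in N_{\mathrm{out}}(i)\cap U}y_{j'}^*\le \frac{|N_{\mathrm{in}}(j)|}{c\Delta}\le \frac{1}{c},
\]
where the last inequality uses the in-degree hypothesis $|N_{\mathrm{in}}(j)|\le\Delta$ for $j\in U$. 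Consequently, $\E{|S|}\ge (1-1/c)\cdot \zeta^*/(c\Delta)$ by linearity, so some realization of $S$ attains at least this size. Choosing $c=2$ yields $\zeta\ge \zeta^*/(4\Delta)$, i.e., $\alpha\le 4\Delta$, which in particular is already sufficient to deduce \Cref{cor:bounded-degree}.

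The main obstacle I anticipate is matching the precise constant $8$ stated in the lemma: the qualitative $O(\Delta)$ bound from the randomized alteration is effortless, but obtaining BKNS12's exact constant likely requires either a sharper optimization (e.g., accounting more carefully for the $j'=j$ term excluded from the conflict sum, or tracking dependencies to improve the union bound) or a different deterministic rounding argument in the style of classical set-packing results. Feasibility of the rounded solution and the $O(\Delta)$ bound are both immediate consequences of the alteration rule and the $\mathscr{D}$-constraints, so no deeper technical difficulty is expected beyond bookkeeping to pin down the constant.
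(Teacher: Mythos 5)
Your argument is correct, and it is worth noting that the paper itself offers no proof of this lemma---it is imported as a black box from the cited reference, so there is nothing internal to compare against. What you have written is essentially the randomized-rounding-with-alteration technique that the citation (BKNS12, ``Solving packing integer programs via randomized rounding with alterations'') is built on, specialized to this particular packing LP: the feasibility claim is right because whenever two sampled vertices of $N_{\mathrm{out}}(i)\cap U$ collide, \emph{both} are deleted, so every constraint of $\mathscr{D}'$ holds deterministically; the conditional discard probability is correctly bounded by combining the union bound over $i\in N_{\mathrm{in}}(j)$ with the LP constraint $\sum_{j'\in N_{\mathrm{out}}(i)\cap U}y^*_{j'}\le 1$ and the in-degree hypothesis, and the independence of $\{j'\in S_0\}$ from $\{j\in S_0\}$ for $j'\ne j$ justifies conditioning. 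Your specialization in fact yields the sharper constant $\alpha\le 4\Delta$ (with $c=2$), which trivially implies the stated $8\Delta$; the larger constant in the reference comes from handling general $k$-column-sparse packing integer programs, none of whose extra machinery is needed here. The only cosmetic point is the degenerate case $\Delta=0$ (or $c\Delta<1$), where the constraints are vacuous and $\alpha=1$ anyway, so your probabilities $p_j=y_j^*/(c\Delta)$ are legitimate in all relevant cases. As a self-contained replacement for the citation, your proof is complete and, for the purposes of \Cref{cor:bounded-degree}, strictly sufficient.
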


\begin{corollary}
	Let $\Delta\in\^N$ be a constant and $\+G_\Delta$ be the family of graphs with maximum in-degree $\Delta$. Then for every weakly observable $G=(V,E)\in\+G_\Delta$, any algorithm and any sufficiently large time horizon $T\in\^N$,  there exists a sequence of loss functions such that
	\[
	R(G,T)=\Omega((\delta^*)^{\frac{1}{3}}\cdot T^{\frac{2}{3}})\,.
	\]
\end{corollary}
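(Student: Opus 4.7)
The plan is to derive this corollary as an essentially immediate consequence of \Cref{thm:dual-lb} combined with the integrality-gap bound for vertex packing on bounded in-degree graphs cited from \cite{BKNS12}. The key observation is that when $\Delta$ is a constant, the integrality gap $\alpha$ of the linear program $\mathscr{D}$ is also a constant, so the ratio $\delta^*/\alpha$ appearing in the lower bound of \Cref{thm:dual-lb} reduces to $\delta^*$ up to a constant factor.

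First, I would verify that the hypothesis of the cited \cite{BKNS12} bound is satisfied. Since $G\in\+G_\Delta$ has maximum in-degree $\Delta$, in particular every vertex in $U\subseteq V$ has in-degree at most $\Delta$, so the lemma applies and yields $\alpha\le 8\Delta$. Because $\Delta$ is treated as a fixed constant in the statement of the corollary, this means $\alpha=O(1)$, independent of $n$ and $\delta^*$.

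Next, I would invoke \Cref{thm:dual-lb} on the weakly observable graph $G$, which asserts that for any algorithm and any sufficiently large time horizon $T\in\^N$, there exists a loss sequence forcing regret $\Omega\tp{(\delta^*/\alpha)^{1/3}\cdot T^{2/3}}$. Substituting $\alpha=O(1)$ collapses this to $\Omega\tp{(\delta^*)^{1/3}\cdot T^{2/3}}$, which is exactly the claimed bound.

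I do not anticipate any real technical obstacle here: both ingredients are already in hand by the point at which the corollary is stated, and the argument amounts to plugging one bound into the other. The only conceptual subtlety worth flagging is that treating $\Delta$ as a constant is essential, because if $\Delta$ were allowed to scale with $n$, the lower bound would weaken by a factor of $\Delta^{1/3}$; moreover one should observe that \Cref{thm:dual-lb} is applicable in the first place, which it is, since $G$ being in $\+G_\Delta$ places no restriction preventing it from being weakly observable.
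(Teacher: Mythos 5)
Your proposal is correct and matches the paper's own (implicit) argument exactly: the paper derives this corollary by combining \Cref{thm:dual-lb} with the cited bound $\alpha\le 8\Delta$ from \cite{BKNS12}, treating $\Delta$ as a constant so that $\delta^*/\alpha=\Omega(\delta^*)$. Nothing is missing.
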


Next, we show the integrality gap of another broad family of graphs, the $1$-degenerate graphs, is $1$. Recall that we say a directed graph $G$ is $1$-degenerate if one can iteratively apply the following operators in arbitrary orders on $G$ to get an empty graph:
	\begin{itemize}
		\item Pick a vertex with in-degree one and remove the in-edge;
		\item Pick a vertex with in-degree zero and out-degree at most one, and remove both the vertex and the out-edge.
	\end{itemize}
It is easy to verify that trees, both directed and undirected, are $1$-degenerate. We prove that

\begin{lemma}
	For any $1$-degenerate directed graph, the integrality gap $\alpha=1$.
\end{lemma}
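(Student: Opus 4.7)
The plan is to prove the following generalized statement by induction: for every $1$-degenerate directed graph $H$, every $W\subseteq V(H)$, and every $I\subseteq V(H)$, the LP
\[
\max\sum_{j\in W}y_j \quad\text{s.t.}\quad \sum_{j\in N_{\mathrm{out}}(i)\cap W}y_j\le 1\ \forall i\in I,\quad 0\le y_j\le 1
\]
admits an integer optimum; the lemma follows by taking $H=G$, $W=U(G)$, $I=V(G)$ and invoking strong LP duality $\zeta^*=\delta^*$. The generalization is needed because the recursion below shrinks $W$ and $I$ without correspondingly modifying $H$.

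In the induction step I apply one reduction guaranteed by $1$-degeneracy. In the \emph{op~$2$} case, a vertex $v$ with in-degree $0$ and out-degree at most $1$ is removed: $y_v$ (if $v\in W$) appears in no constraint because $N_{\mathrm{in}}(v)=\varnothing$ and can be set to $1$, while the sole constraint indexed by $i=v$ has at most one term and is either vacuous or subsumed by the bound $y_w\le 1$. The LP therefore decouples and reduces to the analogous LP on $H-v$, which is $1$-degenerate by the hereditary property, so induction closes the case.

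In the \emph{op~$1$} case, pick $v$ with in-degree $1$ in $H$ and unique in-neighbor $u$. If $v\notin W$, the edge $(u,v)$ plays no role in the LP and we recurse on the $1$-degenerate graph $H-(u,v)$. If $v\in W$, the key structural fact is that $y_v$ appears in exactly one LP constraint---the one indexed by $i=u$---because $N_{\mathrm{in}}(v)=\{u\}$. A standard exchange then produces an optimum with $y_v=1$ and $y_j=0$ for every $j\in (N_{\mathrm{out}}(u)\cap W)\setminus\{v\}$: starting from any optimum $y^*$, this substitution keeps the $u$-constraint tight at $1\le 1$, only relaxes the other constraints (since $y_v$ is absent from them and zeroing helps), and shifts the objective by $(1-y_v^*)-\sum_{j\ne v}y_j^*\ge 0$ by the $u$-constraint. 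Fixing these values reduces to the same LP on $H$ with $W$ replaced by $W\setminus N_{\mathrm{out}}(u)$ and $I$ by $I\setminus\{u\}$; since $v\in N_{\mathrm{out}}(u)\cap W$ the set $W$ strictly shrinks, so induction gives an integer optimum $S'$ of the residual, and $\{v\}\cup S'$ is an integer optimum of the original LP.

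The main obstacle I anticipate is formalizing the induction (choosing a well-founded measure, e.g.\ lexicographic on $(|V(H)|,|E(H)|+|W|+|I|)$, that strictly decreases in every case above) and verifying the hereditary property that any induced subgraph of a $1$-degenerate graph is $1$-degenerate. The latter can be proved by a parallel induction on the reduction sequence of $H$: each op on $H$ either induces an analogous op on the induced subgraph $H[S]$ when its protagonist lies in $S$, or can be skipped when the protagonist is outside $S$, reducing to the induction hypothesis on a strictly smaller graph.
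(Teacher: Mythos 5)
Your proof is correct and follows essentially the same route as the paper: both walk through the $1$-degeneracy reduction sequence, set $y_v=1$ for the in-degree-one vertex $v$, zero out the remaining out-neighbours of its unique in-neighbour $u$, and use the LP constraint at $u$ to argue the objective does not decrease; you merely package this as an induction on a generalized LP rather than as a rounding of an arbitrary feasible solution. The only (trivially patched) loose end is the sub-case $v\in W$ but $u\notin I$, which your generalized statement admits even though it never arises when starting from $I=V$; there the exchange should simply set $y_v=1$ without zeroing the other variables.
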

\begin{proof}
	Let $G=(V,E)$ be a $1$-degenerate directed graph.  We show that we can construct a vertex packing set $S$ with $\abs{S}\ge \sum_{j\in U}\hat y_j$ for any feasible solution $\set{\hat y_j}_{j\in U}$ of $\mathscr{D}$. The lemma follows by applying this to the optimal solution.
		
	We use $\*y_S\in\set{0,1}^U$ to denote the indicator vector of $S$. So we have for every $i\in U$, $\*y_S(i)=1\iff i\in S$. The construction is to apply the following greedy strategy to determining $\*y_S$ until the graph is empty:
	\begin{itemize}
		\item Pick a vertex $i$ in $G$ with in-degree one. Let $(j,i)$ be the unique in-edge of $i$. Remove the edge $(j,i)$ from $E$. If $i\in U$ and the value of $\*y_S(i)$ is not determined, then (1) set $\*y_S(i)=1$; (2) for all $k\in U\setminus\set{i}$ such $(j,k)\in E$, set $y_S(k)=0$.
		\item Pick a vertex with in-degree zero and out-degree at most one, and remove both the vertex and the out-edge. Keep doing so until no such vertex exists in $G$.
	\end{itemize}
	It is clear that the above algorithm terminates at an empty $G$ since all operations to the graph coincide with ones defining $1$-degeneration. We only need to verify that 
	\begin{enumerate}
		\item $\*y_S$ is a feasible solution of $\mathscr{D}$; and
		\item $\sum_{j\in U}\*y_S(j) \ge \sum_{j\in U}\hat y_j$ for any feasible solution $\set{\hat y_j}_{j\in U}$.
	\end{enumerate}
	Let us first verify (1). The vector $\*y_S$ can become infeasible only when we set some $\*y_S(i)=1$. Note that this happens only when the in-degree of $i$ is one, or equivalent there is only a unique edge $(j,i)$ pointing to $i$. We do not violate the constraint on $j$ as we set all $y_S(k)=0$ for $k\in U\setminus\set{i}$ and $(j,k)\in E$. It is still possible that there exists some other $j'\in V$ such that the edge $(j',i)$ exists but has been removed. Since the value of $\*y_S(i)$ is not determined before, this happens only if $j'$ has out-degree one at the beginning, and so the constraint on $j'$ cannot be violated either. 
	
	To see (2), we assume the value on each $j\in U$ is $\hat y_j$ at the beginning. Our procedure to construct $\*y_S$ can be equivalently viewed as a process to change each $\hat y_j$ to either $0$ or $1$. That is, after we set the value of some $\*y_S(j)$ to $0$ or $1$, we change $\hat y_j	$ to the same value. It is easy to verify that during the process, we never decrease $\sum_{j\in U}\hat y_j$. At last, $\*y_S(j)=\hat y_j$ for all $j\in U$ and some optimal solution $\{\hat{y}_j\}_{j\in U}$, and (2) is verified. 	
	\end{proof}
	
\begin{corollary}
	Let $G$ be a $1$-degenerate weakly observable graph. Then for any algorithm and any sufficiently large time horizon $T\in\^N$,  there exists a sequence of loss functions such that
	\[
	R(G,T)=\Omega((\delta^*)^{\frac{1}{3}}\cdot T^{\frac{2}{3}})\,.
	\]
\end{corollary}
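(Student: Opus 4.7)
The plan is to combine the two immediately preceding results: the lemma that asserts the integrality gap of the dual linear program $\mathscr{D}$ is $\alpha = 1$ on every $1$-degenerate directed graph, and \Cref{thm:dual-lb}, which provides the regret lower bound $\Omega\bigl((\delta^*/\alpha)^{1/3} T^{2/3}\bigr)$ for any weakly observable graph.

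Concretely, I would argue as follows. First, since $G$ is $1$-degenerate, the previous lemma gives $\zeta^*(G) = \zeta(G)$, i.e.\ $\alpha = 1$. Second, since $G$ is weakly observable, we may invoke \Cref{thm:dual-lb} directly: for any algorithm and any sufficiently large time horizon $T \in \^N$, there exists a loss sequence for which
\[
R(G,T) = \Omega\!\left(\left(\frac{\delta^*}{\alpha}\right)^{1/3} \cdot T^{2/3}\right).
\]
Substituting $\alpha = 1$ gives $R(G,T) = \Omega((\delta^*)^{1/3} \cdot T^{2/3})$, which is the claim.

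No new ideas are needed beyond verifying that the hypotheses of both results apply, so there is essentially no obstacle to this corollary; the substantive content is already packaged into the two preceding statements. The one thing to double-check is that no additional assumption (e.g.\ on $|S| \geq 2$ from \Cref{thm:meta-lb}) silently fails: this is fine because weak observability rules out the degenerate case where every vertex is isolated with a self-loop, and the chain of reductions in the proof of \Cref{thm:dual-lb} guarantees a $1$-packing independent set of size $\Theta(\zeta) = \Theta(\delta^*)$ whenever $\delta^* \geq 2$; in the trivial regime $\delta^* = O(1)$ the bound reduces to the baseline $\Omega(T^{2/3})$ for weakly observable graphs noted in the remark after \Cref{thm:meta-lb}. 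So the corollary follows as a one-line consequence.
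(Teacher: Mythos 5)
Your proposal is correct and is exactly the paper's route: the corollary is stated immediately after the lemma showing $\alpha=1$ for $1$-degenerate graphs, and follows by plugging that into \Cref{thm:dual-lb}. The extra sanity check on the $|S|\ge 2$ hypothesis is reasonable but not something the paper belabors.
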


Comparing with the upper bound in \Cref{thm:regret} ($R(G,T)=O\tp{\tp{\delta^*\log n}^{\frac{1}{3}}T^{\frac{2}{3}}}$), we conclude that our lower bound is tight up to a factor of $\tp{\log n}^{\frac{1}{3}}$ on $1$-degenerate graphs and graphs of bounded degree.

\subsection{Instances with Optimal Regret}\label{sec:instance-lb}

In this section, we will examine several families of graphs in which the optimal regret bound can be obtained using tools developed in this article.

\subsubsection{Complete bipartite graphs}

Let $G=(V_1\cup V_2,E)$ be an undirected complete bipartite graph with $n=\abs{V_1}+\abs{V_2}$. Clearly $\delta^*=\delta=2$. Therefore both our \Cref{thm:regret} and the algorithm in~\cite{ACDK15} satisfy $R(G,T)=O\tp{(\log n)^{\frac{1}{3}}\cdot T^{\frac{2}{3}}}$.

Assuming without loss of generality that $\abs{V_1}\ge \abs{V_2}$, then $V_1$ is a $\abs{V_1}$-packing independent set of size at least $\frac{n}{2}$. Therefore it follows from \Cref{thm:meta-lb} that any algorithm satisfies $R(G,T)=\Omega\tp{(\log n)^{\frac{1}{3}}\cdot T^{\frac{2}{3}}}$. Note that the lower bound in~\cite{ACDK15} is $\Omega\tp{T^{\frac{2}{3}}}$ for this instance.

\subsubsection{Orthogonal relation on $\^F_2^k$} 

Our algorithm outperforms the one in~\cite{ACDK15} when $\delta^*\ll \delta$. Let us now construct a family of graphs where $\frac{\delta}{\delta^*}=\Omega\tp{\log n}$. This construction is folklore in the literature of approximation algorithms as it witnesses that the integrality gap of the natural linear programming relaxation for \emph{set cover} is $\Omega(\log n)$.

Let $\^F_2$ be the finite field with two elements and $k\in\^N$ be sufficiently large. The vertex set of the undirected graph $G=(V_1\cup V_2, E)$ consists of two disjoint parts $V_1$ and $V_2$ where $V_1$ is isomorphic to $\^F_2^k$ and $V_2$ is isomorphic to $\^F_2^k\setminus\set{\*0}$. Therefore we can write $V_1=\set{x_\alpha}_{\alpha\in \^F_2^k}$ and $V_2=\set{y_\beta}_{\beta\in \^F_2^k\setminus\set{\*0}}$. The set of edges $E$ is as follows:
	\begin{itemize}
		\item $E$ is the edge set such that $G[V_1]$ is a clique and $G[V_2]$ is an independent set;
		\item For every $x_\alpha\in V_1$ and $y_\beta\in V_2$, $\set{x_\alpha,y_\beta}\in E$ iff $\inner{\alpha}{\beta}=\sum_{i=1}^k \alpha(i)\cdot \beta(i)=1$, where all multiplications and summations are in $\^F_2$.
	\end{itemize}
Let $n=\abs{V_1}+\abs{V_2}=2^{k+1}-1$. It is clear that the degree of each vertex $y_\beta\in V_2$ is $2^{k-1}=\frac{n+1}{4}$. A moment's reflection should convince you that $\delta^*(G)\le 2$ as we can put a fraction of $\frac{4}{n+1}$ on each $x_\alpha\in V_1$. 

Therefore it follows from \Cref{thm:regret} that our algorithm has regret $O\tp{\tp{\log n}^{\frac{1}{3}}\cdot T^{\frac{2}{3}}}$ on this family of instances. Moreover, $V_2$ is a $\frac{n+1}{4}$-packing independent set of size $\frac{n-1}{2}$. It follows from \Cref{thm:meta-lb} that any algorithm has regret $\Omega\tp{\tp{\log n}^{\frac{1}{3}}\cdot T^{\frac{2}{3}}}$. 

Finally, we remark that $\delta(G)=k=\log_2\tp{\frac{n+1}{2}}$. To see this, we first observe that the minimum dominating set of the graph must reside in $V_1$ since $G[V_2]$ is an independent set. Then we show any $S\subseteq V_1$ with $\abs{S}\le k-1$ cannot dominate all vertices in $V_2$. Assume $S=\set{x_{\alpha_1},x_{\alpha_2},\dots,x_{\alpha_{k-1}}}$. In fact, a vertex $y_\beta\in V_2$ is dominated by $S$ iff $\inner{\alpha_i}{\beta}=1$ for some $i\in [k-1]$. In other words, if we view each $\alpha_i$ as a column vector in $\^F_2^k$ and define the matrix $A=[\alpha_1,\alpha_2,\dots,\alpha_{k-1}]$, then $y_\beta$ is dominated by $S$ iff $A^{\mathtt{T}}\beta\ne \*0$. However, the dimension of $A^{\mathtt{T}}$ is at most $k-1$ and therefore by the rank-nullity theorem that its null space is of dimension at least one. This means that there exists a certain $\beta'\in V_2$ with $A^{\mathtt{T}}\beta'=\*0$. So $y_{\beta'}$ is not dominated by $S$. On the other hand, any $k$ vertices in $V_1$ indexed by $k$ linear independent vectors can dominate the whole graph. 

	The above fact implies that the upper bound and the lower bound of the regret for this instance proved in~\cite{ACDK15} based on $\delta$ are $O\tp{\tp{\log n}^{\frac{2}{3}}\cdot T^{\frac{2}{3}}}$ and $\Omega\tp{T^{\frac{2}{3}}}$ respectively. Therefore we conclude that both our new upper bound and lower bound are crucial for the optimal regret on this family of instances.

\section{Proof of the Meta Lower Bound (\Cref{thm:meta-lb})}\label{sec:meta-lb}

Our strategy to prove \Cref{thm:meta-lb} is to reduce the problem of minimizing regret to the problem of \emph{best arm identification} . We first define the problem and discuss its complexity in \Cref{sec:BAI}. Then we construct the reduction and prove \Cref{thm:meta-lb} in \Cref{sec:main-proof}.

\subsection{Best Arm Identification}\label{sec:BAI}

The problem of \emph{best arm identification} is formally defined as follows.

\bigskip
\nprob[12]{Best Arm Identification (\BAI)}{An instance of $n$ stochastic arms where the loss of the $i$-th arm follows $\Ber{p_i}$. Each pull of arm $i$ receives a loss $\sim\Ber{p_i}$ independently.
}{Determine the arm $i$ with minimum $p_i$ via pulling arms.}

\bigskip
Therefore, an instance of \BAI is specified by a vector $\*p=(p_1,\dots,p_n)\in [0,1]^n$. The goal is to design a strategy to find the arm $i$ with minimum $p_i$ via pulling arms. We call the arm with minimum $p_i$ the best arm. We want to minimize the number of pulls in total and the main result of this section is to provide lower bounds for this task: For some $\*p$, if the total number of pulls is below some threshold, then any algorithm cannot find the best arm with high probability.

In the following, we may abuse notations and simply say a vector $\*p\in [0,1]^n$ is an instance of \BAI. Now for every $j\in [n]$, we define an instance $\*p^{(j)}=(p^{(j)}_1,\dots,p^{(j)}_n)\in \^R^n$ as $
p^{(j)}_i=
\begin{cases}
	\frac{1}{2}-\eps, & i=j;\\
	\frac{1}{2}, & i\ne j,
\end{cases}$ for some $\eps\in (0,\frac{1}{2}]$. This is the collection of instances we will study. For the convenience of the reduction in \Cref{lem:BAI-hard-1}, we also define $\*p^{(0)}$ as an $n$-dimensional all-one vector.

There are several ways to explore the $n$ arms in order to determine the one with minimum mean. We first consider the most general strategy: In each round, the player can pick an arbitrary subset $S\subseteq [n]$ and pull the arms therein. The game proceeds for $T$ rounds and then the player needs to determine the best arm with collected information. Note that in each round, the exploring set $S$ may adaptively depend on previous information. 
	
	Now we fix such a (possibly randomized) strategy and denote it by $\mathscr{B}$. For every $j\in [n]$ and $i\in [n]$, we use $N_i^{(j)}$ to denote the number of times that the $i$-th arm is pulled when we run $\mathscr{B}$ on the instance $\*p^{(j)}$. Let $N^{(j)}=\sum_{i\in[n]}N_i^{(j)}$. Note that all these numbers can be random variables where the randomness comes from coins tossed in $\mathscr{B}$.

	\begin{lemma}\label{lem:BAI-hard-1}
		Assume $\eps<0.125$ and $n$ is sufficiently large. If for every $j\in[n]$, the algorithm $\mathscr{B}$ can correctly identify the best arm in $\*p^{(j)}$ with probability at least $0.999$ or outputs any arm for $\*p^{(0)}$, then for some $j\in \{0,1,\dots,n\}$,
		\[
		 \E{N^{(j)}} \ge \frac{Cn}{\eps^2}\,,
		\]
	where $C>0$ is a universal constant.
	\end{lemma}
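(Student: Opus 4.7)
My plan is to recast \BAI as a hypothesis testing problem among the $n$ candidate instances $\*p^{(1)},\ldots,\*p^{(n)}$ and combine a global Fano bound with a pairwise Le Cam bound to extract the claimed $\Omega(n/\eps^2)$ total-pull complexity.

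\emph{Setup and Fano.} First I would place a uniform prior $J \sim \mathrm{Unif}([n])$ on the index of the special arm and let $Z$ denote the full random transcript (actions and observations) of $\mathscr{B}$; write $\*Q^{(j)}$ for the law of $Z$ under $\*p^{(j)}$. Since $\mathscr{B}$ succeeds with probability at least $0.999$ on every $\*p^{(j)}$, under this prior its output $\hat J$ satisfies $\Pr{\hat J \ne J}\le 0.001$, so \Cref{lem:fano} yields $I(J;Z) \ge \log n - \log 2 - 0.001\log n = (1-o(1))\log n$ for $n$ large. Picking as reference the transcript law $\*Q^{(0)}$ under the null instance in which every arm has mean $\tfrac12$, the variational inequality $I(J;Z) \le \tfrac{1}{n}\sum_j \KL(\*Q^{(j)}\,\|\,\*Q^{(0)})$ together with the sequential chain rule for KL---whose per-round contribution is the sum of $\KL(\Ber{p_i^{(j)}}\,\|\,\Ber{1/2})$ over the pulled arms $i$, vanishing except when $i=j$---gives $\KL(\*Q^{(j)}\,\|\,\*Q^{(0)}) \le 4\eps^2\,\E[\*Q^{(j)}]{N_j^{(j)}}$. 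Combining would give $\sum_j \E[\*Q^{(j)}]{N_j^{(j)}} = \Omega(n\log n/\eps^2)$, but pigeonholing over $j$ extracts only $\max_j \E[\*Q^{(j)}]{N^{(j)}} = \Omega(\log n/\eps^2)$---a logarithmic factor short of the goal.

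\emph{Pairwise Le Cam and a dichotomy.} To add the missing $n$ factor, I would next introduce a pairwise comparison. For any $j \ne j'$ the event $\{\hat J = j\}$ has probability at least $0.999$ under $\*Q^{(j)}$ and at most $0.001$ under $\*Q^{(j')}$, hence $\mathrm{TV}(\*Q^{(j)},\*Q^{(j')}) \ge 0.998$; Pinsker's inequality together with the same chain rule (now picking up the two arms $j$ and $j'$ on which the two instances differ) would give
\[
\E[\*Q^{(j)}]{N_j^{(j)}} + \E[\*Q^{(j)}]{N_{j'}^{(j)}} \;\ge\; \frac{C_1}{\eps^2}
\]
for a universal constant $C_1>0$. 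I would then split on $j$: if some $j^\star$ satisfies $\E[\*Q^{(j^\star)}]{N_{j^\star}^{(j^\star)}} \le C_1/(2\eps^2)$, then the pairwise bound forces $\E[\*Q^{(j^\star)}]{N_i^{(j^\star)}} \ge C_1/(2\eps^2)$ for every $i \ne j^\star$, and summing the $n-1$ resulting inequalities yields $\E[\*Q^{(j^\star)}]{N^{(j^\star)}} \ge (n-1)C_1/(2\eps^2) = \Omega(n/\eps^2)$, as desired.

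\emph{The hard case and the main obstacle.} The part I expect to be the most delicate is the complementary regime in which $\E[\*Q^{(j)}]{N_j^{(j)}} > C_1/(2\eps^2)$ for every $j$: the pairwise bound is then absorbed entirely by pulls of the true best arm and says nothing about pulls of the other $n-1$ arms, so the previous step breaks. To rule this regime out I plan a Mannor--Tsitsiklis-style change of measure against $\*Q^{(0)}$. After symmetrizing $\mathscr{B}$ by a uniformly random relabeling of the arms---which preserves the success probability---the null law $\*Q^{(0)}$ becomes permutation invariant, so each arm is pulled $T^{(0)}/n$ times in expectation, where $T^{(0)}=\E[\*Q^{(0)}]{N^{(0)}}$. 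Comparing the event $\{\hat J=j\}$, which has probability $1/n$ under $\*Q^{(0)}$ but at least $0.999$ under $\*Q^{(j)}$, gives $\mathrm{TV}(\*Q^{(j)},\*Q^{(0)}) \ge 0.999 - 1/n$, which combined with $\KL(\*Q^{(j)}\,\|\,\*Q^{(0)}) \le 4\eps^2\,\E[\*Q^{(j)}]{N_j^{(j)}}$ and a truncated transport from $\*Q^{(0)}$ to $\*Q^{(j)}$ should force $T^{(0)} = \Omega(n/\eps^2)$ and hence $\max_j\E[\*Q^{(j)}]{N^{(j)}} = \Omega(n/\eps^2)$. Carrying out this transport when $N^{(j)}$ is unbounded---truncating at a threshold chosen to balance the truncation error against the $\|\cdot\|_\infty$ factor entering TV-based comparison of expectations---is where I expect the main work to lie.
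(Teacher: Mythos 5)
Your pairwise Le Cam dichotomy (your ``Case 1'') is correct and clean, but the complementary regime --- which you yourself identify as the crux --- is not closed, and the obstruction is exactly the one the paper's proof is engineered to avoid. There is first a direction error: combining $\KL\tp{\*Q^{(j)}\,\Vert\,\*Q^{(0)}}\le 4\eps^2\,\E[\*Q^{(j)}]{N_j^{(j)}}$ with $\mathrm{TV}\tp{\*Q^{(j)},\*Q^{(0)}}\ge 0.999-1/n$ via Pinsker only reproduces the hypothesis of Case 2, namely $\E[\*Q^{(j)}]{N_j^{(j)}}=\Omega(1/\eps^2)$; to control pull counts under the null you need the reversed divergence $\KL\tp{\*Q^{(0)}\,\Vert\,\*Q^{(j)}}\le c\eps^2\,\E[\*Q^{(0)}]{N_j^{(0)}}$, which, summed over $j$, does give $T^{(0)}=\E[\*Q^{(0)}]{N^{(0)}}=\Omega(n/\eps^2)$. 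The fatal problem is the final transfer: the lemma asserts the bound under some actual instance $\*p^{(j)}$, and the all-$\tfrac12$ null is not in the family. Your proposed truncated transport cannot work for any $j$ and any truncation level $M$, because your own argument shows $\mathrm{TV}\tp{\*Q^{(0)},\*Q^{(j)}}\ge 0.999-1/n$ (the event $\{\hat J=j\}$ separates the two laws), so the comparison $\abs{\E[\*Q^{(j)}]{N^{(j)}\wedge M}-\E[\*Q^{(0)}]{N^{(0)}\wedge M}}\le M\cdot\mathrm{TV}$ subtracts essentially all of $M$, while $\E[\*Q^{(0)}]{N^{(0)}\wedge M}\le M$; no choice of $M$ leaves anything behind. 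So Case 2 is a genuine gap, not merely a delicate step.

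The paper avoids this entirely by not arguing on the family $\set{\*p^{(j)}}_{j\in[n]}$ directly: it reduces to \Cref{lem:oracle}, the Mannor--Tsitsiklis bound for the augmented family $\set{\*q^{(j)}}_{j=0}^{n}$, whose reference instance $\*q^{(0)}$ (arm $0$ at $\tfrac12-\eps$, all others at $\tfrac12$) is itself an admissible input on which the algorithm must succeed; the change of measure there yields the $\Omega(n/\eps^2)$ total-pull bound directly under $\*q^{(0)}$, with no transport between far-apart measures. Concretely, the paper assumes the lemma fails, builds $\mathscr{B}'$ by running the hypothesized cheap $\mathscr{B}$ to locate a candidate among arms $1,\dots,n$ and then comparing that candidate against arm $0$ with $O(1/\eps^2)$ extra pulls, and contradicts \Cref{lem:oracle}. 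To repair your argument while keeping it self-contained, you would need either to import such a reference instance into the hypothesis class, or to replace the global transport by a stopping-time change-of-measure argument in the style of Kaufmann--Capp\'e--Garivier that lower-bounds $\E[\*Q^{(j)}]{N_i^{(j)}}$ for each $i\ne j$ directly; as written, the proposal does not prove the lemma.
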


	Our proof of \Cref{lem:BAI-hard-1} is based on a reduction from a similar problem studied in~\cite{MT04}, in which the following instances of \BAI have been considered:

\begin{itemize}
	\item $\*q^{(0)}=(q_0^{(0)},\dots,q_n^{(0)})\in \^R^{n+1}$ where for every $i\in\{0,1,\dots,n\}$, $q_i^{(0)}=
	\begin{cases}
		\frac{1}{2}-\eps, & i=0;\\
		\frac1 2, & i\ne 0.
	\end{cases}$
	\item $\forall j\in[n]:$ $\*q^{(j)}=(q_0^{(j)},\dots,q_n^{(j)})\in \^R^{n+1}$ where for every $i\in\{0,1,\dots,n\}$, $q_i^{(j)}=
	\begin{cases}
		\frac{1}{2}-\frac{\eps}{2}, & i=0;\\
		\frac{1}{2}-\eps, & i=j;\\
		\frac{1}{2}, & \mbox{otherwise.}
	\end{cases}$
\end{itemize}	
Let us fix a strategy $\mathscr{B}'$ for this collection of instances. Similarly define quantities $N_i^{(j)\prime}$ and $N^{(j)\prime}$ for $i,j=0,1\dots,n$ as we did for $\mathscr{B}$ above. The proof of~\cite[Theorem 1]{MT04} implicitly established the following:

\begin{lemma} \label{lem:oracle}
Assume $\eps<0.125$. If for every $j=0,1\dots,n$, the algorithm $\mathscr{B}'$ can correctly identify the best arm in $\*q^{(j)}$ with probability at least $0.996$, then 
	\[
	\E{N^{(0)\prime}}\ge \frac{C'n}{\eps^2}\,,
	\]
	where $C'$ is a universal constant.
\end{lemma}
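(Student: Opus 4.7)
The plan is to reduce to \Cref{lem:oracle}. Given $\mathscr{B}$ satisfying the hypothesis, I would construct an algorithm $\mathscr{B}'$ on the $\*q^{(j)}$-family, indexed by $j\in\set{0,1,\ldots,n}$, whose pull count on $\*q^{(0)}$ is controlled by $M\defeq\max_{j\in[n]}\E{N^{(j)}}$, and then invoke \Cref{lem:oracle} to force $M=\Omega(n/\eps^2)$. The structural observation I would exploit is that under $\*q^{(j)}$ with $j\in[n]$ the marginal distribution of arms $1,\ldots,n$ is exactly $\*p^{(j)}$, whereas under $\*q^{(0)}$ those arms are all $\Ber{1/2}$; arm $0$ lies outside $\mathscr{B}$'s view and can be handled by direct sampling. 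Since $\mathscr{B}$ carries no guarantee on the null input underlying $\*q^{(0)}$, I would first cap $\mathscr{B}$ at $K_0=1000M$ pulls to obtain $\bar{\mathscr{B}}$: by Markov this cap is hit with probability at most $10^{-3}$ under each $\*p^{(j)}$, so $\bar{\mathscr{B}}$ still outputs the best arm with probability at least $0.998$ on each $\*p^{(j)}$ while using at most $K_0$ pulls deterministically on every input.

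The algorithm $\mathscr{B}'$ then proceeds in three stages: (i) simulate $\bar{\mathscr{B}}$ on arms $1,\ldots,n$ and record its output $J\in[n]$; (ii) pull arm $0$ and arm $J$ exactly $K=\Theta(\eps^{-2})$ times each, chosen by Hoeffding so that the empirical means $\hat p_0,\hat p_J$ are within $\eps/8$ of the true means with probability at least $1-10^{-3}$; (iii) output $0$ if $\hat p_J-\hat p_0>\eps/4$ and output $J$ otherwise. A short case analysis then verifies the success probability of $\mathscr{B}'$ on each $\*q^{(j)}$. Under $\*q^{(0)}$ every arm in $[n]$ has mean $1/2$ while arm $0$ has mean $1/2-\eps$, so concentration forces $\hat p_J-\hat p_0\ge\eps-\eps/4>\eps/4$ regardless of $J$, and $\mathscr{B}'$ outputs $0$ with probability at least $1-2\cdot 10^{-3}$. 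Under $\*q^{(j)}$ with $j\in[n]$, $\bar{\mathscr{B}}$ returns $J=j$ with probability at least $0.998$; in that event the true gap $p_J-p_0=-\eps/2$ together with concentration places $\hat p_J-\hat p_0$ comfortably below $\eps/4$, so $\mathscr{B}'$ outputs $j$. A union bound places the total success probability above $0.996$ on every $\*q^{(j)}$, meeting the hypothesis of \Cref{lem:oracle}.

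The pull count of $\mathscr{B}'$ is deterministically bounded by $K_0+2K=1000M+O(\eps^{-2})$, so in particular $\E{N^{(0)\prime}}\le 1000M+O(\eps^{-2})$. Combining with the lower bound $\E{N^{(0)\prime}}\ge C'n/\eps^2$ from \Cref{lem:oracle} yields $M\ge Cn/\eps^2$ for a universal $C>0$ once $n$ is large enough, which is exactly the stated conclusion. The main obstacle I expect is precisely that $\mathscr{B}$'s hypothesis is about its behaviour on the $\*p^{(j)}$-inputs whereas the reduction necessarily also feeds it the null distribution underlying $\*q^{(0)}$; the Markov-based cap is the device that converts the expected-value hypothesis into a uniform worst-case pull bound, and calibrating the constants (the factor $1000$ in the cap and the constant hidden in $K$) so that every failure event stays well below the slack between $0.998$ and $0.996$ is the only delicate bookkeeping step.
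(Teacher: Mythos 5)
There is a genuine gap: your proposal does not prove \Cref{lem:oracle} at all --- it assumes it. The statement to be proved is the base information-theoretic lower bound for the family $\set{\*q^{(j)}}_{j=0,1,\dots,n}$: any algorithm $\mathscr{B}'$ that identifies the best arm with probability $0.996$ on every $\*q^{(j)}$ must satisfy $\E{N^{(0)\prime}}\ge C'n/\eps^2$. Your argument instead starts from an algorithm $\mathscr{B}$ for the $\*p^{(j)}$-family, builds $\mathscr{B}'$ out of it (simulation on arms $1,\dots,n$, then a $\Theta(\eps^{-2})$-sample comparison of the returned arm against arm $0$), and then \emph{invokes} \Cref{lem:oracle} to conclude $\max_{j}\E{N^{(j)}}=\Omega(n/\eps^2)$. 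That is precisely the paper's proof of \Cref{lem:BAI-hard-1}, the lemma \emph{downstream} of \Cref{lem:oracle}; indeed your Markov cap at $1000M$ pulls is, if anything, a more careful handling than the paper's of the fact that $\mathscr{B}$ carries no guarantee on the all-$\Ber{1/2}$ sub-instance underlying $\*q^{(0)}$. But as a proof of \Cref{lem:oracle} the argument is circular: the quantitative heart of the lemma --- why correctness on all $n+1$ hypotheses forces $\Omega(n/\eps^2)$ expected pulls under $\*q^{(0)}$ --- is nowhere established, only cited.

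For comparison, the paper itself does not reprove that heart either: it states that \Cref{lem:oracle} is implicitly established in the proof of Theorem 1 of \cite{MT04}. A self-contained proof would have to be a change-of-measure argument over the algorithm's transcript: under $\*q^{(0)}$, if some arm $j\in[n]$ is pulled only $o(1/\eps^2)$ times in expectation, then the law of the transcript under $\*q^{(0)}$ and under $\*q^{(j)}$ are close, since each pull of arm $j$ contributes $\KL\tp{\Ber{1/2}\,\|\,\Ber{1/2-\eps}}=O(\eps^2)$ (the simultaneous shift of arm $0$'s mean by $\eps/2$ is the delicate point that requires the finer bookkeeping of \cite{MT04}); consequently the algorithm outputs $0$ with roughly the same probability under both instances, contradicting correctness under $\*q^{(j)}$. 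Summing over the $\Omega(n)$ arms that would otherwise be under-sampled yields $\E{N^{(0)\prime}}=\Omega(n/\eps^2)$. None of this appears in your proposal, so the target statement remains unproven; what you have written is instead a correct (and slightly strengthened) proof of \Cref{lem:BAI-hard-1} \emph{given} \Cref{lem:oracle}.
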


Armed with \Cref{lem:oracle}, we now prove \Cref{lem:BAI-hard-1}.

\begin{proof}[Proof of \Cref{lem:BAI-hard-1}]
	Assuming for the sake of contradiction that \Cref{lem:BAI-hard-1} does not hold, we now describe an algorithm $\mathscr{B}'$ who can correctly identify the best arm in $\*q^{(j)}$ with probability at least $0.996$ for every $j\in\{0,\dots, n\}$ and $\E{N^{(j)\prime}}< \frac{C'n}{\eps^2}$ for sufficiently large $n$.
	
	Since \Cref{lem:BAI-hard-1} is false, we have a promised good algorithm $\mathscr{B}$ with $C=\frac{C'}{10}$. Given any instance $\*q^{(j)}$ with $j\in \{0,\dots,n\}$, we first use $\mathscr{B}$ to identify the best arm $i^*$ among arms in $\set{1,2,\dots,n}$. This step succeeds with probability $0.999$. Then we are left to compare arm $i^*$ with arm $0$. We pull each of the two for $K$ times and output the one with minimum practical mean. By Chernoff bound, this approach can successfully identify the best of the two with probability $0.999$ when $K=\frac{C''}{\eps^2}$ for some universal constant $C''>0$. Therefore we have $\E{N^{(j)\prime}}<\frac{Cn}{\eps^2}+\frac{C''}{\eps^2}\le \frac{C'n}{\eps^2}$ for sufficiently large $n$ and we can identify the best arm with probability at least $0.998>0.996$ by the union bound. 
\end{proof}

\Cref{lem:BAI-hard-1} is quite general in the sense that it applies to any algorithm for \BAI. If we restrict our algorithm for \BAI to some special family of strategies, then a stronger lower bound can be obtained.

Consider the following algorithm $\mathscr{C}$: In every round, the player pulls each arm once. After $T$ rounds (so each arm has been pulled $T$ times), the player determines the best arm via the collected information. Note that we do not restrict how the player determines the best arm after collecting information for $T$ rounds, his/her strategy can be either deterministic or randomized. We prove a lower bound for $T$:

\begin{lemma}\label{lem:BAI-hard-2}
	If for every $j\in [n]$, the algorithm $\mathscr{C}$ can correctly identify the best arm in $\*p^{(j)}$ with probability at least $0.5$,  then
	\[
		T\ge \frac{\log (n/4)}{16\eps^2}\,.
	\]
\end{lemma}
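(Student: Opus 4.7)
My plan is to recast the task as a hypothesis testing problem between the $n$ candidate instances $\*p^{(1)},\dots,\*p^{(n)}$ and bound the regret via Fano's inequality combined with the tensorization of mutual information (Lemmas~\ref{lem:fano} and~\ref{lem:tensor} in the preliminaries). The intuition is that each $\*p^{(j)}$ differs from a ``neutral'' instance (where every arm is $\Ber{1/2}$) only on the single coordinate $j$, and only by $\eps$; hence observing $T$ i.i.d.\ pulls of every arm carries at most $O(T\eps^2)$ bits of information about $j$, while identifying $j$ requires $\Omega(\log n)$ bits.

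\textbf{Step 1 (setup).} Draw $J$ uniformly from $[n]$ and run $\mathscr{C}$ on instance $\*p^{(J)}$. Record the entire transcript as $Y=(Y_{i,t})_{i\in[n],\,t\in[T]}$, where $Y_{i,t}$ is the loss observed on the $t$-th pull of arm $i$. Conditional on $J=j$, the $nT$ variables $\{Y_{i,t}\}$ are mutually independent, with $Y_{j,t}\sim\Ber{1/2-\eps}$ and $Y_{i,t}\sim\Ber{1/2}$ for $i\neq j$. Let $f$ be the (possibly randomized) rule by which $\mathscr{C}$ outputs its guess $f(Y)$. The hypothesis of the lemma is that $\Pr[f(Y)=j\mid J=j]\ge 1/2$ for every $j$, and averaging over $J$ gives $\Pr[f(Y)\neq J]\le 1/2$.

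\textbf{Step 2 (Fano lower bound on $I(J;Y)$).} Since $H(J)=\log n$, \Cref{lem:fano} yields
\[
\frac{1}{2}\;\ge\;\Pr[f(Y)\neq J]\;\ge\;\frac{\log n - I(J;Y) - \log 2}{\log n},
\]
which rearranges to $I(J;Y)\ge \tfrac{1}{2}\log n - \log 2 = \tfrac{1}{2}\log(n/4)$.

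\textbf{Step 3 (tensorization upper bound on $I(J;Y)$).} By conditional mutual independence, \Cref{lem:tensor} gives
\[
I(J;Y)\;\le\;\sum_{i=1}^n\sum_{t=1}^T I(J;Y_{i,t}).
\]
For each fixed $(i,t)$, the marginal law of $Y_{i,t}$ is $\Ber{1/2-\eps/n}$ while conditional on $J$ it is $\Ber{1/2-\eps}$ (prob.\ $1/n$) or $\Ber{1/2}$ (prob.\ $1-1/n$); a short elementary estimate (either a direct Taylor expansion of the binary entropy around $1/2$, or the bound $I(J;Y_{i,t})\le \tfrac{1}{n}\KL(\Ber{1/2-\eps}\,\|\,\Ber{1/2})\le c\eps^2/n$ obtained from convexity of KL and $\KL(\Ber{1/2-\eps}\,\|\,\Ber{1/2})\le \eps^2/(1/2\cdot 1/2)=4\eps^2$) shows $I(J;Y_{i,t})=O(\eps^2/n)$. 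Summing over $nT$ terms yields $I(J;Y)\le c' T\eps^2$ for a small absolute constant. Alternatively, one can directly compare $P_j$ (law of $Y$ under $\*p^{(j)}$) with the fictitious reference $Q$ in which every arm is $\Ber{1/2}$: only arm $j$ contributes to the KL, giving $\KL(P_j\|Q)=T\cdot\KL(\Ber{1/2-\eps}\,\|\,\Ber{1/2})$, and $I(J;Y)\le \tfrac{1}{n}\sum_j\KL(P_j\|Q)=T\cdot\KL(\Ber{1/2-\eps}\,\|\,\Ber{1/2})$.

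\textbf{Step 4 (combine).} Chaining Steps~2 and~3, $\tfrac{1}{2}\log(n/4)\le c' T\eps^2$, which gives $T=\Omega(\log(n/4)/\eps^2)$. Plugging in a concrete bound on the Bernoulli KL (e.g.\ $\KL(\Ber{1/2-\eps}\,\|\,\Ber{1/2})\le 8\eps^2$ for $\eps\le 1/2$) yields exactly the stated $T\ge \log(n/4)/(16\eps^2)$. The only delicate point is getting the constant in the single-observation Bernoulli KL right, but any standard estimate suffices up to absorbing slack into the constant.
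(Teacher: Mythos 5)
Your proposal is correct and follows essentially the same route as the paper: reduce to a hypothesis-testing problem with a uniform index $J$, lower-bound $I(J;\text{transcript})$ via Fano's inequality, and upper-bound it by $O(\eps^2 T)$ via the tensorization lemma applied to the $nT$ conditionally independent observations. The only cosmetic differences are that you argue directly rather than by contraposition and you make explicit the per-observation bound $I(J;Y_{i,t})\le \frac{1}{n}\KL(\Ber{1/2-\eps}\,\|\,\Ber{1/2})$, which the paper leaves as ``a direct calculation'' (the paper also first reduces to deterministic output rules, a minor point your randomized-$f$ phrasing glosses over).
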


Note that if we apply \Cref{lem:BAI-hard-1} to $\mathscr{C}$, we can only get $T=\Omega\tp{1/\eps^2}$. The reason that we can obtain a stronger lower bound is the non-adaptive nature of $\mathscr{C}$. 

\begin{proof}[Proof of \Cref{lem:BAI-hard-2}]
	As a randomized algorithm can be viewed as a distribution of deterministic ones, we only need to prove the lower bound for deterministic algorithms. We prove the contrapositive of the lemma for a deterministic $\mathscr{C}$. Assume $T<\frac{\log (n/4)}{16\eps^2}$. We let $W=\tp{w_{ij}}_{\substack{i\in [n]\\j\in [T]}}\in [0,1]^{n\times T}$ be a random matrix where $w_{ij}$ is the loss of the $i$-th arm during the $j$-th pull. Our task is to study the following stochastic process, which is called \emph{hypothesis testing} in statistics:
		\begin{itemize}
			\item Pick $J\in [n]$ uniformly at random.
			\item Use $\*p^{(J)}$ to generate the matrix $W$.
			\item Apply $\mathscr{C}$ on $W$ to obtain $\hat J=\mathscr{C}(W)$.
		\end{itemize}
	It then follows from Fano's inequality (\Cref{lem:fano}) that
	\begin{equation}\label{eqn:fano}
		\Pr{\hat J\ne J}\ge \frac{H(J)-I(J;W)-\log 2}{\log n} = 1-\frac{I(J;W)+\log 2}{\log n}\,.
	\end{equation}

	It remains to upper bound $I(J;W)$. To ease the presentation, we write $W=\left[w^{(1)},w^{(2)},\dots,w^{(T)}\right]$  where each $w^{(j)}=(w_1^{(j)},w_2^{(j)},\dots,w_n^{(j)})^{\mathtt{T}}$ for $j\in [T]$ is an $n$-dimensional column vector. It is clear that these column vectors are mutually independent \emph{conditional on $J$}. Moreover, for each $j\in [T]$, entries in $w^{(j)}$ are mutually independent conditional on $J$ as well. Therefore, it follows from \Cref{lem:tensor} that
	\begin{equation}\label{eqn:I-bound}
		I(J;W)\le \sum_{j\in[T]} I(J;w^{(j)}) \le \sum_{j\in [T]}\sum_{i\in [n]}I(J;w_{i}^{(j)}) = nT\cdot I(J;w_1^{(1)}) \le 8\eps^2T,
	\end{equation}
	where the last inequality is from a direct calculation of $I(J;w_1^{(1)})$.
	
	Combining \Cref{eqn:fano} and \Cref{eqn:I-bound}, we obtain
	\[
	\Pr{\hat J\ne J}\ge 1-\frac{8\eps^2T+\log 2}{\log n}>\frac{1}{2}\,,
	\]
	which is a contradiction.
\end{proof}

\subsection{From \BAI to Regret}\label{sec:main-proof}

Let $G=(V,E)$ be a directed graph containing a $k$-packing independent set $S$ with $\abs{S}\ge 2$. We assume without loss of generality that $S=\set{1,2,\dots,|S|}$. We construct a collection of \emph{stochastic bandit} instances $\set{I^{(j)}}_{j\in[|S|]}$ with feedback graph $G$ as follows: For every $t\in [T]$ and $j\in [|S|]$, we use $\ell^{(j)}_t$ to denote the loss function of $I^{(j)}$ at round $t$. Let $\eps\in (0,1)$ be a parameter. Then
	\begin{itemize}
		\item For all $i\not\in S$, $\ell_t^{(j)}(i)=1$;
		\item For $i=j$, $\ell_t^{(j)}(i)\sim\Ber{\frac{1}{2}-\eps}$;
		\item For all $i\in S\setminus\set{j}$, $\ell_t^{(j)}(i)\sim\Ber{\frac{1}{2}}$.
	\end{itemize}

Given an algorithm $\mathscr{A}$, a time horizon $T$ and $j\in[n]$, we use $R(\mathscr{A},j,T)$ to denote the expected regret after $T$ rounds when applying $\mathscr{A}$ on $I^{(j)}$.

We show that for any $j\in [|S|]$, if an algorithm $\mathscr{A}$ has low expected regret on $I^{(j)}$, then one can turn it into another algorithm $\mathscr{\widehat A}$ who can identify the best arm $j$ among $S$ without exploring $S$ much. 

\begin{lemma}\label{lem:regret2BAI}
	Let $T\in \^N$ be the time horizon, $\delta\in (0,1)$ be a parameter. Let $\ol{C}=\min\set{C,1}$ where $C$ is the constant in \Cref{lem:BAI-hard-1}. Assuming there exists an algorithm $\mathscr{A}$ such that $R(\mathscr{A},j,T)\le \frac{\ol{C}\eps\delta T}{4} $ for every $j\in[|S|]$, then there exists an algorithm $\mathscr{\widehat A}$ satisfying for every $j\in [|S|]$, if we apply $\mathscr{\widehat A}$ on $I^{(j)}$ for $T$ rounds, then
	\begin{itemize}
		\item $\mathscr{\widehat A}$ output $j$ with probability at least $1-\delta$;
		\item arms in $V\setminus S$ are pulled at most $\frac{\ol{C}\eps\delta T}{2}$ times in total.
	\end{itemize}
\end{lemma}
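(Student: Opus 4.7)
The plan is to let $\mathscr{\widehat A}$ simply simulate $\mathscr{A}$ on the input instance for the full $T$ rounds, record the sequence of pulled arms $A_1,\ldots,A_T$, and return $\widehat J := A_\tau$ where $\tau$ is drawn uniformly from $[T]$ independently of everything else. Because $\mathscr{\widehat A}$ executes exactly the same pulls as $\mathscr{A}$, the bound on pulls outside $S$ will follow directly from the regret hypothesis, and the identification guarantee reduces to a lower bound on $\E{N_j}/T$, where $N_i$ denotes the number of rounds in which $\mathscr{A}$ plays arm $i$.

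The first step is a regret-to-pull decomposition. For the stochastic instance $I^{(j)}$, the unique optimal arm is $j$ with mean loss $\tfrac12-\eps$; the suboptimality gap is $\eps$ for any arm in $S\setminus\{j\}$ and $\tfrac12+\eps$ for any arm in $V\setminus S$. Upper bounding the empirical minimum by the cumulative loss of arm $j$ and then taking expectation over the random Bernoulli losses,
\[
R(\mathscr{A},j,T) \;\ge\; \E{\sum_{t=1}^T\ell_t(A_t)} - \E{\sum_{t=1}^T\ell_t(j)} \;=\; \eps\cdot\E{\sum_{i\in S\setminus\{j\}} N_i} + \tp{\tfrac12+\eps}\cdot\E{\sum_{i\notin S} N_i}.
\]
Combining with the hypothesis $R(\mathscr{A},j,T)\le \ol{C}\eps\delta T/4$ yields
\[
\E{\sum_{i\notin S} N_i}\;\le\;\frac{\ol{C}\eps\delta T/4}{1/2+\eps}\;\le\;\frac{\ol{C}\eps\delta T}{2}\qquad\text{and}\qquad\E{\sum_{i\in S\setminus\{j\}} N_i}\;\le\;\frac{\ol{C}\delta T}{4},
\]
and the first of these is exactly the second bullet of the lemma.

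For the first bullet, the independence of $\tau$ and the identity $\sum_i N_i = T$ give
\[
\Pr{\widehat J = j} \;=\; \frac{\E{N_j}}{T} \;=\; 1 - \frac{\E{\sum_{i\ne j} N_i}}{T} \;\ge\; 1 - \frac{\ol{C}\delta}{4} - \frac{\ol{C}\eps\delta}{2} \;\ge\; 1-\delta,
\]
where the final inequality uses $\ol{C}\le 1$ (by definition) and $\eps\le \tfrac12$ (needed for the Bernoulli parameter $\tfrac12-\eps$ to be valid).

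The only subtle point I foresee is in the first step, where one must carefully distinguish the expected adversarial regret (with $\min_a\sum_t\ell_t(a)$ inside the expectation over losses) from the pseudo-regret (with $\min_a\E{\sum_t\ell_t(a)}$). Since $\E{\min_a(\cdot)}\le\min_a\E{\cdot}$, the adversarial regret upper bounds the pseudo-regret, so the hypothesis only becomes stronger when converted to a pseudo-regret bound and the gap decomposition proceeds verbatim. Everything else is elementary bookkeeping---the random-round output rule is precisely the device that converts the expected pull-count bound into a success probability without invoking any concentration inequality.
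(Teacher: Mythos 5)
Your proposal is correct and follows essentially the same route as the paper: simulate $\mathscr{A}$ for the full $T$ rounds and convert the regret hypothesis into bounds on expected pull counts via the suboptimality gaps ($\eps$ inside $S$, at least $\tfrac12$ outside). The only difference is cosmetic --- the paper's $\mathscr{\widehat A}$ outputs the plurality arm and applies a Markov-type argument to the event of at least $T/2$ suboptimal pulls, whereas you output the arm pulled at a uniformly random round so that $\Pr{\widehat J=j}=\E{N_j}/T$ directly; both give the stated guarantees (which, as in the paper, hold for the pull count in expectation).
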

\begin{proof}
	The algorithm $\mathscr{\widehat A}$ simply simulates $\mathscr{A}$ for $T$ rounds and outputs the mostly-pulled arm, breaking ties arbitrarily. We verify the two properties of $\mathscr{\widehat A}$ respectively.
	\begin{itemize}
		\item If the mostly-pulled arm is not $j$, then suboptimal arms must be pulled at least $\frac{T}{2}$ times. These pulls contribute at least $\frac{\eps \delta T}{2}>\frac{\ol{C}\eps\delta T}{4}$ expected regret.
		\item If arms in $V\setminus S$ are pulled more than  $\frac{\ol{C}\eps\delta T}{2}$ times in total, then these pulls already contribute more than $\frac{\ol{C}\eps\delta T}{4}$ regret.
	\end{itemize}
\end{proof}

Using this reduction, we can prove \Cref{thm:meta-lb} via lower bounds for \BAI.

\begin{proof}[Proof of \Cref{thm:meta-lb}]
	Assume both $|S|$ and $T$ are sufficiently large.
	
	We first establish the $\Omega\tp{\tp{\frac{|S|}{k}}^{\frac{1}{3}}\cdot T^{\frac{2}{3}}}$ lower bound. Choose $\eps=\tp{\frac{\abs{S}}{kT}}^{\frac{1}{3}}$ and $\delta=0.001$. Suppose there exists an algorithm $\mathscr{A}$  such that $R(\mathscr{A},j,T)\le \frac{\ol{C}}{4000}\tp{\frac{|S|}{k}}^{\frac{1}{3}}\cdot T^{\frac{2}{3}}$ for every $j\in [|S|]$. Then by \Cref{lem:regret2BAI}, we can find an algorithm $\mathscr{\widehat A}$ that can correctly identify the best arm with probability at least $0.999$, and observe arms in $S$ for at most $\frac{\ol{C}}{2000}|S|^{\frac{1}{3}}k^{\frac{2}{3}}\cdot T^{\frac{2}{3}}$ times (Since each pull of $V\setminus S$ observes at most $k$ arms in $S$).  This contradicts \Cref{lem:BAI-hard-1}.
	
	Then we establish the $\Omega\tp{\tp{\log |S|}^{\frac{1}{3}}\cdot T^{\frac{2}{3}}}$ lower bound, which needs more effort. Choose $\eps=\tp{\frac{\log |S|}{T}}^{\frac{1}{3}}$ and $\delta=0.001$. Similar to above, suppose there exists an algorithm $\mathscr{A}$  such that $R(\mathscr{A},j,T)\le \frac{\ol{C}}{4000}\tp{\log|S|}^{\frac{1}{3}}\cdot T^{\frac{2}{3}}$ for every $j\in [|S|]$. Then by \Cref{lem:regret2BAI}, we can find an algorithm $\mathscr{\widehat A}$ that can correctly identify the best arm with probability at least $0.999$, and pull arms in $V\setminus S$ for at most $\frac{\ol{C}}{2000}\tp{\log|S|}^{\frac{1}{3}}\cdot T^{\frac{2}{3}}$ times in total. 
	
		Note that each pull of an arm $v\in V\setminus S$ in $\mathscr{A}$ observes arms in a subset $S'\subseteq S$. The key observation is that we can assume without the loss of generality that $S'=S$, since this assumption only increases the power of the algorithm.  This assumption can make our algorithm non-adaptive.
		
		We rigorously prove this using a coupling argument. Assume we design an algorithm $\mathscr{\widehat B}$ in which each pull of an arm in $V\setminus S$ can observe all arms in $S$. We show that $\mathscr{\widehat B}$ can perfectly simulate $\mathscr{\widehat A}$, and therefore $\mathscr{\widehat B}$ is more powerful. So if we have a lower bound for $\mathscr{\widehat B}$, it is automatically a lower bound for $\mathscr{\widehat A}$. Note that the behavior of an algorithm for \BAI in each round is determined by the information collected and coins tossed so far. If we apply both $\mathscr{\widehat A}$ and $\mathscr{\widehat B}$ to some $I^{(j)}$ and assume (1) both algorithms use the same source of randomness; and (2) all the loss vectors $\ell_t^{(j)}$ use the same source of randomness, then the information collected by $\mathscr{\widehat A}$ at any time is a subset of $\mathscr{\widehat B}$. Therefore we can use $\mathscr{\widehat B}$ to simulate $\mathscr{\widehat A}$ and outputs the same as $\mathscr{\widehat A}$. 
	
		Based on this observation, we use the non-adaptive nature of $\mathscr{\widehat B}$ and apply \Cref{lem:BAI-hard-2} to get an immediate contradiction.
\end{proof}

\section{Time-varying Graphs}\label{sec:generalization}
 In this section, we will consider time-varying graphs. Instead of a fixed graph $G=(V,E)$ for all $T$ rounds, for each round $t\in[T]$, we have a graph $G_t=(V,E_t)$. We slightly abuse notations here and let $G=(G_t)_{t\in[T]}$ and $R(G,T)$ denote the min-max regret for time-varying graphs $G$. 
 \begin{corollary}\label{cor:tv}
 	The min-max regret for sequential time-varying graphs $G=(G_t)_{t\in[T]}$ satisfies 
 	\[
 	R(G,T)=O\tp{\tp{\overline{\delta^*}(G)\log n}^{\frac{1}{3}}T^{\frac{2}{3}}}\,,
 	\]
 	where $\overline{\delta^*}\triangleq \frac{\sum_{t=1}^T\delta_t^*(G_t)}{T}$ is the average fractional weak domination number for graphs $G$ in $T$ rounds.
 \end{corollary}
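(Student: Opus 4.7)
The plan is to adapt the proof of \Cref{thm:regret} to the time-varying setting with minimal changes. First I would modify \Cref{alg:osmde} so that at the start of round $t$ the algorithm solves the linear program $\mathscr{P}$ on $G_t$, obtaining an optimal $\set{x_{t,i}^*}_{i\in V}$, and uses the normalized vector $\*u_t(i)=x_{t,i}^*/\delta_t^*(G_t)$ as the per-round exploration distribution. Everything else, including the potential $\Psi$, the estimator $\hat\ell_t$, the exploration rate $\gamma$, the step size $\eta$, and the mirror descent update, remains exactly as in the fixed-graph algorithm.

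The derivations in \Cref{lem:gamma} and \Cref{lem:osmde} never use the specific graph structure beyond the fact that $\*u_t\in\Delta_n$ and that $\hat\ell_t$ is an unbiased estimator of $\ell_t$ whose variance depends only on the reciprocals of observation probabilities at round $t$. So both lemmas transfer verbatim to the time-varying case after the substitutions $\*u\mapsto\*u_t$ and $N_{\mathrm{in}}(\cdot)\mapsto N_{\mathrm{in},t}(\cdot)$. The only graph-dependent step in the proof of \Cref{thm:regret} is the bound on $\sum_i X_t(i)/\sum_{j\in N_{\mathrm{in},t}(i)}\tilde X_t(j)$. Repeating the computations~\eqref{eqn:V-U} and~\eqref{eqn:U} round by round, the contribution of vertices carrying a self-loop in $G_t$ is at most $2n$ as before, while the contribution of vertices in $U_t\defeq\set{i\in V\cmid i\notin N_{\mathrm{in},t}(i)}$ is bounded by $\delta_t^*(G_t)/\gamma$, since $\*u_t$ is optimal for the LP on $G_t$. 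Summing over $t\in[T]$ and invoking the definition $\sum_{t=1}^T\delta_t^*(G_t)=\overline{\delta^*}(G)\cdot T$ turns \eqref{eqn:regret} into
\[
R(G,T)\;\le\;\frac{\log n}{\eta}+\eta n T+\frac{\eta\,\overline{\delta^*}(G)\,T}{2\gamma}+\gamma T,
\]
which is exactly the fixed-graph bound with $\delta^*(G)$ replaced by $\overline{\delta^*}(G)$. The parameter choice $\gamma=\tp{\overline{\delta^*}(G)\log n/T}^{1/3}$ and $\eta=\gamma^2/\overline{\delta^*}(G)$ therefore yields the advertised $O\tp{\tp{\overline{\delta^*}(G)\log n}^{1/3}T^{2/3}}$ regret, provided $T$ is large enough that $\gamma\le 1/2$ (which is the analogue of the condition $T\ge n^3\log n/(\delta^*)^2$ from \Cref{thm:regret}).

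The only conceptual obstacle, and the step I would flag as requiring a modeling choice, is that tuning $\gamma$ and $\eta$ in advance requires knowledge of $\overline{\delta^*}(G)$. In the \emph{informed} time-varying model, where the graph sequence (or at least the values $\set{\delta_t^*(G_t)}_{t\in[T]}$) is revealed before the game starts, this is immediate. In a fully online model one would invoke a standard doubling trick, maintaining a running estimate of $\tfrac{1}{t}\sum_{s\le t}\delta_s^*(G_s)$ and restarting with updated parameters whenever the estimate doubles; since the bound is concave in $\overline{\delta^*}$, the restart losses telescope geometrically and contribute only a constant multiplicative factor, so the asymptotic regret is unaffected.
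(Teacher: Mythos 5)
Your proposal is correct and follows essentially the same route as the paper: re-solve the linear program on $G_t$ each round to obtain a per-round exploration distribution $\*u_t$, observe that only the $(\heartsuit)$ step of the variance bound is graph-dependent, and sum the per-round contributions $\delta^*_t(G_t)/\gamma$ to replace $\delta^*(G)T$ with $\overline{\delta^*}(G)\cdot T$ in~\eqref{eqn:regret}. Your additional remark about needing $\overline{\delta^*}(G)$ in advance to tune $\gamma$ and $\eta$ (and the doubling-trick fix in the uninformed model) is a reasonable point that the paper's proof leaves implicit, but it does not change the argument.
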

\begin{proof}
	We can slightly modify Algorithm~\ref{alg:osmde}: In each round $t$, we use $G_t$ as an input instance to calculate $\hat\ell_t$ and $\*u_t$.
Then the left proof is totally similar to the proof of \Cref{thm:regret} except for replacing \Cref{eqn:regret} with 	$Regret\leq \frac{\log n}{\eta}+\eta n T+\frac{\eta \sum_{t=1}^T\delta^{*}(G_t)}{2 \gamma}+\gamma T=\frac{\log n}{\eta}+\eta n T+\frac{\eta \bar{\delta^*} T}{2 \gamma}+\gamma T.$
\end{proof}

A similar case is the probabilistic graph model. A probabilistic graph can be denoted as a triple $\+G=(V,E,P)$ where $P:E\to (0,1]$ assigns a \emph{triggering probability} for each edge in $E$. In each round $t$,  a realization of $\+G$ is a graph $G_t=(V,E_t)$ where  $E_t=\{e\in E:O_{te}=1\}$ and here $O_{te}$ is an independent Bernoulli random variable with mean $P(e)$.  By abuse of notations, we denote by $G=(G_t)_{t\in[T]}$ a sequence of independent realizations of $\+G$. Define $R(\+G,T)=\inf_{\mathscr{A}}\sup_{\ell^*}\E{\sum_{t=1}^T \ell_t(A_t)-\ell_t(a^*)}$ as the min-max regret for the probabilistic graph $\+G$ and here the randomness comes from the algorithm and the sequential graphs $G$.
\begin{corollary}
	The min-max regret for the probabilistic graph $\+G$ satisfies
	 \[
	R(\+G,T)=O\tp{\tp{\E{\delta^*(G_1)}\log n}^{\frac{1}{3}}T^{\frac{2}{3}}}\,,
	\]
\end{corollary}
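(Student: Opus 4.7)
The plan is to adapt the proof of \Cref{cor:tv} to the probabilistic setting. I would use a modified version of Algorithm~\ref{alg:osmde} in which, for each round $t$, the realized feedback graph $G_t$ is used to define the exploration distribution $\*u_t$ (from an LP optimum of $\mathscr{P}$ on $G_t$) and the loss estimator $\hat\ell_t$. Since the empirical average $\overline{\delta^*}$ is not known in advance, I would fix the global parameters of the algorithm using the known expectation: $\gamma = \tp{\E{\delta^*(G_1)}\log n/T}^{1/3}$ and $\eta = \gamma^2/\E{\delta^*(G_1)}$.

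The regret analysis mirrors the proof of \Cref{thm:regret}, but the expectation is taken jointly over the random sequence $G=(G_t)_{t\in[T]}$ and the algorithm's internal randomness. Since each $G_t$ is drawn independently of the history (through the independent Bernoulli triggering variables $O_{te}$), the per-round LP-based bounds \eqref{eqn:V-U} and \eqref{eqn:U} can be applied on each realization, with the static set $U$ replaced by the round-dependent set $U_t$ of vertices without a self-loop in $G_t$; on each realization the two contributions are $\delta^*(G_t)/\gamma$ and $O(n)$, respectively. Summing over $t\in[T]$, taking expectation over $G$, and using $\E{\delta^*(G_t)}=\E{\delta^*(G_1)}$ (by the iid assumption on realizations) yields
\[
R(\+G, T)\le \frac{\log n}{\eta}+\eta nT+\frac{\eta T\,\E{\delta^*(G_1)}}{2\gamma}+\gamma T,
\]
and plugging in the chosen $\gamma$ and $\eta$ recovers the claimed bound.

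The main subtlety is to ensure that the OSMD regret decomposition (\Cref{lem:osmde}) remains valid when the feedback structure itself is random. Specifically, the conditional expectation $\E[A_t\sim\tilde X_t]{\cdot}$ used to control the second-order term has to be enlarged into a joint expectation over $G_t$ and $A_t$, and one must verify that $\hat\ell_t$ remains an unbiased estimator of $\ell_t$ despite its dependence on the random $G_t$. Because $G_t$ is independent of the history $\+F_{t-1}$ (and, in particular, of $\tilde X_t$), the tower property reduces each of these computations to the per-realization deterministic analysis, so no additional error terms appear and the proof goes through.
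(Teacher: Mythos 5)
Your proof is correct, but it takes a different route from the paper's. The paper conditions on the realized sequence $G=(G_t)_{t\in[T]}$, invokes \Cref{cor:tv} to bound the conditional regret by $O\tp{\tp{\overline{\delta^*}(G)\log n}^{1/3}T^{2/3}}$, and then removes the conditioning with Jensen's inequality applied to the concave map $x\mapsto x^{1/3}$, using $\E{\overline{\delta^*}(G)}=\E{\delta^*(G_1)}$. You instead fix the algorithm's parameters $\gamma,\eta$ in terms of the distributional quantity $\E{\delta^*(G_1)}$ and push the expectation over $G_t$ through the per-round second-order bound, summing to get $\frac{\log n}{\eta}+\eta nT+\frac{\eta T\,\E{\delta^*(G_1)}}{2\gamma}+\gamma T$ directly. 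The paper's argument buys brevity and reuse of the time-varying corollary, but it quietly presupposes that the tuning in \Cref{cor:tv} (which depends on $\overline{\delta^*}(G)$ of the realized sequence) is available, which is not the case online in the probabilistic model; your version sidesteps this by tuning with $\E{\delta^*(G_1)}$, a quantity determined by $\+G$ alone, at the cost of re-verifying unbiasedness of $\hat\ell_t$ and the validity of \Cref{lem:osmde} under a random feedback graph --- which you correctly reduce to the deterministic case via independence of $G_t$ from $\+F_{t-1}$ and the tower property. One caveat shared by both arguments (and left implicit in the paper): the per-realization bound requires each realized $G_t$ to be observable, i.e.\ the LP $\mathscr{P}$ on $G_t$ to be feasible, so that $\delta^*(G_t)$ and the estimator's denominator are well defined.
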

\begin{proof}
	\begin{align*}
	R(\+G,T)&=\E{\sum_{t=1}^T \ell_t(A_t)-\ell_t(a^*)}=\E{\E{\sum_{t=1}^T \ell_t(A_t)-\ell_t(a^*)\,\vert\, G}} \\
	&\stackrel{(\diamondsuit)}{=} O\tp{\E{\tp{\frac{\sum_{t=1}^{T}\delta^*(G_t)}{T}}^\frac{1}{3}}(\log n)^{\frac{1}{3}}T^{\frac{2}{3}}}
	\stackrel{(\clubsuit)}{=}O\tp{\tp{\E{\delta^*(G_1)}\log n}^{\frac{1}{3}}T^{\frac{2}{3}}},
	\end{align*}
	where $(\diamondsuit)$ follows from \Cref{cor:tv} and $(\clubsuit)$ comes from the Jensen inequality and the independence of $(G_t)_{t\in[T]}$.
\end{proof}

\section{Conclusion and Future Work}\label{sec:con}

In this article, we introduced the notions of fractional weak domination number and $k$-packing independence number respectively to prove new upper bounds and lower bounds for the regret of bandits with graph feedback. Our results implied optimal regret on several families of graphs. Although there are still some gaps in general, we believe that these two notions are the correct graph parameters to characterize the complexity of the problem.  We now list a few interesting problems worth future investigation.

\begin{itemize}
	\item Let $G$ be an $n$-vertex undirected cycle. What is the regret on this instance? \Cref{thm:regret} implies an upper bound $O\tp{(n\log n)^{\frac{1}{3}}T^{\frac{2}{3}}}$ and \Cref{thm:meta-lb} implies a lower bound $\Omega\tp{n^{\frac{1}{3}}T^{\frac{2}{3}}}$. 
	\item The lower bound $\Omega\tp{\tp{\frac{\delta^*}{\alpha}}^{\frac{1}{3}}\cdot T^{\frac{2}{3}}}$ in \Cref{thm:dual-lb} for general graphs is not satisfactory. The lower bound is proved via the construction of a $1$-packing independent set. This construction did not release the full power of \Cref{thm:meta-lb} as the lower bound in the theorem applies for any $k$-packing independent set. It is still possible to construct larger $k$-packing independent sets via rounding the linear program $\mathscr{D}$ to some ``less integral'' solution.
	\item Is \Cref{thm:meta-lb} tight? In fact, the bound for \BAI is tight since matching upper bound exists. Therefore, one needs new constructions of hard instances to improve \Cref{thm:meta-lb}, if possible.
\end{itemize}

\bibliographystyle{alpha}
\bibliography{bandits}

\begin{appendix}
	\section{Two Definitions of Weak Dominating Set}\label{apx:dc}
	Here we give an example to explain the difference between the definition of the weak domination number $\delta$ in~\cite{ACDK15} and in this article. To ease the presentation, we call them $\delta$ and $\delta'$ respectively. We show that $\delta=\delta'$ if $\delta\ge 2$, but it is possible that $\delta=1$ and $\delta'=2$.
		\begin{center}
		\begin{figure}[h]
			
			\begin{minipage}{0.5\textwidth}
				\begin{tikzpicture}
				\def\r{3}
				\def\c{4}
				\begin{scope}[every node/.style={circle,thick,draw, minimum size=0.5cm}]
				\node (A) at (\c-\r/2,\c+\r/2) {A};
				\node (B) at (\c-\r/2,\c-\r/2) {B};
				\node (C) at (\c,\c) {C};
				\node (D) at (\c+\r,\c) {D};
				\end{scope}
				\begin{scope}[>={Stealth[black]},
				every node/.style={fill=white,circle},
				every edge/.style={draw=red,very thick}]
				\path [->] (A) edge  (C);
				\path [->] (B) edge  (C);
				\path [->,draw=red,very thick] (C) to[bend right]  (D);
				\path [->,draw=red,very thick] (D) to[bend right]  (C);
				\path [->,draw=red,very thick] (A) to[loop below] (A);
				\path [->,draw=red,very thick] (B) to[loop above] (B);
				\end{scope}
				\end{tikzpicture}
			\end{minipage}
			\caption{An example of $\delta=1$, $\delta'=2$}
			\label{fig:graph}
		\end{figure}
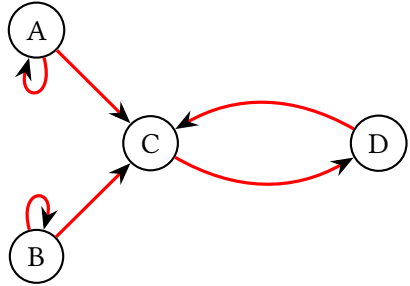
	\end{center}
	
It is clear that $\delta=1$ and $\delta'=2$ in \Cref{fig:graph} since the vertex $C$ is strongly observable and in~\cite{ACDK15}, the vertex $C$ does not need to be dominated by a weak dominating set. Therefore, the minimum weak dominating set is $\set{C}$. However, in the proof of \cite[Theorem 2]{ACDK15} for the weakly observable graphs, they assumed that every vertex without a self-loop is dominated by the weakly dominating set. This is not true following their definition since the vertex $C$, although strongly observable, is not dominated by itself and thus the lower bound on the probability that $C$ is observed fails. 

Hence we ask for the set to dominate the set of vertices without self-loops, namely $U=\set{C,D}$. The proof can then go through, and the only difference is that $\delta'$ becomes to two. It is also clear when $\delta\ge 2$, this situation will not occur as every strongly observable vertex without a self-loop can be dominated by any vertex other than itself.

%
\end{appendix}

\end{document}